\documentclass[twoside]{article}

\usepackage[preprint]{aistats2026}

\usepackage{hyperref}
\usepackage{url}

\usepackage[ruled,vlined]{algorithm2e} % Use algorithm2e package
\usepackage{comment}
\usepackage{amsmath}
\usepackage{xcolor}
\usepackage{graphicx}
\usepackage{amsfonts}
\usepackage{amsthm}
\usepackage{braket}
\usepackage{bm}
\usepackage{booktabs}
\newtheorem{theorem}{Theorem}
\newtheorem{lemma}{Lemma}
\newtheorem{corollary}{Corollary}

\newtheorem{definition}{Definition}
\newcommand{\NAMEA}{Q-ShiftDP\xspace}
\newcommand{\NAMEB}{Adaptive Q-ShiftDP\xspace}
\newtheorem{sublemma}{Lemma}[lemma]
\usepackage{subcaption}

\usepackage[round]{natbib}

\begin{document}
%\input{AC_responses}
% If your paper is accepted and the title of your paper is very long,
% the style will print as headings an error message. Use the following
% command to supply a shorter title of your paper so that it can be
% used as headings.
%
%\runningtitle{I use this title instead because the last one was very long}

% If your paper is accepted and the number of authors is large, the
% style will print as headings an error message. Use the following
% command to supply a shorter version of the author names so that
% they can be used as headings (for example, use only the surnames)
%
%\runningauthor{Surname 1, Surname 2, Surname 3, ...., Surname n}

\twocolumn[

\aistatstitle{Q-ShiftDP: A Differentially Private Parameter-Shift Rule for Quantum Machine Learning}

\aistatsauthor{
Hoang M. Ngo \And
Nhat Hoang-Xuan \And
Quan Nguyen 
}

\aistatsaddress{
University of Florida\\hoang.ngo@ufl.edu \And
University of Florida\\hoangx@ufl.edu \And
University of Florida\\quan.nguyen1@ufl.edu
}

\aistatsauthor{
Nguyen Do \And
Incheol Shin \And
My T. Thai
}

\aistatsaddress{
University of Florida\\nguyen.do@ufl.edu \And
Pukyong National University\\icshin@pknu.ac.kr \And
University of Florida\\mythai@cise.ufl.edu
}

]

\runningauthor{Hoang M. Ngo, Nhat Hoang-Xuan, Quan Nguyen, Nguyen Do, Incheol Shin, My T. Thai}

\begin{abstract}

Quantum Machine Learning (QML) promises significant computational advantages, but preserving training data privacy remains challenging. Classical approaches like differentially private stochastic gradient descent (DP-SGD) add noise to gradients but fail to exploit the unique properties of quantum gradient estimation. %In classical machine learning, differential privacy stochastic gradient descent (DP-SGD) is the standard approach that adds noise to model gradients to preserve privacy of training data. However, directly applying DP-SGD to QML is inefficient as it does not account for the distinctive properties of quantum gradient estimation in QML. 
In this work, we introduce the Differentially Private Parameter-Shift Rule (Q-ShiftDP), the first privacy mechanism tailored to QML. By leveraging the inherent boundedness and stochasticity of quantum gradients computed via the parameter-shift rule, Q-ShiftDP enables tighter sensitivity analysis and reduces noise requirements. We combine carefully calibrated Gaussian noise with intrinsic quantum noise to provide formal privacy and utility guarantees, and show that harnessing quantum noise further improves the privacy–utility trade-off. Experiments on benchmark datasets demonstrate that Q-ShiftDP consistently outperforms classical DP methods in QML.

  %Quantum Machine Learning (QML) holds the potential for significant computational advantages, but ensuring the privacy of training data in QML remains a critical challenge. In classical machine learning, differential privacy stochastic gradient descent (DP-SGD) is the standard approach which adds noise to model gradients to preserve privacy of training data. However, directly applying DP-SGD to QML is inefficient as it does not account for the distinctive properties of quantum gradient estimation in QML. In this work, we introduce the Differentially Private Parameter-Shift Rule (\NAMEA) which is the first privacy mechanism specifically designed to leverage the characteristics of quantum gradients computed via the parameter-shift rule (PSR) which is a widely used gradient estimation technique in QML. Unlike classical gradients, which are computed exactly via the chain rule, quantum gradients obtained through PSR are inherently bounded and stochastic. These properties enable for more precise sensitivity analysis and reduce the amount of noise required for privacy protection. Building on these advantages, we provide formal privacy and utility guarantees by combining carefully calibrated Gaussian noise with the intrinsic noise of quantum gradient estimation. Additionally, we demonstrate how quantum noise can be harnessed to enhance the privacy–utility trade-off. Experiments on various benchmark datasets show that \NAMEA consistently outperforms classical DP methods in QML, demonstrating the effectiveness of our approach.

\end{abstract}

\section{Introduction}

Quantum Machine Learning (QML) has emerged as a promising avenue for advancing machine learning by harnessing unique quantum phenomena such as superposition and entanglement. These quantum properties enable QML to learn high-dimensional and complex patterns that are intractable for classical machine learning (ML) algorithms~\cite{lloyd2013quantumalgorithmssupervisedunsupervised, Cai2015, Schuld2019}. As a result, QML potentially achieve advantages in learning tasks such as regression, classification and reinforcement learning.

Privacy remains a fundamental concern for all learning models, both classical and quantum. In classical ML, Differential Privacy (DP) has been established as the standard framework for providing rigorous privacy guarantees ~\cite{Dwork2006DifferentialP}. %By design, DP ensures that the output of a learning algorithm is minimally affected by the inclusion or removal of any single individual's data, thereby protecting the privacy of each participant~\cite{Dwork2006DifferentialP}.
Incorporating differential privacy (DP) into machine learning models typically involves injecting carefully calibrated noise into the learning process. %Noise can be added during the forward pass~\cite{Lcuyer2018CertifiedRT, FW_Phan2019ScalableDP, FW_Du_2023} (i.e., by perturbing the inputs or intermediate layer activations), or during the backward pass, where noise is added to the gradients~\cite{DPSGD,BW_Ghazi2025,BW_Zhang2025}. 
Among %these approaches
recent related work, gradient perturbation has been shown to provide the best trade-off between privacy and utility in deep learning models~\cite{Jarin2022_InOutGradientPerturb}. Notable ones are DP-SGD~\cite{DPSGD}, adaptive clipping~\cite{DPSGD_Thakkar2019DifferentiallyPL}, and personalized privacy budgets~\cite{DPSGD_Boenish2023}. %The most widely used method, DP-SGD~\cite{DPSGD}, first clips per-example gradients to bound their sensitivity and then adds Gaussian noise to the averaged gradient before updating the model. Several variants, such as adaptive clipping~\cite{DPSGD_Thakkar2019DifferentiallyPL}, importance sampling~\cite{DPSGD_Wei2022DPISAE}, and personalized privacy budgets~\cite{DPSGD_Boenish2023}, further enhance model performance while maintaining rigorous DP guarantees.

Recent studies have extended the concept of DP to the quantum domain, giving rise to Quantum Differential Privacy (QDP)~\cite{ZhouQDPfirst,Hirche2023InfoTheoryQDP}. This line of work is particularly important for QML, as QML models are designed to capture highly expressive, high-dimensional representations that may inadvertently encode sensitive information about individual training samples~\cite{Rajawat2023_app,Thakkar2024_app,Marengo2025_app, Bukkarayasamudram2025_app}. Thus, rigorously enforcing DP in QML is a practical prerequisite for deploying quantum learning algorithms in real-world, data-sensitive applications.
%Most existing works in QDP focus on injecting quantum noise during the forward pass~\cite{Du2021Qnoise,watkins2023quantum,song2025towards}, analogous to input or activation perturbation in classical machine learning. 
However, as in classical settings, this approach can suffer from significant utility degradation. 
%To improve the privacy-utility trade-off, some works~\cite{YenChi2021,YenChi2025} explore applying DP-SGD heuristically to perturb the gradients of QML models. Nonetheless, 
There are two key differences in quantum gradients that make direct application of DP-SGD inefficient in QML. First, quantum gradients are estimated statistically via the parameter-shift rule~\cite{Wierichs2022generalparameter}, rather than being exactly computed. This inherent stochasticity can itself serve as a source of privacy, thereby reducing the need for additional Gaussian noise. Second, unlike classical gradients which are unbounded, quantum gradients are naturally bounded due to the properties of quantum operators and observables. Studying this bound allows for precise sensitivity analysis and avoids unnecessary gradient clipping, which can reduce model performance. 

In this work, we propose \NAMEA, a Differentially Private Parameter-Shift Rule (PSR) mechanism designed to protect the privacy of QML models. To our knowledge, this is the first approach that explicitly exploits the unique characteristics of quantum gradients computed via PSR to efficiently protect the training process in QML. Specifically, we provide a rigorous theoretical analysis of the $\ell_2$-sensitivity of quantum gradients. Based on this, we establish both privacy and utility guarantees for \NAMEA by combining the intrinsic variance of estimated quantum gradients with carefully calibrated Gaussian noise. Furthermore, %we investigate the variance of quantum gradient estimators under both noisy and ideal quantum environments to gain insights into how the amount of added Gaussian noise can be reduced without compromising privacy.
we demonstrate how the privacy-utility trade-off of \NAMEA can be further improved in two key ways: first, by treating physical depolarizing noise as a privacy-enhancing resource, and second, by introducing an adaptive technique to tailor the amount of artificial noise based on empirically estimated per-sample variance. To assess the effectiveness of \NAMEA, we conduct experiments on three benchmark datasets:  Bars \& Stripes, Binary Blobs and Downscaled MNIST \cite{bowles2024better,recio2025train}. The empirical results demonstrate that \NAMEA consistently achieves higher utility than input-perturbation based DP mechanisms in QML.

%\textbf{Contributions.} Our main contributions are summarized as follows:

%\begin{itemize}
%    \item We propose \NAMEA, the first differentially private mechanism tailored for quantum machine learning that leverages the unique statistical and bounded properties of quantum gradients computed via the parameter-shift rule.
%\item We provide a rigorous theoretical analysis of the $\ell_2$-sensitivity of quantum gradients, and then establish formal privacy and utility guarantees for \NAMEA.

%\item We analyze the variance of quantum gradient estimators under both noisy and ideal quantum environments to provide insights into efficient noise calibration for privacy.
%\item We conduct extensive experiments on three benchmark datasets: [dataset names], demonstrating that \NAMEA consistently achieves higher utility compared to forward-pass DP mechanisms.
%\end{itemize}

\section{Backgrounds}
%In this section, we review the essential background on quantum information and variational quantum machine learning (VQML). We introduce quantum states, operations, and measurements with an emphasis on expectation values and shot noise. We then describe the framework of VQML with a focus on parameterized quantum circuits (PQCs) and the parameter-shift rule for gradient evaluation. Additional backgrounds are included in Appendix~\ref{sec:add_background}.
In this section, we first review quantum states, operations, and measurements with an emphasis on expectation values and shot noise. We then describe variational quantum machine learning (VQML) and the parameter-shift rule (PSR) for gradient estimation. Finally, we summarize differential privacy and classical DP-SGD, and illustrate why a direct transfer to QML is suboptimal. Additional material and related works appear in Appendices~\ref{sec:add_background} and~\ref{sec:related_works}.

\subsection{Fundamentals of Quantum Information}
%\mt{Do we need QM and Observation for readers to understand the main paper? Can we move to Appendix?}
\noindent \textbf{Quantum States and Operations.} Quantum computing systems are built upon qubits which are the fundamental units of quantum information. The state of a qubit can exist in the superposition of $\ket{0}$ and $\ket{1}$. For example, a single-qubit pure state can be written as $\ket{\psi} = a\ket{0}+b\ket{1}$ where $a,b \in \mathbb{C}$ and the normalization condition $|a|^2+|b|^2 = 1$ holds. Multi-qubit systems of $n$ qubits occupy a $2^n$-dimensional Hilbert space formed by the tensor product of individual states.

\noindent \textbf{Quantum Measurement and Observables.} A measurement in a quantum system extracts classical information from a quantum state $\ket{\psi}$ by probing a physical property, such as energy, position, or spin. Each such property, known as an \emph{observable}, is represented mathematically by a Hermitian operator $O$.

The possible outcomes of measuring this observable are its eigenvalues $\{\lambda_i\}$. Based on spectral theorem~\cite{moretti2020spectral}, an observable $O$ can be decomposed in terms of its projectors as $O = \sum_i \lambda_i P_i$
%\nguyendo{should be inline equation}
where $P_i$ is the projection operator onto the eigenspace corresponding to $\lambda_i$. These projection operators satisfy $\sum_i P_i = I$.
The expectation value of the observable $O$, denoted as $\langle O \rangle$ represents the statistical average outcome of many identical measurements performed on identically prepared systems. This average is calculated by summing each possible outcome weighted by its probability:
\begin{equation}
    \begin{aligned}
        \langle O \rangle &= \sum_i \lambda_i \Pr(\lambda_i)= \sum_i \lambda_i \bra{\psi} P_i \ket{\psi}\\  
        &= \bra{\psi} \left( \sum_i \lambda_i P_i \right) \ket{\psi} = \bra{\psi} O \ket{\psi}.
    \end{aligned}
\end{equation}

Importantly, $\langle O \rangle$ is always bounded by the minimum and maximum eigenvalues of $O$, a property that later enables tight sensitivity analysis.

\noindent\textbf{Finite-Shot Estimation and Shot Noise.}

In practice, the expectation value $\langle O \rangle$ cannot be accessed exactly. Instead, it is estimated using $N_s$ independent measurement \emph{shots}. A single shot yields one of its eigenvalues $\lambda_i$. To obtain statistical information, we perform $N_s$ independent shots on identically prepared copies of the state $\ket{\psi}$. Each shot produces an outcome $\hat{a} \in \{\lambda_i\}$ with probability $  \Pr(\hat{a} = \lambda_i) = \bra{\psi} P_i \ket{\psi}$.
%\nguyendo{should be inline equation}
After $N_s$ shots, we collect $N_s$ independent outcomes $\{\hat{a}_1, \dots, \hat{a}_{N_s}\}$. The empirical estimator of the expectation value is then
$
    \bar{A}_{N_s} = \frac{1}{N_s}\sum_{i=1}^{N_s} \hat{a}_i
$.
The variance of this estimator decreases with the number of shots and is given by
\begin{equation}
    \operatorname{Var}[\bar{A}_{N_s}] = \frac{\sigma_{\text{shot}}^2}{N_s}
\end{equation}

%\nguyendo{should be inline equation}

where $\sigma_{\text{shot}}^2$ is the variance of a single-shot outcome distribution. This intrinsic statistical noise, known as \emph{shot noise}, plays a central role in both optimization dynamics and privacy guarantees in QML.

\subsection{Variational Quantum Machine Learning}
Variational Quantum Machine Learning (VQML) is a quantum-classical algorithm in which a quantum circuit with tunable parameters is optimized using classical techniques. This approach is well-suited for noisy intermediate-scale quantum (NISQ) devices, as it harnesses the expressive power of quantum computing to learn complex patterns while relying on efficient classical optimizers to overcome the limitations of current quantum hardware. In this section, we first introduce Parameterized Quantum Circuits (PQCs), which serve as the cornerstone of VQML. We then present a technique for computing gradients of PQCs, called Parameter-Shift Rule.

\noindent \textbf{Parameterized Quantum Circuit (PQC) Models.}
A VQML model is trained by defining a cost function $C(\boldsymbol{\theta})$ that depends on the parameters $\boldsymbol{\theta}$ of a PQC, and then minimizing it through iterative updates. A PQC typically consists of three components: (i) an encoding that embeds classical data $x$ into a quantum state, (ii) a variational ansatz $U(\boldsymbol{\theta})$ with tunable parameters $\boldsymbol{\theta}$, and (iii) measurement operators $\{\hat{O}_i\}$ that extract classical information from the quantum state. The variational ansatz can be expressed as a unitary transformation
 %The full PQC transformation can be written as follow:
%$$
%\ket{\psi(x, \boldsymbol{\theta})} = U(\boldsymbol{\theta}) U_{\text{enc}}(x) \ket{0}^{\otimes n}.
%$$
%The variational ansatz can be represented as a unitary model as:
$
    U(\boldsymbol{\theta}) = \prod_{k=1}^K e^{i \theta_k G_k}
$
where $\boldsymbol{\theta} = (\theta_1, \ldots, \theta_K)$ are trainable parameters and $\{G_k\}$ are Hermitian generators. Given an input $x$ encoded into a quantum state $\ket{\psi_0(x)}$, the PQC produces:
\begin{equation}
    \ket{\psi(\boldsymbol{\theta}, x)} = \hat{U}(\boldsymbol{\theta}) \ket{\psi_0(x)}
\end{equation}

Finally, the cost function is computed by measuring an observable $\hat{O}_y$ associated with label $y$, defined as:
\begin{equation}
    C(\boldsymbol{\theta}) = \mathbb{E}_{(x,y)\sim S} \left[ \langle \psi(\boldsymbol{\theta},x)| \hat{O}_y |\psi(\boldsymbol{\theta},x)\rangle \right]
\end{equation}
%The expectation value of an observable $O$ provides the output:
%$$
%f(x, \boldsymbol{\theta}) = \bra{\psi(x, \boldsymbol{\theta})} O \ket{\psi(x, \boldsymbol{\theta})} 
%= \bra{0}^{\otimes n} U_{\text{enc}}^\dagger(x) U^\dagger(\boldsymbol{\theta}) O U(\boldsymbol{\theta}) U_{\text{enc}}(x) \ket{0}^{\otimes n}.
%$$

\noindent \textbf{Gradient Evaluation via Parameter-Shift Rule.} Computing the gradients of $C(\boldsymbol{\theta})$ is essential for training VQML models. However, unlike classical ML, direct backpropagation is not applicable to PQCs because quantum measurements are inherently probabilistic and intermediate quantum states cannot be accessed or copied due to the no-cloning theorem. Instead, gradients are estimated using the a technique, called parameter-shift rule~\cite{Li2017,Mitarai2018,Wierichs2022generalparameter}.

Consider a variational ansatz $U(\boldsymbol{\theta}) = \prod_{k=1}^K e^{i \theta_k G_k}$ where $G_k$ has only two eigenvalues. Let the difference between two eigenvalues be $\Omega_k$. We can calculate the deviation of $f(\boldsymbol{\theta}) = \langle \psi(\boldsymbol{\theta},x)| \hat{O}_y |\psi(\boldsymbol{\theta},x)\rangle$ w.r.t $\theta_k$ by parameter-shift rule as
\begin{equation}
    \frac{\partial f}{\partial \theta_k}
= \frac{\Omega_k}{2\sin{(\Omega_ks)}}\Big[f\!\left(\theta_k + s\right) - f\!\left(\theta_k - s\right)\Big]
\end{equation}
where $s$ is freely chosen from $(0,\pi)$. A common choice in practice is $s = \pi/(2\Omega_k)$, which simplifies the formula because $\sin(\Omega_k s) = 1$. This approach provides an unbiased gradient evaluation while requiring only two evaluations of the quantum circuit per parameter. $f\!\left(\theta_k + s\right)$ and $f\!\left(\theta_k - s\right)$ are practically implemented by a positive-shifted and a negative-shifted circuits.

\subsection{Differential Privacy}

\noindent \textbf{Foundations of Differential Privacy.}
Differential Privacy (DP) provides a rigorous framework that limits how much influence a single data point can have on the output of an algorithm~\cite{Dwork2006DifferentialP}. 

\begin{definition}
    [$(\varepsilon, \delta)$-Differential Privacy] A randomized mechanism $\mathcal{M}:\mathcal{D} \rightarrow \mathcal{R}$ satisfies $(\varepsilon, \delta)$-differential privacy if for any two adjacient datasets $D_1$ and $D_2$ that differs by a single element, and for any subset of outputs $S \subseteq \mathcal{R}$, the following inequality holds:
    \begin{equation}
        \operatorname{Pr}[\mathcal{M}(D_1) \in S] \leq e^\varepsilon\operatorname{Pr}[\mathcal{M}(D_2) \in S] + \delta
    \end{equation}
\end{definition}

In this definition, $\varepsilon \geq 0$ quantifies the maximum privacy loss. Specifically, it controls how much the output distribution may change when one data point is modified. In addition, the parameter $\delta \in [0,1)$ allows for a small probability of failure in this guarantee. Smaller values of $\varepsilon$ and $\delta$ lead to stronger privacy protection.

\noindent \textbf{Differentially Private Stochastic Gradient Descent (DP-SGD).}
The dominant approach for private training in classical ML is DP-SGD. Differentially Private SGD (DP-SGD)~\cite{DPSGD}. DP-SGD enforces privacy by modifying the standard SGD update through two key mechanisms:
\begin{enumerate}
    \item \emph{Gradient clipping:} Each per-sample gradient is clipped to a fixed norm bound $C$. This ensures that sensitivity of the gradients is bounded.
    \item \emph{Noise addition:} After clipping, Gaussian noise with variance calibrated to $C$, $\varepsilon$, and $\delta$ is added to the aggregated gradient. The noise masks the exact contribution of each individual, thereby ensuring differential privacy.  
\end{enumerate}  
By carefully balancing the clipping bound and noise scale, DP-SGD achieves a trade-off between privacy and model accuracy. Privacy guarantees are tracked across training iterations using a privacy accountant, which accumulates the total privacy loss $(\varepsilon, \delta)$ over the course of training.

In contrast, gradients in VQML, computed via the PSR, are inherently bounded due to the finite spectrum of quantum observables and are already stochastic because of finite-shot measurement noise. As we show next, exploiting these structural properties enables tighter sensitivity analysis and substantially reduces the amount of additional noise required to achieve DP.

\section{Differentially Private Parameter-Shift Rule (\NAMEA)}

This section details our central contribution, the \NAMEA algorithm, which is a differentially private algorithm to protect the privacy of the quantum training process, and its theoretical underpinnings. First, we present the core algorithm and establish theoretical guarantees. This includes the main privacy guarantee, that illustrates how the inherent shot noise from measurements can be used to reduce the need for externally added artificial noise, and the utility analysis. Next, we extend this analysis to noisy quantum environments where physical depolarizing noise can be a source for privacy by providing a guaranteed, non-trivial lower bound on the single-shot variance. Finally, we show how to empirically estimate the single-shot noise variance by measurement outcomes. That allows \NAMEA to dynamically tailor the amount of artificial noise added. Due to the page limit, all proofs in this paper are included in Appendix~\ref{sec:proofs}.

%demonstrate how the inherent statistical shot noise from quantum measurements can be treated as a privacy-enhancing resource, thus reducing the amount of artificial noise required.
%analyze how inherent quantum noise contributes to the overall privacy loss. Finally, we examine the effect of shot noise and demonstrate how it can be leveraged to enhance model utility.

\begin{figure}[htp]
    \centering
    \includegraphics[width=1.0\linewidth]{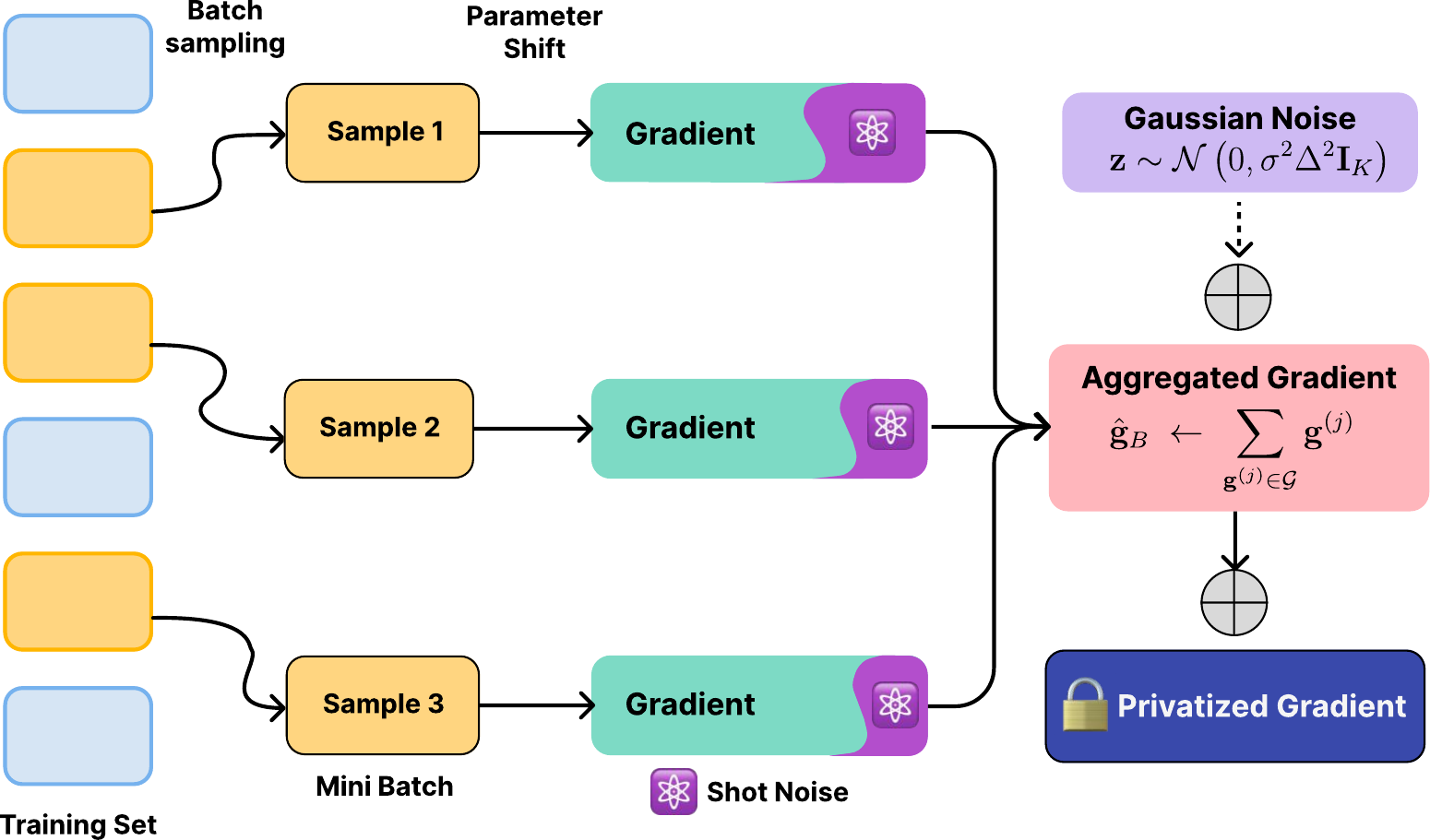}    
    \caption{Overall framework of Q-ShiftDP. Each mini-batch is processed via the parameter-shift rule, yielding inherently noisy and bounded gradients. %$\tfrac{\partial f}{\partial \theta_k} = \tfrac{\Omega_k}{2},[f(\theta_k+s)-f(\theta_k-s)]$, where $s=\pi/(2\Omega_k)$. 
    %resulting gradients are inherently noisy and bounded. 
    The aggregated gradient is then perturbed with Gaussian noise $z \sim \mathcal{N}(0,\sigma^2\Delta^2 \mathbf{I}_K)$ to ensure $(\varepsilon,\delta)$-DP.}
    %The aggregated gradient $\hat g_B = \sum_{g^{(j)}\in\mathcal{G}} g^{(j)})$ is then perturbed with Gaussian noise $z \sim \mathcal{N}(0,\sigma^2\Delta^2 I_d)$ to achieve $(\varepsilon,\delta)$-differential privacy, resulting in the privatized gradient used for updating model parameters.}
    \label{fig:overall-framework}
\end{figure}

\subsection{Proposed Algorithm}

% \begin{figure*}[htp]
%     \centering
% \includegraphics[width=0.9\linewidth]{figs/Q-ShiftDP.pdf}    
% \caption{Overall framework of Q-ShiftDP.}
%     \label{fig:overall-framework}
% \end{figure*}

%\mt{Somewhere need to say all the proofs aer in Appendix?}

To design an efficient differentially private algorithm for training QML models, it is essential to account for the unique characteristics of QML training, which differ from those in classical machine learning. Two key distinctions are: (i) quantum gradients are inherently bounded, and (ii) gradients are estimated through statistical sampling rather than computed exactly.

We first investigate the boundedness of gradients in QML. According to PSR, gradients are computed as the difference between two shifted expectation values. Thus, it is crucial to understand the range of possible expectation values produced by a quantum circuit. Lemma~\ref{lemma:eigenvalues} shows that the expectation value is always bounded by the extremal eigenvalues of the observable.

\begin{lemma}
    Given the objective function $C(\boldsymbol{\theta}) = \langle\psi_0|\hat{U}(\boldsymbol{\theta})^{\dagger}\hat{O}\hat{U}(\boldsymbol{\theta})|\psi_0\rangle$ where $\hat{O}$ is an observable with a minimum and maximum eigenvalues $\lambda_{\text{min}}$ and $\lambda_{\text{max}}$ respectively and $\hat{U}(\boldsymbol{\theta})$ is a parametric unitary. The value of $C(\boldsymbol{\theta})$ is bounded by these same extremal eigenvalues for all possible parameter values $\boldsymbol{\theta}$.
    \begin{equation}
        \lambda_{\min} \le C(\boldsymbol{\theta}) \le \lambda_{\max}
    \end{equation}
\label{lemma:eigenvalues}
\end{lemma}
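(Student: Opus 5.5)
The plan is to reduce the statement to the standard Rayleigh-quotient bound for a Hermitian operator. Fix an arbitrary $\boldsymbol{\theta}$ and set $\ket{\phi} = \hat{U}(\boldsymbol{\theta})\ket{\psi_0}$. Since $\hat{U}(\boldsymbol{\theta})$ is unitary and $\ket{\psi_0}$ is normalized, $\braket{\phi|\phi} = \braket{\psi_0|\hat{U}^{\dagger}\hat{U}|\psi_0} = 1$. Then $C(\boldsymbol{\theta}) = \bra{\phi}\hat{O}\ket{\phi}$, so it suffices to show that $\lambda_{\min} \le \bra{\phi}\hat{O}\ket{\phi} \le \lambda_{\max}$ for every unit vector $\ket{\phi}$. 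This reduction removes all dependence on $\boldsymbol{\theta}$, and uniformity over $\boldsymbol{\theta}$ then follows automatically.

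Next I will invoke the spectral decomposition $\hat{O} = \sum_i \lambda_i P_i$ recalled in the background section, where the $P_i$ are orthogonal projectors with $\sum_i P_i = I$. Writing $p_i = \bra{\phi}P_i\ket{\phi} = \|P_i\ket{\phi}\|^2 \ge 0$, I get $\sum_i p_i = \braket{\phi|\phi} = 1$. Hence $C(\boldsymbol{\theta}) = \sum_i \lambda_i p_i$ is a convex combination of the eigenvalues. This matches the interpretation $\Pr(\lambda_i) = \bra{\psi}P_i\ket{\psi}$ already used in the paper.

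Finally, since $\lambda_{\min} \le \lambda_i \le \lambda_{\max}$ for all $i$ and the weights $p_i$ are nonnegative and sum to one, I obtain
\[
\lambda_{\min} = \sum_i \lambda_{\min} p_i \le \sum_i \lambda_i p_i \le \sum_i \lambda_{\max} p_i = \lambda_{\max}.
\]
The only step needing care, rather than being a real obstacle, is verifying $p_i \ge 0$ and $\sum_i p_i = 1$. The first uses $P_i = P_i^{\dagger} = P_i^2$, and the second uses the normalization preserved by unitarity. For mixed states, if needed later, the same argument applies with $p_i = \operatorname{Tr}(\rho P_i)$, using positivity of $\rho$ and $\operatorname{Tr}\rho = 1$.
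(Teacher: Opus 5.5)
Your proposal is correct and follows essentially the same argument as the paper: both write $C(\boldsymbol{\theta})$ as a convex combination of the eigenvalues of $\hat{O}$ via the spectral decomposition and then bound it by $\lambda_{\min}$ and $\lambda_{\max}$. The only cosmetic difference is that the paper expands the state in an orthonormal eigenbasis with weights $|c_j|^2$, whereas you use eigenspace projectors with weights $\bra{\phi}P_i\ket{\phi}$ and make the normalization-by-unitarity step explicit.
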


%\nguyendo{should be inline equation}

Next, we establish a bound on the $\ell_2$-sensitivity of the gradients, as stated in Lemma~\ref{lemma:L2_sensitivity}. The bound $\Delta$ is determined by the extremal eigenvalues of the observable and the frequencies ${\Omega_k}$ of the generators $G_k$, which can be specified prior to training. Thus, the $\ell_2$-sensitivity of the gradients can be controlled directly, without the need for gradient clipping. Based on this property, we design a differentially private algorithm for QML training, presented in Algorithm~\ref{alg:dp-prs}.

\begin{lemma}[$\ell_2$-Sensitivity of the Quantum Gradient]
    We define an objective function $C(\boldsymbol{\theta}) = \Braket{\psi_0(x) | \hat{U}(\boldsymbol{\theta})^{\dagger} \hat{O} \hat{U}(\boldsymbol{\theta}) | \psi_0(x)}$ where the observable $\hat{O}$ has eigenvalues in range from $\lambda_{\min}$ to $\lambda_{\max}$ and parametric unitary $\hat{U}(\boldsymbol{\theta}) = \prod_{k=1}^{K} e^{i\theta_k G_k}$ with frequencies $\Omega_k$. 
Suppose the gradient of $C(\boldsymbol{\theta})$ with respect to parameter $\theta_k$ is computed using the parameter-shift rule:
\begin{equation}
        \mathbf{g}_k = \tfrac{\Omega_k}{2}\,(r_+ - r_-),
\end{equation}
%\nguyendo{should be inline equation}
where $r_+$ and $r_-$ are expectation values of the observable under a shifted amount $s = \frac{\pi}{2\Omega_k}$.  
    The \textbf{$\ell_2$-norm} of this gradient is strictly bounded by the sensitivity $\Delta$:
    \begin{equation*}
        \|\mathbf{g}_k\|_2 \le \Delta \quad \text{where} \quad \Delta = \frac{\lambda_{\max} - \lambda_{\min}}{2} \sqrt{\sum_{k=1}^{K} \Omega_k^2}
    \end{equation*}
    \label{lemma:L2_sensitivity}
\end{lemma}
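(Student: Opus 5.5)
The plan is to bound each component of the gradient vector separately and then assemble the $\ell_2$-norm. Write $\mathbf{g}=(\mathbf{g}_1,\dots,\mathbf{g}_K)$. The statement's notation $\|\mathbf{g}_k\|_2$ is read as the norm of the full gradient vector, since the bound involves all $\Omega_k$.

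First, fix a coordinate $k$. The shifted values $r_+$ and $r_-$ are each of the form $\langle\psi_0(x)|\hat{U}(\boldsymbol{\theta}')^{\dagger}\hat{O}\hat{U}(\boldsymbol{\theta}')|\psi_0(x)\rangle$, where $\boldsymbol{\theta}'=\boldsymbol{\theta}\pm s\,\mathbf{e}_k$. They are expectation values of the same observable under a unitary with shifted parameters, and the shifted unitary is still a parametric unitary of the same family. Lemma~\ref{lemma:eigenvalues} therefore applies to both and gives $r_\pm\in[\lambda_{\min},\lambda_{\max}]$. Two numbers in an interval of length $\lambda_{\max}-\lambda_{\min}$ differ by at most that length, so
\[
|\mathbf{g}_k|=\tfrac{\Omega_k}{2}|r_+-r_-|\le \tfrac{\Omega_k}{2}(\lambda_{\max}-\lambda_{\min}).
\]
Here $\Omega_k>0$, being the gap between two distinct eigenvalues.

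The same bound also holds for finite-shot estimates $\bar{A}_{N_s}$ of $r_\pm$. Each single-shot outcome is an eigenvalue of $\hat{O}$, so any empirical mean also lies in $[\lambda_{\min},\lambda_{\max}]$. This covers the stochastic PSR gradient used in the algorithm, not just the exact one.

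Second, sum the squares over $k$:
\[
\|\mathbf{g}\|_2^2=\sum_k \mathbf{g}_k^2\le \frac{(\lambda_{\max}-\lambda_{\min})^2}{4}\sum_k\Omega_k^2 .
\]
Taking square roots gives exactly $\Delta$.

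If sensitivity is meant as the change in a per-sample gradient when one sample is replaced, I will also note that the relevant quantity depends on the neighbouring relation. Under add/remove-one adjacency the sensitivity of the summed gradient is the norm of a single per-sample gradient, which is at most $\Delta$. Under replace-one adjacency the naive bound is $2\Delta$. This can be tightened by noting that $\mathbf{g}_k(x)-\mathbf{g}_k(x')=\tfrac{\Omega_k}{2}\big[(r_+-r_+')-(r_--r_-')\big]$ and bounding the pair more carefully. I expect to adopt add/remove adjacency, matching DP-SGD accounting.

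The main subtlety is not the computation but justifying the use of Lemma~\ref{lemma:eigenvalues} at the shifted parameters. The shifted circuits still realize a unitary conjugation of $\hat{O}$ on a normalized state, and the lemma holds for all $\boldsymbol{\theta}$, so this goes through.
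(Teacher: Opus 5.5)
Your proof is correct and follows the paper's argument: bound each component by $\tfrac{\Omega_k}{2}(\lambda_{\max}-\lambda_{\min})$ using Lemma~\ref{lemma:eigenvalues} at the shifted parameters, then sum the squares and take the square root. Your remark that finite-shot empirical means also lie in $[\lambda_{\min},\lambda_{\max}]$ is a useful point the paper leaves implicit. One caution on your side remark, which lies outside what the lemma claims: under replace-one adjacency the $2\Delta$ bound cannot be tightened in general. For example, a single gate $e^{i\theta Y/2}$ with $\hat O = Z$ at $\theta=\pi/2$ and inputs $\ket{0}$, $\ket{1}$ gives gradients $\mp 1$, while $\Delta = 1$.
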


Algorithm~\ref{alg:dp-prs} takes as input: a data encoding map $x \mapsto \ket{\psi_0(x)}$, a parametric unitary $\hat{U}(\boldsymbol{\theta}) = \prod_{k=1}^{K} e^{i\theta_k G_k}$ with generator frequencies $\{\Omega_k\}$, a set of observables $\{\hat{O}_y\}$ whose eigenvalues lie in $[\lambda_{\min}, \lambda_{\max}]$, a private dataset $S$, as well as training hyperparameters including the learning rate $\text{lr}$, the number of steps $T$, batch size $B$, and the noise multiplier $\sigma$.  The algorithm starts by computing the theoretical $\ell_2$-sensitivity bound $\Delta$ of the gradients based on Lemma~\ref{lemma:L2_sensitivity}. For each training step, a mini-batch $B$ is sampled from the private dataset (for simplicity, we use $B$ to denote both the mini-batch and its size). For each data point $(x_j,y_j) \in B$, we initialize a gradient vector $\mathbf{g}^{(j)}$. Then, for every parameter $\theta_k$, the $k$-th component of the gradient is estimated using the parameter-shift rule: first, the shift amount $s = \pi/(2\Omega_k)$ is set; next, the intermediate state $|\phi\rangle$ and operator $\hat{A}$ are defined. Two shifted circuits, including the positive-shifted circuit $\hat{A} \hat{U}_k(\theta_k+s)|\phi\rangle$ and the negative-shifted circuit $\hat{A} \hat{U}_k(\theta_k-s)|\phi\rangle$, are then executed in $N_s$ shots, each to obtain empirical estimates $r_+$ and $r_-$. The gradient component is computed as $\mathbf{g}^{(j)}_k = \tfrac{\Omega_k}{2}(r_+ - r_-)$. After all parameters are processed, $\mathbf{g}^{(j)}$ is appended to the gradient list $\mathcal{G}$. Once all samples in the batch are processed, the gradients are averaged, Gaussian noise $\mathbf{z} \sim \mathcal{N}(0,\sigma^2 \Delta^2 \mathbf{I}_K)$ is added, and the noisy gradient $\tilde{\mathbf{g}}$ is used to update the parameters with learning rate $\textbf{\text{lr}}$. After $T$ iterations, the algorithm outputs the differentially private parameters $\boldsymbol{\theta}_T$. 

Lemma~\ref{lemma:iterationDP} establishes the per-iteration privacy guarantee of Algorithm~\ref{alg:dp-prs}. In this context, we account for the second key distinction between quantum and classical training: quantum gradients are estimated statistically. The statistical noise inherent in quantum measurements, denoted as \emph{shot noise}, can serve as an additional source of privacy. Shot noise is given by the single-shot variance divided by the number of shots, $N_s$. By the central limit theorem, when the number of shots $N_s$ is sufficiently large, the estimated gradient components follow a Gaussian distribution with variance $\sigma_{\mathrm{shot}}^2 / N_s$, where $\sigma_{\mathrm{shot}}^2$ represents a guaranteed lower bound on the single-shot variance across all possible inputs in all positive-shifted and negative-shifted circuits. Shot noise combines with the injected Gaussian noise, which is calibrated based on the $\ell_2$-sensitivity, to yield the final guarantee.

\begin{lemma}
Suppose $N_s$ is large enough so that the finite-shot estimators $\hat{r}_+$ and $\hat{r}_-$ are independent random variables drawn from Gaussian distributions with the variance at least $\sigma_{\mathrm{shot}}^2/N_s$. Then, Algorithm~\ref{alg:dp-prs} ensures $(\varepsilon_0, \delta_0)-\operatorname{DP}$ per iteration with:
\begin{equation}
    \varepsilon_0 = \frac{\sqrt{2 \ln \frac{1.25}{\delta_0}}}{\sqrt{\frac{ 2B\sigma_{\mathrm{shot}}^2}{N_s(\lambda_{\max} - \lambda_{\min})^2} + \sigma^2}}
\end{equation}
\label{lemma:iterationDP}
\end{lemma}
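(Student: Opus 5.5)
The plan is to reduce the per-iteration release to a single Gaussian mechanism. The released quantity is the aggregated parameter-shift gradient plus the injected noise $\mathbf{z}$. The idea is to show that, conditioned on $\boldsymbol{\theta}$, this quantity is a deterministic query of the mini-batch plus Gaussian noise whose covariance dominates an isotropic matrix $\sigma_{\mathrm{eff}}^2\Delta^2\mathbf{I}_K$. The classical Gaussian mechanism bound, $\varepsilon_0=\sqrt{2\ln(1.25/\delta_0)}/\sigma_{\mathrm{eff}}$ for an $\ell_2$-sensitivity-$\Delta$ query, then gives the claim with $\sigma_{\mathrm{eff}}^2=\sigma^2+\frac{2B\sigma_{\mathrm{shot}}^2}{N_s(\lambda_{\max}-\lambda_{\min})^2}$.

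\textbf{Step 1 (decomposition).} Write each finite-shot estimator as $\hat r_\pm = r_\pm(x_j,\boldsymbol{\theta}) + \xi_\pm$. Here $r_\pm$ is the exact shifted expectation, and, by hypothesis, $\xi_\pm$ are independent centered Gaussians with variance at least $\sigma_{\mathrm{shot}}^2/N_s$. Then $\mathbf{g}^{(j)}_k = \tfrac{\Omega_k}{2}(r_+-r_-) + \tfrac{\Omega_k}{2}(\xi_+-\xi_-)$. The first term is the exact PSR gradient, whose per-sample norm is bounded by $\Delta$ via Lemma~\ref{lemma:L2_sensitivity}, using Lemma~\ref{lemma:eigenvalues} for each shifted value. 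The second term is Gaussian with per-coordinate variance at least $\Omega_k^2\sigma_{\mathrm{shot}}^2/(2N_s)$, and these terms are independent across samples and parameters.

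\textbf{Step 2 (aggregation).} I will work with the summed gradient over the batch, whose sensitivity under replacement or removal of one sample is $\Delta$. The matching convention for $\mathbf{z}$ is that its scale $\sigma\Delta$ is relative to this same sum, which is the convention under which the stated formula has $\sigma^2$ unscaled by $B$. Summing the $B$ independent shot-noise vectors gives a Gaussian with covariance at least $\mathrm{diag}\bigl(B\Omega_k^2\sigma_{\mathrm{shot}}^2/(2N_s)\bigr)$. This is independent of $\mathbf{z}$, so the total noise covariance is at least $\sigma^2\Delta^2\mathbf{I}_K + \mathrm{diag}\bigl(B\Omega_k^2\sigma_{\mathrm{shot}}^2/(2N_s)\bigr)$. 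Increasing the noise only post-processes the output, so it is enough to prove DP for the minimal covariance.

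\textbf{Step 3 (isotropic reduction; main obstacle).} The shot noise is anisotropic: it scales with $\Omega_k^2$ per coordinate. The target formula corresponds to adding $\frac{B\sigma_{\mathrm{shot}}^2}{2N_s}\sum_k\Omega_k^2\,\mathbf{I}_K = \frac{2B\sigma_{\mathrm{shot}}^2}{N_s(\lambda_{\max}-\lambda_{\min})^2}\Delta^2\mathbf{I}_K$, which matches the shot covariance exactly only when a single frequency dominates or after normalising coordinates.

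My first attempt will use the Mahalanobis form of the Gaussian mechanism, with sensitivity $\sup\|\Sigma^{-1/2}(\mathbf{v}-\mathbf{v}')\|_2$. I would combine this with the per-coordinate bound $|\mathbf{g}_k|\le \Omega_k(\lambda_{\max}-\lambda_{\min})/2$ from Lemma~\ref{lemma:eigenvalues}. In the rescaled coordinates $u_k=\mathbf{g}_k/\Omega_k$, both the shot noise and the sensitivity become frequency-independent. I will then check whether the resulting Mahalanobis sensitivity is at most $1/\sigma_{\mathrm{eff}}$.

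If this rescaling does not close the argument, the fallback is to state explicitly that the shot variance bound is taken per coordinate at the isotropic level implied by $\Delta$. This is the assumption the paper's "guaranteed lower bound" phrasing appears to intend, and under it the isotropic Gaussian mechanism applies directly.

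Finally, I will note that the differing sample's own shot noise only adds variance, and that the CLT hypothesis is what licenses treating the shot noise as exactly Gaussian.
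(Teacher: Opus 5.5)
\textbf{Step 3 is a genuine gap, and your Mahalanobis check will come out negative.} Steps 1--2 match the paper's decomposition. Put $a_k=\Omega_k^2(\lambda_{\max}-\lambda_{\min})^2/4$, so that $\Delta^2=\sum_k a_k$ and $|\mathbf{g}_k|^2\le a_k$, and put $c=\frac{2B\sigma_{\mathrm{shot}}^2}{N_s(\lambda_{\max}-\lambda_{\min})^2}$. At the variance floor, the noise on the summed gradient has covariance $\mathrm{diag}(\sigma^2\Delta^2+c\,a_k)$. The worst-case squared Mahalanobis sensitivity is therefore $\sum_k a_k/(\sigma^2\Delta^2+c\,a_k)$. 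Since $a_k\le\Delta^2$, this is $\ge 1/(\sigma^2+c)$, with strict inequality as soon as two frequencies are nonzero; for equal frequencies it equals exactly $1/(\sigma^2+c/K)$. The privacy profile of a Gaussian mechanism is determined by this distance, so the stated hypotheses only support a shot-noise credit of $c/K$ (equal frequencies), not $c$. Nor is the box corner an artefact of the bound. Take $K$ commuting $R_Y$ rotations on one depolarized qubit measured in $Z$, so $f=(1-\alpha)\cos(\sum_l\theta_l+\phi(x))$. There is a point where every shifted circuit sits at the floor $\alpha(2-\alpha)$ and every $|\mathbf{g}_k|=1-\alpha$. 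Hence, for large $K$ with the shot term dominant, no argument from these hypotheses can deliver the stated $\varepsilon_0$.

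\textbf{How the paper gets the formula.} It computes the \emph{trace} $\operatorname{Var}[\mathbf{g}^{(j)}]=\sum_k\operatorname{Var}[\mathbf{g}^{(j)}_k]\ge\frac{\sigma_{\mathrm{shot}}^2}{2N_s}\sum_k\Omega_k^2$ and divides by $B$. It then adds this to the \emph{per-coordinate} variance $\sigma^2\Delta^2/B^2$ of $\mathbf{z}/B$ and applies the scalar Gaussian mechanism with sensitivity $\Delta/B$. That is exactly your fallback, made without comment, so your fallback reproduces the paper's route. You should label it as what it is, though, not as a reading of ``guaranteed lower bound''. It assumes each coordinate carries shot variance $\frac{\sigma_{\mathrm{shot}}^2}{2N_s}\sum_l\Omega_l^2$ instead of $\frac{\Omega_k^2\sigma_{\mathrm{shot}}^2}{2N_s}$. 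For equal frequencies that is a $K$-fold strengthening, and Algorithm~\ref{alg:dp-prs} does not satisfy it.

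\textbf{Two smaller issues in Step 2, also skipped by the paper.} First, under replacement adjacency the sum has sensitivity $2\Delta$, not $\Delta$; the paper's $\Delta/B$ is the add/remove value. Second, ``more noise is post-processing'' requires the excess covariance to be the same for both neighbours. Here the differing record's own shot variance depends on that record, so the two neighbouring output Gaussians have different covariances, and that mismatch needs its own bound.
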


We emphasize that Lemma~\ref{lemma:iterationDP} characterizes the per-iteration privacy and does not account for mini-batch subsampling or the cumulative privacy loss over $T$ training steps. In Theorem~\ref{thm:main}, we extend this result to provide a lower bound on the artificial Gaussian noise required to achieve a desired overall privacy budget. Importantly, the inherent shot noise from finite-shot gradient estimation reduces the need for additional artificial noise: the noisier the estimation, the less extra artificial noise must be injected. We note that if gradients were computed exactly (i.e., $\sigma_{\mathrm{shot}}^2 / N_s \approx 0$), the amount of required artificial noise is equivalent with that of DP-SGD for classical ML~\cite{DPSGD}.

\begin{theorem}
    Consider Algorithm~\ref{alg:dp-prs} (\NAMEA) applied to a private dataset $S$ of size $N$, with mini-batch size $B$, sampling rate $q = B/N$, $T$ training iterations, and per-sample $\ell_2$-gradient sensitivity $\Delta$ (from Lemma~\ref{lemma:L2_sensitivity}).
    Let the desired total privacy guarantee be $(\varepsilon, \delta)$, for any $\delta \in (0,1)$ and for a sufficiently small $\varepsilon$ (specifically, $\varepsilon < c_1 q^2 T$ for a universal constant $c_1$). The algorithm guarantees $(\varepsilon, \delta)$-DP if the chosen explicit noise multiplier $\sigma$ satisfies:
    \begin{equation}
        \sigma^2 \ge  \max\left(0, C_{\text{DP}} - \frac{2B \sigma_{\mathrm{shot}}^2}{N_s (\lambda_{\max}-\lambda_{\min})^2} \right)
    \end{equation}
    where $C_{\text{DP}} = \left( c_2 \frac{q \sqrt{T \log(1/\delta)}}{\varepsilon} \right)^2$ with $c_2$ is a constant.
    \label{thm:main}
\end{theorem}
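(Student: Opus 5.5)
The plan is to reduce the statement to the moments accountant theorem of Abadi et al.\ (DP-SGD, Theorem 1). That result says the subsampled Gaussian mechanism with sampling rate $q$, run for $T$ steps with \emph{effective} noise multiplier $\sigma_{\text{eff}}$ relative to sensitivity $\Delta$, is $(\varepsilon,\delta)$-DP whenever $\varepsilon < c_1 q^2 T$ and $\sigma_{\text{eff}} \ge c_2 q\sqrt{T\log(1/\delta)}/\varepsilon$. Equivalently, the condition is $\sigma_{\text{eff}}^2 \ge C_{\text{DP}}$. What remains is to identify the effective noise multiplier of one step of Algorithm~\ref{alg:dp-prs} in the normalization that theorem uses, and to check that the algorithm fits its hypotheses.

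First I would isolate a single iteration, following the setup of Lemma~\ref{lemma:iterationDP}. Each coordinate $\mathbf{g}^{(j)}_k=\tfrac{\Omega_k}{2}(\hat r_+-\hat r_-)$ is, under the large-$N_s$ Gaussian assumption, Gaussian around its exact PSR value. Its variance is at least $\tfrac{\Omega_k^2}{4}\cdot 2\sigma_{\mathrm{shot}}^2/N_s$, since the two shifted estimators are independent. Summing the $B$ independent per-sample estimates and adding the artificial noise gives a total that is the exact gradient sum plus Gaussian noise. By Lemma~\ref{lemma:L2_sensitivity}, the exact per-sample gradient has norm at most $\Delta$, so changing one record moves the exact sum by at most $2\Delta$, or $\Delta$ under add/remove adjacency.

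Next I would express the shot-noise variance in units of $\Delta^2$. This is exactly the computation behind Lemma~\ref{lemma:iterationDP}: the shot-noise contribution corresponds to a multiplier squared of $2B\sigma_{\mathrm{shot}}^2/(N_s(\lambda_{\max}-\lambda_{\min})^2)$. The artificial noise $\mathbf{z}\sim\mathcal{N}(0,\sigma^2\Delta^2\mathbf{I}_K)$ is independent of the shot noise, so the variances add. The aggregate is then a Gaussian mechanism with
\[
\sigma_{\text{eff}}^2=\sigma^2+\frac{2B\sigma_{\mathrm{shot}}^2}{N_s(\lambda_{\max}-\lambda_{\min})^2},
\]
as in the denominator of $\varepsilon_0$ in Lemma~\ref{lemma:iterationDP}. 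The covariance need not be isotropic, because the shot-noise variance is only bounded below per coordinate. I would handle this by noting that extra independent noise is post-processing: write the noise as an isotropic Gaussian at the lower-bound level plus an independent remainder, and observe that adding the remainder cannot hurt privacy. The $\Omega_k$-dependent coordinate scaling is the delicate part. To make the lower bound valid in the $\ell_2$ metric, I would either use the per-coordinate minimum of the $\Omega_k$ or argue coordinatewise against the $\Delta$ normalization, accepting whatever constant is absorbed into $c_2$.

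Finally, because each step is a subsampled Gaussian mechanism with multiplier at least $\sigma_{\text{eff}}$ and sensitivity $\Delta$, the moments accountant applies verbatim across the $T$ adaptive steps. It yields $(\varepsilon,\delta)$-DP provided $\sigma_{\text{eff}}^2\ge C_{\text{DP}}$. Rearranging gives $\sigma^2\ge C_{\text{DP}}-2B\sigma_{\mathrm{shot}}^2/(N_s(\lambda_{\max}-\lambda_{\min})^2)$, and since $\sigma^2\ge0$ always holds, this is the stated bound with the $\max(0,\cdot)$.

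The main obstacle I expect is justifying that shot noise can be counted as mechanism noise inside the accountant. It is only Gaussian asymptotically, via the CLT; its variance is data-dependent and bounded only from below; and it is anisotropic across coordinates. I would rely on the Gaussian hypothesis stated in Lemma~\ref{lemma:iterationDP}, and on the lower-bound-plus-independent-remainder decomposition together with post-processing, to reduce to the isotropic case the accountant requires.
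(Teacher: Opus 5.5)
Your outline is the paper's proof. The paper quotes the subsampled-Gaussian condition $\sigma_{\text{total}}\ge c_2 q\sqrt{T\log(1/\delta)}/\varepsilon$, sets $\sigma_{\text{total}}^2=\sigma^2+X$ with $X=\frac{2B\sigma_{\mathrm{shot}}^2}{N_s(\lambda_{\max}-\lambda_{\min})^2}$ by citing Lemma~\ref{lemma:iterationDP}, and rearranges; it never discusses anisotropy. The step you call delicate, however, is a genuine gap, and your repair (``accepting whatever constant is absorbed into $c_2$'') does not close it.

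In Lemma~\ref{lemma:iterationDP}, $X$ comes from adding the \emph{trace} of the per-sample shot-noise covariance, $\frac{\sigma_{\mathrm{shot}}^2}{2N_s}\sum_k\Omega_k^2$, to the \emph{per-coordinate} variance $\sigma^2\Delta^2$ of $\mathbf{z}$. A Gaussian mechanism calibrated to $\ell_2$-sensitivity is governed by the variance in the worst direction, not by the trace. Let $w_k=\Omega_k^2/\sum_l\Omega_l^2$. Coordinate $k$ of $\hat{\mathbf{g}}_B$ carries shot variance at least $B\Omega_k^2\sigma_{\mathrm{shot}}^2/(2N_s)=\Delta^2 w_k X$. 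Your two options then give:
\begin{itemize}
\item Taking the minimum over $k$ credits only $X\min_k w_k\le X/K$.
\item Whitening the diagonal covariance and using the box $|v_k|\le\frac{\Omega_k}{2}(\lambda_{\max}-\lambda_{\min})=\Delta\sqrt{w_k}$ from the proof of Lemma~\ref{lemma:L2_sensitivity} gives squared Mahalanobis sensitivity $\sum_k w_k/(\sigma^2+w_kX)$. When all $\Omega_k$ coincide, this equals $1/(\sigma^2+X/K)$.
\end{itemize}
So the effective squared multiplier is $\sigma^2+X/K$, not $\sigma^2+X$.

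The factor $1/K$ multiplies only the subtracted shot-noise term, so no universal $c_2$ can absorb it. For example, if $X=C_{\text{DP}}$ the theorem allows $\sigma=0$. The true squared multiplier is then only $C_{\text{DP}}/K$, so $c_2^2$ would have to grow linearly in $K$.

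Carried through honestly, your plan proves the statement with $X$ replaced by $X/K$. By concavity of $w\mapsto w/(\sigma^2+wX)$ and Jensen, that weaker version holds for every frequency profile. The statement as written follows only for $K=1$. The paper's proof has the same hole, because it imports the trace-based quantity from Lemma~\ref{lemma:iterationDP} unchanged.

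A smaller point concerns your lower-bound-plus-remainder post-processing step. It is legitimate only for shot noise contributed by records common to both neighbouring batches, since only there does the same Markov kernel act on both sides. The differing record's own shot noise has data-dependent covariance and appears on one side of the comparison only. It therefore cannot be credited as mechanism noise and needs separate handling.
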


%\nguyendo{I should denote 2 complex formula terms as 2 auxiliary variables to avoid margin violation}

Finally, we analyze the utility of \NAMEA by quantifying the discrepancy between the statistically estimated gradient $\tilde{\mathbf{g}}$ (obtained via subsampling and measurement) and the true dataset gradient $\nabla C_S(\boldsymbol{\theta})$. Theorem~\ref{thm:utility} provides an upper bound on the expected squared $\ell_2$ error of this estimate. The bound is derived from three sources of errors: (i) sampling error due to mini-batch selection, (ii) statistical error from finite-shot measurements, and (iii) variance from the artificial noise added. An interesting aspect of the bound in Theorem~\ref{thm:utility} is the dependence on the number of shots $N_s$. Increasing $N_s$ improves utility by reducing the shot error in gradient estimation, thereby tightening the bound. However, this comes with a trade-off: larger $N_s$ diminishes the inherent randomness of quantum measurements, which contributes to privacy. Thus, more artificial noise is required to maintain the same privacy guarantee.

\begin{theorem}[Bound on the Gradient Estimation Error for \NAMEA]
    Let $\nabla C_S(\boldsymbol{\theta})$ be the true gradient over the full dataset $S$, and let $\tilde{\mathbf{g}}$ be the final noisy gradient estimate produced by Algorithm 1 (\NAMEA) for a mini-batch of size $B$. The Mean Squared Error (MSE) of the estimator is bounded by:
    \begin{equation}
        \mathbb{E}\left[ \|\tilde{\mathbf{g}} - \nabla C_S(\boldsymbol{\theta})\|^2 \right] \le \frac{\Delta^2}{B} \left( 1 + \frac{1}{2N_s} \right) + \frac{K \sigma^2 \Delta^2}{B^2}
    \end{equation}
    where $\Delta$ is the per-sample $\ell_2$-sensitivity from Lemma~\ref{lemma:L2_sensitivity}, $N_s$ is the number of measurement shots, $K$ is the number of parameters, and $\sigma$ is the artificial noise multiplier.
    \label{thm:utility}
\end{theorem}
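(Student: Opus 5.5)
We prove the bound by a bias--variance decomposition. Write the output of Algorithm~\ref{alg:dp-prs} as $\tilde{\mathbf{g}} = \frac{1}{B}\sum_{j\in B}\hat{\mathbf{g}}^{(j)} + \frac{1}{B}\mathbf{z}$. Here $\hat{\mathbf{g}}^{(j)}$ is the finite-shot PSR estimate for sample $j$, and $\mathbf{z}\sim\mathcal{N}(0,\sigma^2\Delta^2\mathbf{I}_K)$. If the noise is added to the sum and then divided by $B$, this gives the $K\sigma^2\Delta^2/B^2$ term. Let $\mathbf{g}^{(j)} = \nabla f_j(\boldsymbol{\theta})$ denote the exact per-sample gradient. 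The error $\tilde{\mathbf{g}}-\nabla C_S$ then splits into three pieces:
\begin{equation*}
\underbrace{\tfrac{1}{B}\textstyle\sum_j (\mathbf{g}^{(j)}-\nabla C_S)}_{\text{sampling}} + \underbrace{\tfrac{1}{B}\textstyle\sum_j (\hat{\mathbf{g}}^{(j)}-\mathbf{g}^{(j)})}_{\text{shot}} + \underbrace{\tfrac{1}{B}\mathbf{z}}_{\text{privacy}}.
\end{equation*}

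\textbf{Cross terms.} We condition in layers. Given the batch, the shot term has mean zero because the PSR estimate is unbiased. Given everything else, $\mathbf{z}$ has mean zero and is independent of the rest. The sampling term has mean zero over the batch draw, assuming uniform sampling (with replacement, or with a finite-population correction that only helps). Hence all cross terms vanish and the MSE is the sum of the three second moments.

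\textbf{Privacy term.} This is immediate: $\mathbb{E}\|\mathbf{z}\|^2/B^2 = K\sigma^2\Delta^2/B^2$.

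\textbf{Sampling term.} For i.i.d.\ draws, its second moment is $\frac{1}{B}\mathbb{E}\|\mathbf{g}^{(j)}-\nabla C_S\|^2 \le \frac{1}{B}\mathbb{E}\|\mathbf{g}^{(j)}\|^2 \le \Delta^2/B$. The last inequality uses Lemma~\ref{lemma:L2_sensitivity}, applied to the exact expectation values, which lie in $[\lambda_{\min},\lambda_{\max}]$ by Lemma~\ref{lemma:eigenvalues}.

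\textbf{Shot term.} By independence across samples and coordinates, its second moment is $\frac{1}{B^2}\sum_j\sum_k \operatorname{Var}[\hat{\mathbf{g}}^{(j)}_k]$. Each coordinate satisfies $\operatorname{Var}[\hat{\mathbf{g}}^{(j)}_k] = \frac{\Omega_k^2}{4}\big(\operatorname{Var}[r_+]+\operatorname{Var}[r_-]\big)$. A single-shot outcome lies in $[\lambda_{\min},\lambda_{\max}]$, so Popoviciu's inequality gives single-shot variance at most $(\lambda_{\max}-\lambda_{\min})^2/4$. Thus $\operatorname{Var}[r_\pm]\le (\lambda_{\max}-\lambda_{\min})^2/(4N_s)$, and each coordinate variance is at most $\frac{\Omega_k^2(\lambda_{\max}-\lambda_{\min})^2}{8N_s}$. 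Summing over $k$ yields $\Delta^2/(2N_s)$, and averaging over $B$ samples gives $\Delta^2/(2N_sB)$.

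Adding the three pieces gives $\frac{\Delta^2}{B}(1+\frac{1}{2N_s}) + \frac{K\sigma^2\Delta^2}{B^2}$, as claimed.

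The main obstacle is bookkeeping rather than mathematics. First, the noise normalization in the algorithm must match the $1/B^2$ factor. Second, the batch-sampling model must make the sampling term unbiased. Third, the shot-variance constant must come out exactly $1/(2N_s)$; this relies on Popoviciu's bound combined with the factor $(\Omega_k/2)^2$ and the two independent circuits. If the algorithm adds noise after averaging, the final term would instead be $K\sigma^2\Delta^2$, so this normalization is the first thing to check against Algorithm~\ref{alg:dp-prs}.
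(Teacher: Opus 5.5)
Your proposal is correct and follows essentially the same route as the paper: it uses the same three-way split into DP-noise, shot-noise and sampling errors, with the same bounds $K\sigma^2\Delta^2/B^2$, $\Delta^2/(2BN_s)$ and $\Delta^2/B$; Algorithm~\ref{alg:dp-prs} does add $\mathbf{z}$ to the sum before dividing by $B$, so the normalization you flagged checks out. Two of your steps are cleaner than the paper's: the paper asserts outright independence of the three terms, where your layered conditioning argument for the cross terms is the sound one, and it bounds $\operatorname{Var}(r_\pm)$ through $\sigma_{\mathrm{shot}}^2$, a quantity it elsewhere defines as a \emph{lower} bound, where you apply Popoviciu's inequality directly to the actual single-shot variance.
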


In \NAMEA, the guaranteed lower bound on the single-shot variance $\sigma^2_{\text{shot}}$ plays a crucial role in determining the required amount of artificial noise $\sigma^2$. However, without additional information about the positive and negative-shifted circuits, establishing a meaningful lower bound is challenging. In next section, we derive a non-trivial lower bound $\sigma^2_{\text{shot}}$ in the presence of depolarizing noise in the quantum circuits.

\begin{algorithm*}[h]
\DontPrintSemicolon
\caption{Differentially Private Parameter-Shift Rule (\NAMEA)}
\label{alg:dp-prs}
\KwIn{
    Data encoding $x \mapsto \ket{\psi_0(x)}$,
    Parametric unitary $\hat{U}(\boldsymbol{\theta}) = \prod_{k=1}^{K} e^{i\theta_k G_k}$ with frequencies $\Omega_k$,
    Set of observables $\{\hat{O}_y\}$ with eigenvalues in $[\lambda_{\min}, \lambda_{\max}]$,
    Private dataset $S$,
    Learning rate $\textbf{\text{lr}}$, NumSteps $T$, Batch size $B$,
    Noise multiplier $\sigma$.
}
\KwOut{Trained parameters $\boldsymbol{\theta}_{T}$.}
\BlankLine
\tcp{Define sensitivity based on the theoretical maximum L2 norm of the quantum gradient}
$\Delta \leftarrow \frac{\lambda_{\max} - \lambda_{\min}}{2} \sqrt{\sum_{k=1}^{K} \Omega_k^2}$\;

Initialize parameters $\boldsymbol{\theta}_0$ randomly\;
\For{$t \leftarrow 0$ \KwTo $T-1$}{
    Sample a mini-batch $B \subset S$\;
    Initialize an empty list for gradient estimates: $\mathcal{G} \leftarrow []$\;
    \ForEach{sample $(x_j, y_j) \in B$}{
        %\tcp{Compute the quantum gradient vector}
        
        Initialize vector: $\mathbf{g}^{(j)} \leftarrow \mathbf{0} \in \mathbb{R}^K$\;

        %Prepare the observable $\hat{O} = |y_j\rangle \langle y_j|$
        
        \For{$k \leftarrow 1$ \KwTo $K$}{
            %\tcp{Compute k-th component of the gradient, $\partial C / \partial \theta_k$}
            
            Define shift amount: $s \leftarrow \pi / (2\Omega_k)$\;
            
            %\tcp{Define circuit components}
            $|\phi\rangle \leftarrow (\prod_{l=1}^{k-1} \hat{U}_l(\theta_l)) |\psi_0(x_j)\rangle$\ \tcp*{Define intermediate state}
            $\hat{A} \leftarrow (\prod_{l=k+1}^{K} \hat{U}_l(\theta_l))^\dagger \hat{O}_{y_j} (\prod_{l=k+1}^{K} \hat{U}_l(\theta_l))$\ \tcp*{Define intermediate operator}

            %\tcp{Evaluate shifted expectation values}
            %\tcp{Evaluate shifted expectation values by sampling}
            
            Run positive-shifted circuit $\hat{A} \hat{U}_k(\theta_k+s)|\phi\rangle$ with $N_s$ times, collect outcomes $\{\hat{a}_{j,k,i}\}_{i=1}^{N_s}$\;
            $r_+ \leftarrow \frac{1}{N_s}\sum_{i=1}^{N_s} \hat{a}_{j,k,i}$\ \tcp*{Estimator for positive shift}
            
            Run negative-shifted circuit $\hat{A} \hat{U}_k(\theta_k-s)|\phi\rangle$ with $N_s$ times, collect outcomes $\{\hat{b}_{j,k,i}\}_{i=1}^{N_s}$\;
            $r_- \leftarrow \frac{1}{N_s}\sum_{i=1}^{N_s} \hat{b}_{j,k,i}$\ \tcp*{Estimator for negative shift}
            %$r_+ \leftarrow \langle\phi|\hat{U}_k(\theta_k+s)^\dagger \hat{A} \hat{U}_k(\theta_k+s)|\phi\rangle$\ \tcp*{Estimate with $N_s$ shots}
            
            %$r_- \leftarrow \langle\phi|\hat{U}_k(\theta_k-s)^\dagger \hat{A} \hat{U}_k(\theta_k-s)|\phi\rangle$\ \tcp*{Estimate with $N_s$ shots}
            
            $\mathbf{g}^{(j)}_k \leftarrow \frac{\Omega_k}{2} (r_+ - r_-)$ \tcp*{Parameter-shift rule estimator}

            %$\mathbf{g}^{(j)}_k \leftarrow \bar{\mathbf{g}}^{(j)}_k / \max\left(1, \frac{|\bar{\mathbf{g}}^{(j)}_k|}{C_{clip}}\right)$\ \tcp*{Clip the estimator}
            
        }
        %\tcp{Clip the L2 norm of the gradient estimate}
        %$\bar{\mathbf{g}}_j \leftarrow \mathbf{g}_j / \max\left(1, \frac{\|\mathbf{g}_j\|_2}{C_{clip}}\right)$\;
        Append $\mathbf{g}^{(j)}$ to $\mathcal{G}$\;
    }
    %\tcp{Aggregate, add noise, and update}
    $\hat{\mathbf{g}}_{B} \leftarrow \sum_{\mathbf{g}^{(j)} \in \mathcal{G}} \mathbf{g}^{(j)}$\;
    Sample noise: $\mathbf{z} \sim \mathcal{N}(0, \sigma^2 \Delta^2 \mathbf{I}_K)$;
    $\tilde{\mathbf{g}} \leftarrow \frac{1}{B} \left( \hat{\mathbf{g}}_{B} + \mathbf{z} \right)$ ;
    $\boldsymbol{\theta}_{t+1} \leftarrow \boldsymbol{\theta}_t - \textbf{\text{lr}} \cdot \tilde{\mathbf{g}}$\;
}
\Return $\boldsymbol{\theta}_T$
    \end{algorithm*}

\subsection{Single-Shot Measurement Variance under Noisy Environment}
\label{sec:quantum_noise_error_bound}

%In this section, we first relate the single-shot measurement variance to the probability distribution of the output state. Next, we analyze how depolarizing noise, a common form of quantum noise, alters this distribution. Finally, based on the noise-affected distribution, we establish a guaranteed lower bound on the single-shot measurement variance that holds uniformly over all inputs $x$.

Our analysis begins with a general quantum circuit that prepares the state $|\psi(x)\rangle$ from a given input $x$. Thus, the following theories and bounds for the general circuit apply directly to both positive- and negative-shifted circuits in Algorithm~\ref{alg:dp-prs}. Let $O$ be an observable with a spectral decomposition $O = \sum_i \lambda_i P_i$, where $\lambda_i$ are the distinct eigenvalues and $P_i$ are the projectors to the corresponding eigenspaces. The expectation of $O$ with respect to the output is
$
\langle O \rangle_x = \bra{\psi(x)} O \ket{\psi(x)}$. This expectation is practically estimated through repeated measurements (shots). 
A single shot yields one of the eigenvalues $\lambda_i$, with probability
\begin{equation}
    p_i(x) = \Pr(\hat{a} = \lambda_i) = \bra{\psi(x)} P_i \ket{\psi(x)}
\end{equation}
Performing $N_s$ independent shots on identically prepared states produces i.i.d. outcomes $\hat{a} \in \{\lambda_i\}$ drawn from the distribution $\{p_i(x)\}$. 

We denote the variance of a single-shot measurement by $\eta_{\mathrm{shot}}^2(x)$. 
It is given by:
\begin{equation}
    \begin{aligned}
    \eta_{\mathrm{shot}}^2(x) 
    &= \mathbb{E}[\hat{a}^2] - (\mathbb{E}[\hat{a}])^2 \\
    &= \sum_i \lambda_i^2 p_i(x) - \left( \sum_i \lambda_i p_i(x) \right)^2
\end{aligned}
\end{equation}

By definition, $\sigma^2_{\text{shot}}$ is a lower bound on the single-shot variance for both the positive- and negative-shifted circuits. 
This interpretation remains valid if we generalize the definition by assigning:
\begin{equation}
    \sigma^2_{\text{shot}} := \inf_{x} \, \eta_{\mathrm{shot}}^2(x)
\end{equation}

We observe that if there exists some $x$ and $i$ such that $p_i(x) = 1$, the single-shot variance $\sigma_{\text{shot}}$ becomes trivially zero. Consequently, without this inherent statistical noise, from Theorem~\ref{thm:main}, \NAMEA loses its primary advantage, as the required artificial noise $\sigma$ cannot be reduced. In practice, however, quantum noise in quantum circuits prevents this situation from occurring. 

We now consider the effect of depolarizing noise, which is a common noise in quantum circuits. Under this noise model, the output state becomes mixed, 
which leads to a non-trivial lower bound on $\sigma_{\text{shot}}$.

To represent quantum states in a noisy environment, we employ the density matrix formalism. 
The ideal state is given by $\rho_{\text{ideal}}(x) = \ket{\psi(x)}\bra{\psi(x)}$. 
Suppose this state passes through a global depolarizing channel $\mathcal{E}_{\alpha}$ of strength $\alpha \in [0,1]$ before measurement. 
The resulting (noisy) state is
\begin{equation}
    \rho_{\text{noisy}}(x) = \mathcal{E}_{\alpha}\big(\rho_{\text{ideal}}(x)\big) 
    = (1-\alpha)\rho_{\text{ideal}}(x) + \alpha \frac{I}{d},
\end{equation}
where $I$ is the identity operator and $d$ is the Hilbert space dimension. 
The state $\frac{I}{d}$ is known as \emph{uniform state}. 
When measured with respect to any observable, the outcomes are distributed uniformly, i.e., 
$p_i^{\text{uniform}} = \tfrac{1}{d}$ for all $i$. 
Thus, the single-shot variance under the uniform state is a constant:
\begin{equation}
    \sigma_{\mathrm{uniform}}^2 
    = \left(\frac{1}{d}\sum_i \lambda_i^2\right) 
      - \left(\frac{1}{d}\sum_i \lambda_i\right)^2
\end{equation}

\begin{theorem}
    The single-shot variance of a measurement of an observable $O$ on this noisy state, $\eta_{\mathrm{shot}}^2(x)$, is lower-bounded by:
    \begin{equation}
        \eta_{\mathrm{shot}}^2(x) \ge \alpha \cdot \sigma_{\mathrm{uniform}}^2
    \end{equation}
    \label{thm:shot_variance_lower_bound}
\end{theorem}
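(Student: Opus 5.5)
The plan is to use concavity of the variance as a function of the outcome distribution. The measurement statistics of $\rho_{\text{noisy}}(x)$ are an affine mixture of those of $\rho_{\text{ideal}}(x)$ and of $I/d$.

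First, I write the outcome probabilities of the noisy state. By linearity of the trace,
\begin{equation*}
p_i^{\text{noisy}}(x) = \operatorname{Tr}\big(P_i\,\rho_{\text{noisy}}(x)\big) = (1-\alpha)\,p_i(x) + \alpha\,\tfrac{1}{d}\operatorname{Tr}(P_i).
\end{equation*}
So the single-shot outcome distribution is the mixture $q = (1-\alpha)p + \alpha u$, where $p$ is the ideal distribution and $u$ is the uniform-state distribution. Here $u$ weights each eigenvalue by its multiplicity over $d$. Reading the eigenvalue list $\{\lambda_i\}$ in $\sigma_{\mathrm{uniform}}^2$ with multiplicity, this is exactly $u_i = 1/d$ as the paper states.

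Next, I view the variance as the functional $V(q) = \sum_i \lambda_i^2 q_i - \big(\sum_i \lambda_i q_i\big)^2$ and show it is concave in $q$. The first term is linear in $q$, and the second is minus the square of a linear function. For the mixture, a direct computation gives
\begin{equation*}
V\big((1-\alpha)p+\alpha u\big) = (1-\alpha)V(p) + \alpha V(u) + \alpha(1-\alpha)\big(\mu_p-\mu_u\big)^2,
\end{equation*}
where $\mu_p = \sum_i \lambda_i p_i$ and $\mu_u = \sum_i \lambda_i u_i$ are the two means. This is the law of total variance for a two-component mixture: with probability $1-\alpha$ the outcome is drawn from $p$, and with probability $\alpha$ it is drawn from $u$. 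I expect this identity to be the only real computation. It amounts to expanding $\big((1-\alpha)\mu_p + \alpha\mu_u\big)^2$ and regrouping.

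Finally, I drop the nonnegative terms. Since $V(p) = \eta^2_{\text{ideal}}(x) \ge 0$ and the cross term is a square, we get $\eta_{\mathrm{shot}}^2(x) = V(q) \ge \alpha V(u) = \alpha\,\sigma_{\mathrm{uniform}}^2$. The bound holds uniformly in $x$, so it also bounds the infimum, giving $\sigma^2_{\text{shot}} \ge \alpha\,\sigma^2_{\mathrm{uniform}}$ for use in Theorem~\ref{thm:main}.

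The main obstacle is bookkeeping rather than analysis. The paper writes $p_i^{\text{uniform}} = 1/d$ with $\lambda_i$ called distinct, but degenerate eigenvalues would actually get weight $\operatorname{Tr}(P_i)/d$. I will state that sums run over eigenvalues counted with multiplicity, so that $\sigma^2_{\mathrm{uniform}} = \frac{1}{d}\operatorname{Tr}(O^2) - \big(\frac{1}{d}\operatorname{Tr}O\big)^2$. With that convention the concavity argument goes through unchanged.
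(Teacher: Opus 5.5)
Your proposal is correct and follows essentially the same route as the paper's proof: write $p_i^{\text{noisy}}(x) = (1-\alpha)p_i^{\text{ideal}}(x) + \alpha/d$, expand the variance into $(1-\alpha)\sigma^2_{\text{ideal}}(x) + \alpha\sigma^2_{\mathrm{uniform}} + \alpha(1-\alpha)\bigl(\mathbb{E}[\hat{a}]_{\text{ideal}} - \mathbb{E}[\hat{a}]_{\text{uniform}}\bigr)^2$, and drop the two nonnegative terms. Your multiplicity convention is actually more careful than the paper, whose step $\operatorname{Tr}(P_i)=1$ silently assumes rank-one projectors; that assumption excludes degenerate observables such as the binary ones used in the experiments, whereas writing $\sigma^2_{\mathrm{uniform}} = \frac{1}{d}\operatorname{Tr}(O^2) - \bigl(\frac{1}{d}\operatorname{Tr}O\bigr)^2$ makes the argument valid in general.
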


Theorem~\ref{thm:shot_variance_lower_bound} indicates that in the presence of noise, 
the single-shot variance for any input $x$ is lower-bounded by 
$\alpha \cdot \sigma_{\mathrm{uniform}}^2$. 
Intuitively, the depolarizing channel $\mathcal{E}_\alpha$ mixes the ideal state with the maximally mixed state, 
whose variance is maximized. 
As a result, the variance of the noisy state is always guaranteed to exceed a non-trivial threshold. 
Applying the noise channel $\mathcal{E}_\alpha$ to the outputs of the positive- and negative-shifted circuits in \NAMEA, 
we conclude that
$
    \sigma_{\mathrm{shot}}^2 \;\ge\; \alpha \cdot \sigma_{\mathrm{uniform}}^2,
$
thus ensuring a meaningful lower bound. 
This, in turn, reduces the amount of artificial noise $\sigma^2$ that needs to be injected (see Corollary~\ref{corollary:depo noise}).

\begin{corollary}
    The presence of physical depolarizing noise on the quantum circuit (i.e., $\alpha > 0$) directly reduces the amount of artificial noise $\sigma$ required by \NAMEA algorithm.
    \label{corollary:depo noise}
\end{corollary}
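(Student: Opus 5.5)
The plan is to chain Theorem~\ref{thm:shot_variance_lower_bound} into the noise requirement of Theorem~\ref{thm:main}, and then show that this requirement is monotone in $\sigma_{\mathrm{shot}}^2$.

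First, apply Theorem~\ref{thm:shot_variance_lower_bound} to every positive- and negative-shifted circuit run in Algorithm~\ref{alg:dp-prs}. Each such circuit has the form $\hat{A}\hat{U}_k(\theta_k\pm s)|\phi\rangle$ followed by the channel $\mathcal{E}_\alpha$, so it is a special case of the general state $|\psi(x)\rangle$ considered in that theorem. This yields $\eta_{\mathrm{shot}}^2(x)\ge \alpha\,\sigma_{\mathrm{uniform}}^2$ uniformly over inputs $x$, shifts, and parameter indices $k$. Taking the infimum gives $\sigma_{\mathrm{shot}}^2\ge \alpha\,\sigma_{\mathrm{uniform}}^2$.

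Next, I need $\sigma_{\mathrm{uniform}}^2>0$. This holds whenever $O$ is not a multiple of the identity, i.e.\ $\lambda_{\max}>\lambda_{\min}$, because $\sigma_{\mathrm{uniform}}^2$ is the variance of the eigenvalue list under the uniform distribution. The uniform-distribution expression should be read with $\lambda_i$ listed with multiplicity (weight $\mathrm{tr}P_i/d$); it is still a nondegenerate variance. If $O$ were trivial, then $\Delta=0$ and no noise is needed at all, so this case can be excluded.

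Then, substitute the bound into Theorem~\ref{thm:main}. The sufficient condition
\[
\sigma^2 \ge \max\!\left(0,\; C_{\mathrm{DP}}-\frac{2B\sigma_{\mathrm{shot}}^2}{N_s(\lambda_{\max}-\lambda_{\min})^2}\right)
\]
has a right-hand side that is nonincreasing in $\sigma_{\mathrm{shot}}^2$. Hence it suffices to choose
\[
\sigma^2 \ge \max\!\left(0,\; C_{\mathrm{DP}}-\frac{2B\alpha\sigma_{\mathrm{uniform}}^2}{N_s(\lambda_{\max}-\lambda_{\min})^2}\right).
\]
For $\alpha>0$ this is strictly smaller than the noiseless requirement $C_{\mathrm{DP}}$, which is what one obtains at $\alpha=0$ where only the trivial bound $\sigma_{\mathrm{shot}}^2\ge 0$ is available. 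The reduction is strict whenever $C_{\mathrm{DP}}>0$, and it grows linearly in $\alpha$ until the requirement hits zero.

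The main obstacle is interpretive rather than technical. One must argue that the Gaussian/CLT hypothesis of Lemma~\ref{lemma:iterationDP} still applies under the noisy measurement distribution; the noisy outcomes are still i.i.d.\ bounded, so the same large-$N_s$ assumption covers them. One must also make precise that ``reduces'' means a reduction relative to the required $\sigma$ computed without a guaranteed shot-variance lower bound. I would state the corollary as this strict inequality of sufficient noise levels.
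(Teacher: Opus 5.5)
Your proposal is correct and follows the paper's own argument, which is the one sketched just before the corollary: apply Theorem~\ref{thm:shot_variance_lower_bound} to every positive- and negative-shifted circuit to get $\sigma_{\mathrm{shot}}^2 \ge \alpha\,\sigma_{\mathrm{uniform}}^2$, then use that the noise requirement in Theorem~\ref{thm:main} is nonincreasing in $\sigma_{\mathrm{shot}}^2$. Your additional checks are sound refinements that the paper leaves implicit: $\sigma_{\mathrm{uniform}}^2>0$ needs $\lambda_{\max}>\lambda_{\min}$, and counting eigenvalues with multiplicity repairs the paper's rank-one assumption $\mathrm{Tr}(P_i)=1$; strictness needs $C_{\mathrm{DP}}>0$; and the large-$N_s$ hypothesis must still hold for the noisy outcomes.
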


\begin{figure*}[t]
    \centering
    \includegraphics[width=0.8\linewidth]{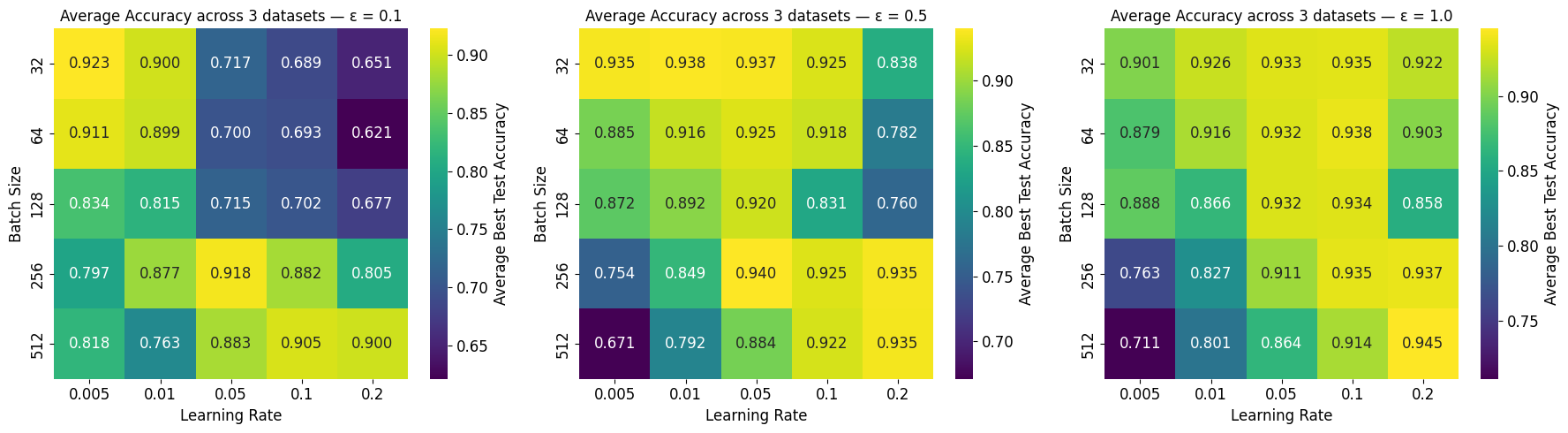}
    \caption{ Average test accuracy on 3 datasets using Q-ShiftDP w.r.t batch size, learning rates and $\varepsilon$.}
    \label{fig:average}
\end{figure*}

\subsection{Estimation of Single-Shot Measurement Variance by Samples}

The central guarantee of Theorem~\ref{thm:main} relies on the single-shot variance, $\sigma^2_{\mathrm{shot}}$, to reduce the required artificial noise. A global lower bound on this variance can be conservative. The actual variance for many samples may be much higher than this worst-case floor. %That presents a privacy advantage that a global bound cannot exploit. In this section, 
Thus, we propose an adaptive algorithm that replaces the conservative bound with an empirical estimate of shot-noise variance from measurement outcomes. This approach allows us to tailor the amount of added privacy noise based on the overall randomness of each sample's measurements. To make this approach practical and simplify the analysis, we assume all generator frequencies are identical, $\Omega_k = \Omega \forall k$. This is a common scenario in variational circuits built from repeating layers of the same learnable gates (e.g., $R_Y$).

%In Theorem~\ref{thm:main}, the artificial noise multiplier $\sigma^2$ needs to be large enough to guarantee DP, which in turn depends on the minimum per-sample shot noise $\inf_x \eta__{\mathrm{shot}}^2_{\mathrm{shot}}(x)$. In most case, the per-sample noise $\eta__{\mathrm{shot}}^2_{\mathrm{shot}}(x)$ is not zero \nhat{(need plot to show this)}, but its infimum cannot be lower-bounded without some additional assumptions, for example as in~\ref{sec:quantum_noise_error_bound}. In this section, instead of choosing a fixed global noise multiplier based on a global lower bound, we propose to add noise \emph{adaptively} on a per-sample basis. In particular, we compute an estimator $\hat{\eta__{\mathrm{shot}}}^2_{\mathrm{shot}}(x)$ for each sample using the observations. This allows us to take advantage of the inherent noise induced by the quantum measurement process to reduce the noise needed to maintain privacy, thus improving model utility.

%Consider the same setup from Section~\ref{sec:quantum_noise_error_bound}. 
%We first examine the estimation of the single-shot variance. As being discussed in Section~\ref{sec:quantum_noise_error_bound}, performing independent measurements on identically prepared states is equivalent to sampling from an unknown categorical distribution with the single-shot variance $\eta__{\mathrm{shot}}^2$. Before any DP-related noise is added, the sampling process itself induces statistical noise in the the observed value for $\langle O \rangle_x$ used for gradient computation. 

Given an input $x_j$ and weight $\theta_k$, let the true single-shot variances of the positive- and negative-shifted circuits be $\eta_{j,k,+}^2$ and $\eta_{j,k,-}^2$, respectively. We define the \emph{total batch shot-noise variance}, $\bm{\eta}_{B}^2$, as the sum of all single-shot variances from both the positive and negative-shifted circuits for every sample and trainable weight in a batch $B$, i.e., $\bm{\eta}_{B}^2 = \sum_j \sum_k \sum_{\tau \in \pm} \eta^2_{j,k,\tau}$. We can achieve the same privacy guarantee by assigning a batch-specific noise level $\sigma_B$, rather than adding a global noise $\sigma$ as in Algorithm~\ref{alg:dp-prs}, such that:
%Given the sum of true single-shot variances in a batch $B$ as $\bm{\eta}_{B}^2 = \sum_j \sum_{\tau \in \pm} \eta^2_{j,\tau}$, based on the proof of Lemma~\ref{lemma:iterationDP}, we can present the variance of sum of gradients in a batch $B$ exactly as $\operatorname{Var}[\mathbf{\tilde{g}}] = \frac{ \bm{\eta}_{B}^2}{4B^2N_s}\left(\sum_{k=1}^K \Omega_k^2\right) + \frac{\sigma^2 \Delta^2}{B^2}$. As a result,  the DP is guaranteed if for every batch $B$, we add an amount of artificial noise $\sigma_B$ satisfying:
\begin{equation}
    \sigma^{2}_B \ge  \max\left(0, C_{\text{DP}} - \frac{\Omega^2\bm{\eta}_{B}^2}{4N_s \Delta^2} \right)
\end{equation}
%Since the estimation procedures for the two cases are analogous, we use the unified notation $\eta_{\pm}^2$ to present the method. 
However, $\bm{\eta}_B^2$ is unknown a priori. Based on the outcomes of the positive- and negative-shifted circuits, we aim to construct an estimator $\hat{\bm{\eta}}_B^2$ for the total true variance $\bm{\eta}_B^2$. This estimator is then used to determine the batch-specific artificial noise $\sigma_B^2$.
%\nguyendo{I should denote 2 complex formula terms as 2 auxiliary variables to avoid margin violation}
%The purpose of this variance estimator is to preserve differential privacy while minimizing unnecessary noise injection. 
Ideally, the privacy guarantee level remains intact if the estimator is conservative, i.e., $\bm{\eta}_B^2 \ge \hat{\bm{\eta}}_B^2$. %At the same time,  
We also want the estimator to be as close as possible to the true variance, thereby reducing %in order to reduce 
the amount of artificial noise added.

We now describe our variance estimation technique, which is based on Lemma~\ref{lemma:pivot_quantitity}. 
This lemma guarantees, with high probability, that $\bm{\eta}_B^2$ is lower-bounded by a function of sample variances and sample fourth central moments, both of which are computable from measurement outcomes. 
Given sets of outcomes for each sample $j$ and weight $k$ in the batch, we denote the computable sample variances as $\bar{\eta}^2_{j,k,\pm}$ and the sample fourth central moments as $\bar{\mu}_{4,j,k,\pm}$. 
Our variance estimator is then formulated as:
\begin{equation}
    \hat{\bm{\eta}}_B^2 = \bar{\bm{\eta}}_B^2 - z_\beta\sqrt{\sum_{j=1}^B \sum_{k=1}^K \sum_{\tau\in \pm} \frac{1}{N_s}\left(\bar{\mu}_{4,j,k,\tau} - (\bar{\eta}_{j,k,\tau}^2)^2\right)}
    \label{eq:estimation_variance}
\end{equation}
where $\bar{\bm{\eta}}_B^2 = \sum_{j=1}^B \sum_{k=1}^K\sum_{\tau\in \{+,-\}} \bar{\eta}^2_{j,k,\tau}$.

\begin{comment}
\begin{lemma}
\label{lem:pivot_quantitity}
Given the set of observations $\{\hat{a}_n\}_{i=1}^{N_s}$ (or $\{\hat{b}_n\}_{i=1}^{N_s}$), let $\bar{\eta}_{\pm}^2$ and $\bar{\mu}^{(4)}_{\pm}$ be the sample variance and sample fourth central moment from these observations, respectively, and let $z_\beta$ be the critical value for the normal distribution at significance level $\beta$. Assuming that the true fourth central moment $\bar{\mu}^{(4)}_{\pm}$ is finite, then:
$$
\operatorname{Pr}\left(\eta_{\pm}^2 \geq \bar{\eta}_{\pm}^2 - z_\beta\sqrt{\frac{1}{N_s}\left(\bar{\mu}^{(4)}_{\pm} - (\bar{\eta}_{\pm}^2)^2\right)}\right) \to 1 - \beta
$$
as $N_s \to \infty$.
\end{lemma}
\end{comment}

\begin{comment}
\begin{lemma}
\label{lem:pivot_quantitity}
    Let $c_1, \dots, c_{n}$ be independent and identically distributed (i.i.d.) samples drawn from a distribution with true variance $\eta^2$ and a finite true fourth central moment $\mu_4$. Let $\bar{\eta}^2_{n}$ and $\bar{\mu}_{4, n}$ be the sample variance and sample fourth central moment, respectively, computed from these observations. For a chosen significance level $\beta$, let $z_\beta$ be the corresponding critical value of the standard normal distribution.

Then, the following probabilistic lower bound holds:
\begin{equation}
    \Pr\left(\eta^2 \geq \bar{\eta}^2_{n} - z_\beta\sqrt{\frac{1}{n}\left(\bar{\mu}_{4, n} - (\bar{\eta}^2_{n})^2\right)}\right) \to 1 - \beta
\end{equation}
as the number of samples $n \to \infty$.
\end{lemma}
\end{comment}

\begin{lemma}
\label{lemma:pivot_quantitity}
Let there be $L$ independent groups, indexed by $i=1, \dots, L$. For each group $i$, let $y_{i,1}, \dots, y_{i,n} \sim \text{i.i.d. } Y_i$ be $n$ observations. Let $\eta_i^2$, $\bar{\eta}_i^2$, $\mu_{4,i}$ and $\bar{\mu}_{4,i}$ be the true variance, sample variance, true fourth central moment, and sample fourth central moment for the $i$-th group. Let $\eta^2_{\textrm{batch}} = \sum_{i=1}^L \eta^2_i$ and $\bar{\eta}_{\textrm{batch}}^2 = \sum_{i=1}^L \bar{\eta}_i^2$. Assuming all $\mu_{4,i}$ are finite and given a significance level $\beta$ with critical value $z_\beta$, then:
$$
P\left(\eta^2_{\textrm{batch}} \geq \bar{\eta}^2_{\textrm{batch}} - z_\beta\sqrt{\sum_{i=1}^L \frac{1}{n}\left(\bar{\mu}_{4,i} - (\bar{\eta}_i^2)^2\right)}\right) \to 1 - \beta
$$
as $L \to \infty, n \to \infty$.
\end{lemma}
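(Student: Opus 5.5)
The plan is to build an asymptotically normal pivot for the total sample variance $\bar{\eta}^2_{\textrm{batch}}$, studentize it with a consistent estimate of its variance, and read off the one-sided normal quantile.

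\textbf{Step 1: asymptotics of each sample variance.} For a fixed group $i$, the delta method applied to the first two sample moments gives
\[
\sqrt{n}\,(\bar{\eta}_i^2 - \eta_i^2) \Rightarrow \mathcal{N}\!\left(0, \mu_{4,i} - \eta_i^4\right).
\]
This uses only finiteness of $\mu_{4,i}$. It also yields the exact expansion
\[
\bar{\eta}_i^2 - \eta_i^2 = \frac{1}{n}\sum_{m}\left[(y_{i,m}-\mathbb{E}Y_i)^2 - \eta_i^2\right] - (\bar{y}_i - \mathbb{E}Y_i)^2 + O(1/n)\ \text{(from Bessel's correction)}.
\]
Call the first term $W_i$: it is the average of $n$ i.i.d. mean-zero terms with variance $(\mu_{4,i}-\eta_i^4)/n$.

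\textbf{Step 2: CLT for the batch sum.} Since the groups are independent, the $W_i$ are independent. Set $s_L^2 = \sum_i (\mu_{4,i}-\eta_i^4)/n$. I would apply the Lindeberg--Feller CLT to the triangular array $\{W_i/s_L\}_{i\le L}$ to conclude that $\sum_i W_i / s_L \Rightarrow \mathcal{N}(0,1)$ as $L,n\to\infty$.

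This step needs a uniform condition to hold. I will assume the $Y_i$ have a common bounded support. In the application this is automatic, since outcomes are eigenvalues in $[\lambda_{\min},\lambda_{\max}]$. I will also assume $\inf_i(\mu_{4,i}-\eta_i^4)>0$ so that $s_L^2 \gtrsim L/n$. Under these assumptions Lyapunov's condition is immediate.

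\textbf{Step 3: the remainder terms are negligible.} This is the main obstacle. The remainder $R=\sum_i[-(\bar{y}_i-\mathbb{E}Y_i)^2 + O(1/n)]$ has mean of order $L/n$ and must be $o_p(s_L)=o_p(\sqrt{L/n})$. That forces $\sqrt{L/n}\to 0$, i.e.\ $L=o(n)$ along the joint limit.

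To handle this, I would state the joint limit with this rate restriction. In the application $L=2BK$ and $N_s$ is large, so the restriction is natural. Alternatively, one can note that using the unbiased sample variance removes the bias exactly, so that $\mathbb{E}R=0$. Then $\operatorname{Var}(R)=O(L/n^2)=o(s_L^2)$, which removes the rate condition entirely. I would try this unbiased-estimator route first.

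\textbf{Step 4: studentize.} Next I show that
\[
\hat{s}_L^2=\sum_i \frac{1}{n}\left(\bar{\mu}_{4,i}-(\bar{\eta}_i^2)^2\right)
\]
satisfies $\hat{s}_L^2/s_L^2\to 1$ in probability. Each $\bar{\mu}_{4,i}-(\bar{\eta}_i^2)^2$ is a smooth function of sample moments up to order four, so it has bias $O(1/n)$ and variance $O(1/n)$, uniformly under bounded support. Summing over groups and dividing by $s_L^2 \asymp L/n$ gives relative error $O_p(1/\sqrt{Ln}+1/n)\to 0$.

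\textbf{Conclusion.} Slutsky's lemma then gives $(\bar{\eta}^2_{\textrm{batch}}-\eta^2_{\textrm{batch}})/\hat{s}_L \Rightarrow \mathcal{N}(0,1)$. Hence
\[
P\big(\eta^2_{\textrm{batch}} \ge \bar{\eta}^2_{\textrm{batch}} - z_\beta \hat{s}_L\big)=P\big(Z\le z_\beta\big)+o(1)\to 1-\beta,
\]
where $z_\beta$ is the upper-$\beta$ critical value. This is the claimed statement.
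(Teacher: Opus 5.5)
Your argument is correct and follows the same route as the paper. Both use a CLT for the independent sum $\sum_i \bar{\eta}_i^2$, centered at $\eta^2_{\textrm{batch}}$ by unbiasedness of the $\frac{1}{n-1}$ sample variance, then Slutsky's lemma to studentize with $\sum_i \frac{1}{n}(\bar{\mu}_{4,i}-(\bar{\eta}_i^2)^2)$, then inversion of the one-sided normal event. Your version is the more rigorous one: your uniform bounded-support and nondegeneracy assumptions are what justify the paper's unchecked appeals to a CLT for a triangular array and to Slutsky for a sum of $L\to\infty$ plug-in estimators. Only $s_L^2 \ge c\,L/n$ (an averaged nondegeneracy) is actually needed, which matters in the application: for two-point outcomes $\mu_4-\eta^4=(b-a)^4p(1-p)(1-2p)^2$, which vanishes at $p=1/2$.
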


By using this probabilistic estimator, we propose an adaptive algorithm, named \NAMEB (see Algorithm~\ref{alg:adaptive-dp-prs} in Appendix~\ref{sec:add_sampling}), which tailors the amount of artificial noise for each mini-batch. The main idea is to calibrate the artificial noise per batch, $\sigma^2_B$, using a probabilistic estimator $\hat{\bm{\eta}}_B^2$. From Lemma~\ref{lemma:pivot_quantitity}, this estimator is conservative with high probability $\Pr(\bm{\eta}_B^2 \ge \hat{\bm{\eta}}_B^2) \to 1-\beta$. This reliance on a statistical estimate means the privacy guarantee itself is probabilistic. We can thus analyze the algorithm's behavior in two distinct \emph{Modes} for each batch. The mechanism operates in the \emph{Good Mode} when the estimate is a valid lower bound, ensuring the privacy guarantee holds (i.e., $\bm{\eta}_B^2 \geq \hat{\bm{\eta}}_B^2$). %\mt{These bad/good modes is quite confusing. What is bad, what is good Also do we have an adaptive alg in appendix?} 
%The assumed noise is less than or equal to the true noise, and the intended privacy guarantee holds. 
Conversely, in the \emph{Bad Mode}, the estimator is an overestimate, thereby compromising the privacy guarantee (i.e., $\bm{\eta}_B^2 < \hat{\bm{\eta}}_B^2$). Lemma~\ref{lemma:mechanism_probabilistic_dp} provides a formal tool to analyze the overall privacy of such a probabilistic mechanism by converting the probability of success into a standard DP guarantee.

\begin{lemma}
\label{lemma:mechanism_probabilistic_dp}
Let $\mathcal{M}$ be a randomized mechanism that operates in one of two modes. With probability $\gamma$, it operates in a ``Good Mode,'' where its execution is equivalent to a mechanism $\mathcal{M}_{good}$ that is $(\varepsilon, \delta)$-differentially private. With probability $1-\gamma$, it operates in a ``Bad Mode,'' where no privacy guarantee holds. Then, the overall mechanism $\mathcal{M}$ is $(\varepsilon, \gamma\delta + (1-\gamma))$-DP.
\end{lemma}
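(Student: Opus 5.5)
The plan is to condition on the mode and use the law of total probability. Write the mode as a random variable $M \in \{\text{good}, \text{bad}\}$ whose distribution does not depend on the dataset, so that $\Pr[M=\text{good}]=\gamma$ for every input. Fix adjacent datasets $D_1, D_2$ and any measurable $S \subseteq \mathcal{R}$. Then
\begin{equation*}
\Pr[\mathcal{M}(D_1)\in S] = \gamma \Pr[\mathcal{M}_{good}(D_1)\in S] + (1-\gamma)\Pr[\mathcal{M}(D_1)\in S \mid M=\text{bad}].
\end{equation*}

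First, I bound the bad-mode term trivially by $1-\gamma$, since a conditional probability is at most one. Second, in the good-mode term I apply the $(\varepsilon,\delta)$-DP guarantee of $\mathcal{M}_{good}$:
\begin{equation*}
\Pr[\mathcal{M}_{good}(D_1)\in S] \le e^{\varepsilon}\Pr[\mathcal{M}_{good}(D_2)\in S] + \delta .
\end{equation*}
This gives
\begin{equation*}
\Pr[\mathcal{M}(D_1)\in S] \le \gamma e^{\varepsilon}\Pr[\mathcal{M}_{good}(D_2)\in S] + \gamma\delta + (1-\gamma).
\end{equation*}
Third, I observe that $\gamma\Pr[\mathcal{M}_{good}(D_2)\in S] = \Pr[\mathcal{M}(D_2)\in S,\ M=\text{good}] \le \Pr[\mathcal{M}(D_2)\in S]$. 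Substituting yields exactly the $(\varepsilon,\gamma\delta+(1-\gamma))$-DP inequality.

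The main obstacle is the third step. The identity $\gamma\Pr[\mathcal{M}_{good}(D_2)\in S]=\Pr[\mathcal{M}(D_2)\in S, M=\text{good}]$ requires the good-mode probability $\gamma$ to be the same for $D_1$ and $D_2$. It also requires that, conditioned on the good mode, $\mathcal{M}$ is distributed as $\mathcal{M}_{good}$ on both datasets. In the adaptive setting, $\gamma$ is only a uniform lower bound, and the mode event, namely whether $\bm{\eta}_B^2\ge\hat{\bm{\eta}}_B^2$, depends on the data.

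To handle this, I would interpret the statement as saying that the good event has probability at least $\gamma$ under every dataset and that $\mathcal{M}$ coincides with $\mathcal{M}_{good}$ on that event. I would then rewrite the argument as
\begin{equation*}
\Pr[\mathcal{M}(D_1)\in S] \le \Pr[\mathcal{M}_{good}(D_1)\in S] + (1-\gamma).
\end{equation*}
This follows from a coupling of $\mathcal{M}(D_1)$ and $\mathcal{M}_{good}(D_1)$ that agree except on the bad event. Applying DP of $\mathcal{M}_{good}$ and then the symmetric coupling bound on $D_2$ gives an additive slack. That slack is at most $\delta+(1-\gamma)(1+e^{\varepsilon})$, which is weaker than the stated bound. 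I would therefore present the conditional (dataset-independent mode) version as the intended proof, and flag that this assumption is what yields the sharper $\gamma\delta+(1-\gamma)$.
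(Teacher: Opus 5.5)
Your proposal is correct and follows essentially the same route as the paper: condition on the mode via the law of total probability, bound the bad-mode term by $1-\gamma$, apply the $(\varepsilon,\delta)$-DP guarantee of $\mathcal{M}_{good}$ to the good-mode term, and finish with $\gamma\Pr[\mathcal{M}(D_2)\in S \mid \text{good}] \le \Pr[\mathcal{M}(D_2)\in S]$. The paper's proof relies on exactly the assumption you isolate: it takes $\Pr(E_{good,x})=\Pr(E_{good,y})=\gamma$ and treats the mechanism conditioned on the good event as $\mathcal{M}_{good}$ ``regardless of the input.'' Your caveat about a data-dependent, only asymptotically valid good event therefore bears on how the lemma is applied to the adaptive algorithm, not on the lemma's proof.
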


From Lemma~\ref{lemma:pivot_quantitity}, the probability of \emph{Good Mode} is $1-\beta$, while that of \emph{Bad Mode} is $\beta$. As discussed above, in \emph{Good Mode}, the privacy level remains $(\varepsilon, \delta)$. Consequently, our variance estimation technique attains the DP guarantee formalized in Theorem~\ref{thm:advanced_adaptive_privacy}. More details of \NAMEB are included in Appendix~\ref{sec:add_sampling}.

%$$
%\hat{\eta}^2_{+}(x) = \frac{1}{N_s - 1} \sum_{i=1}^{N_s} (\hat{a}_n - \bar{a})^2
%$$
%where $\bar{a} = \frac{1}{N_s}\sum_{i=1}^{N_s} \hat{a}_n$ is the sample mean of the outcomes. Similarly, the estimator, $\hat{\eta}^2_{-}(x)$, can be constructed from the outcomes $\{\hat{b}_i\}_{i=1}^{N_s}$ of the negative-shifted circuit.

%thus obtaining a probabilistic lower bound of the sample noise for a desired significance level. The following lemma formalizes this:
\begin{theorem}
\label{thm:advanced_adaptive_privacy}
For a mini-batch of size $B$, let $\hat{\bm{\eta}}_B^2$ be a probabilistic estimator for the true sum of all variances, $\bm{\eta}_B^2 = \sum_{j=1}^{B} \sum_{k=1}^K \sum_{\tau \in \pm} \eta_{j,k,\tau}^2$ such that it satisfies $\Pr(\bm{\eta}_B^2 \ge \hat{\bm{\eta}}_B^2) \to 1-\beta$ for a chosen significance level $\beta$.
\NAMEB calculates a noise multiplier for each batch $B$, $\sigma^2_B$, satisfying:
$$
\sigma^2_B \ge \max\left(0, C_{\text{DP}} - \frac{\Omega^2\hat{\bm{\eta}}_{B}^2}{4N_s \Delta^2} \right)
$$
where $C_{\text{DP}} = \ \left( c_2 \frac{q \sqrt{T \log(1/\delta)}}{\varepsilon} \right)^2$ with $c_2$ is a constant. \NAMEB achieves \textbf{$(\varepsilon, (1-\beta)\delta + \beta)$-DP}.
\end{theorem}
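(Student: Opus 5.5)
The plan is to reduce Theorem~\ref{thm:advanced_adaptive_privacy} to Theorem~\ref{thm:main} inside the ``Good Mode,'' and then use Lemma~\ref{lemma:mechanism_probabilistic_dp} to absorb the failure probability. We define the Good Mode event for a batch as $\mathcal{G}_B = \{\bm{\eta}_B^2 \ge \hat{\bm{\eta}}_B^2\}$. By hypothesis, which Lemma~\ref{lemma:pivot_quantitity} supplies for the estimator in \eqref{eq:estimation_variance}, we have $\Pr(\mathcal{G}_B) \to 1-\beta$.

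\emph{Step 1 (Good Mode implies enough total noise).} We first make explicit the variance accounting behind Lemma~\ref{lemma:iterationDP}. With $\Omega_k=\Omega$, coordinate $k$ of the summed gradient $\hat{\mathbf{g}}_B$ carries shot noise $\frac{\Omega^2}{4N_s}\sum_j(\eta_{j,k,+}^2+\eta_{j,k,-}^2)$. Summed over coordinates, the total shot variance is $\frac{\Omega^2}{4N_s}\bm{\eta}_B^2$. Expressed in units of $\Delta^2$, this is an effective noise multiplier $\frac{\Omega^2\bm{\eta}_B^2}{4N_s\Delta^2}$ that adds to $\sigma_B^2$. On $\mathcal{G}_B$, the chosen $\sigma_B$ then satisfies
\[
\sigma_B^2 + \frac{\Omega^2\bm{\eta}_B^2}{4N_s\Delta^2} \;\ge\; \sigma_B^2 + \frac{\Omega^2\hat{\bm{\eta}}_B^2}{4N_s\Delta^2} \;\ge\; C_{\text{DP}}.
\]
This is the same condition that makes Theorem~\ref{thm:main} hold, since there the shot term $\frac{2B\sigma_{\text{shot}}^2}{N_s(\lambda_{\max}-\lambda_{\min})^2}$ plays the identical role. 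So in Good Mode each step is a Gaussian mechanism with sensitivity $\Delta$ and effective multiplier at least $\sqrt{C_{\text{DP}}}$. The subsampling-plus-moments-accountant argument used for Theorem~\ref{thm:main} then gives $(\varepsilon,\delta)$-DP for the whole run.

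\emph{Step 2 (mixing).} The run as a whole is a mechanism that, with probability $\gamma=1-\beta$, behaves as the $(\varepsilon,\delta)$-DP Good-Mode mechanism and otherwise carries no guarantee. Applying Lemma~\ref{lemma:mechanism_probabilistic_dp} with $\gamma = 1-\beta$ yields $(\varepsilon,(1-\beta)\delta+\beta)$-DP, which is the claim.

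\emph{Main obstacle.} The delicate step is justifying Step 1 uniformly. Three issues arise. First, the covariance structure is not isotropic: shot noise may be unevenly spread across coordinates, while Theorem~\ref{thm:main} is phrased for an isotropic multiplier. I would handle this by noting that the Good-Mode condition is stated in trace form and, as in the paper's analysis, treating the total variance as the privacy-relevant quantity. Second, $\sigma_B$ is data-dependent, since it is computed from the same outcomes. Third, Good Mode must hold across all $T$ batches, not just one. I would take the paper's interpretation that the Good/Bad classification is applied per mechanism invocation with asymptotic probability $1-\beta$, so that Lemma~\ref{lemma:mechanism_probabilistic_dp} applies directly. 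I would remark that a fully rigorous union over $T$ steps would replace $\beta$ by $T\beta$.
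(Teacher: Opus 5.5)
Your proposal is essentially the paper's own proof. The paper likewise takes Good Mode to be the event $\bm{\eta}_B^2 \ge \hat{\bm{\eta}}_B^2$ (probability $\to 1-\beta$ by Lemma~\ref{lemma:pivot_quantitity}), argues in Appendix~\ref{sec:add_sampling} that on this event $\sigma_{\text{total}}^2 = \sigma_B^2 + \Omega^2\bm{\eta}_B^2/(4N_s\Delta^2) \ge C_{\text{DP}}$ so the accounting behind Theorem~\ref{thm:main} gives $(\varepsilon,\delta)$-DP, and then applies Lemma~\ref{lemma:mechanism_probabilistic_dp} with $\gamma = 1-\beta$.

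The obstacles you flag are genuine and the paper does not address them either; in particular it uses the per-batch probability $1-\beta$ as though it were the probability that all $T$ batches are good, so your $T\beta$ remark is right. Your proposed handling of the trace-form issue is also not actually a fix: $\mathbf{z}$ has per-coordinate variance $\sigma_B^2\Delta^2$ but trace $K\sigma_B^2\Delta^2$, so even perfectly isotropic shot noise would add only $\Omega^2\bm{\eta}_B^2/(4N_sK\Delta^2)$ to $\sigma_{\text{total}}^2$, i.e.\ your Step~1, like the paper's derivation, overcounts the shot-noise term by a factor of $K$.
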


\begin{figure*}[t]
  \centering
  \begin{subfigure}{0.33\linewidth}
    \centering
    \includegraphics[width=\linewidth]{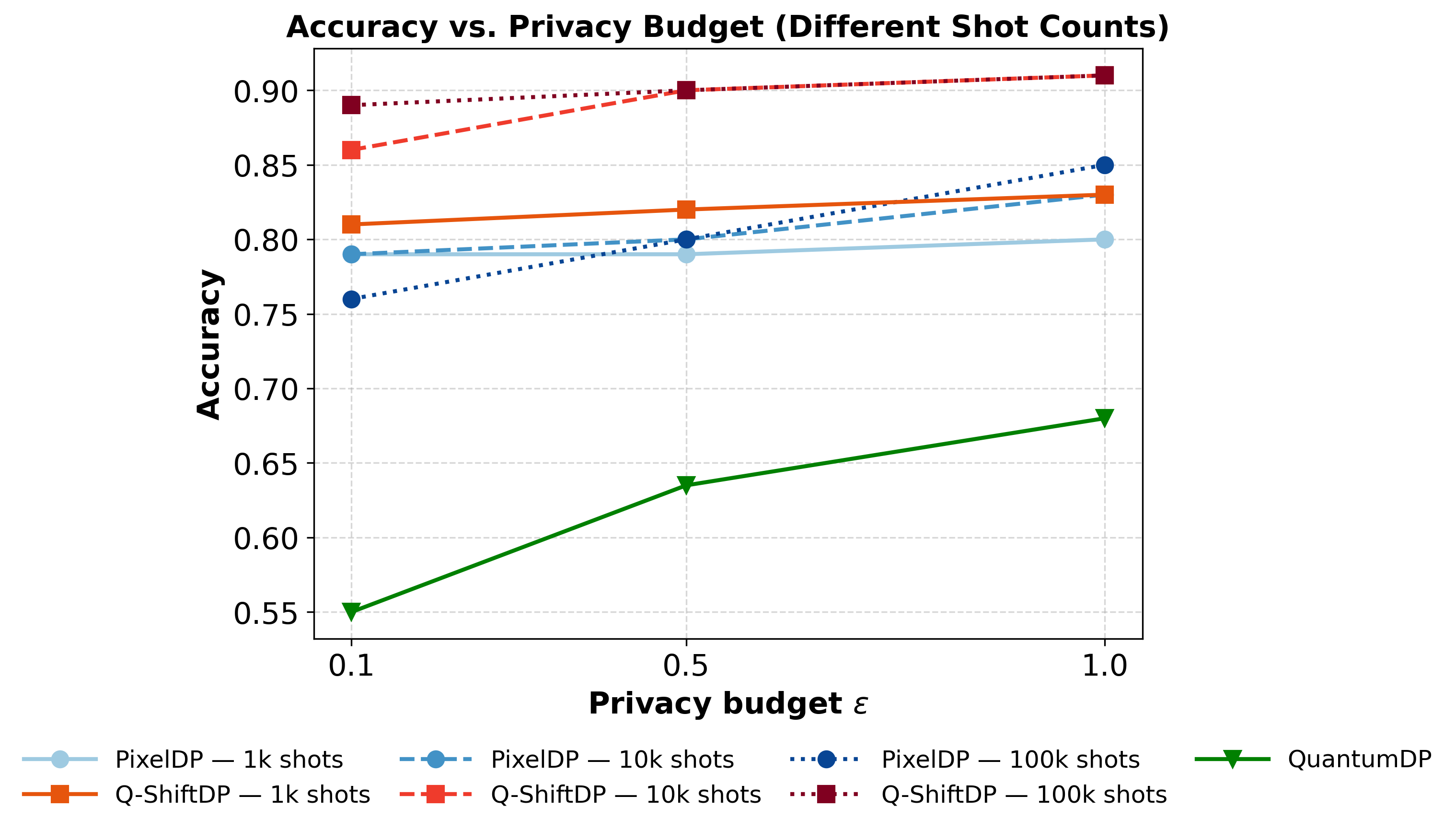}
    \caption{Bars \& Stripes}
    \label{fig:compare:qshift}
  \end{subfigure}\hfill
  \begin{subfigure}{0.33\linewidth}
    \centering
    \includegraphics[width=\linewidth]{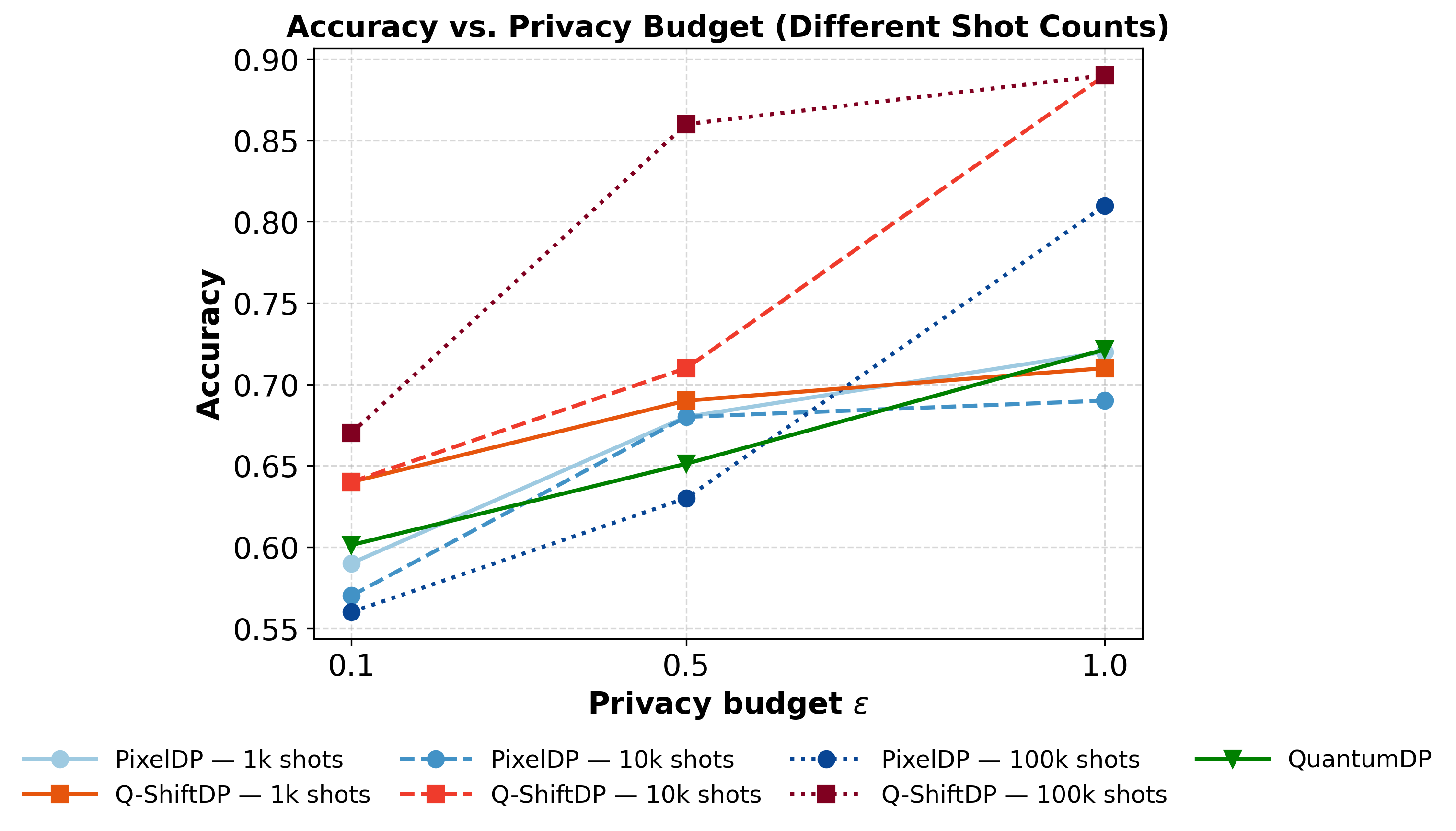}
    \caption{Downscaled MNIST}
    \label{fig:compare:qshift}
  \end{subfigure}\hfill
  \begin{subfigure}{0.33\linewidth}
    \centering
    \includegraphics[width=\linewidth]{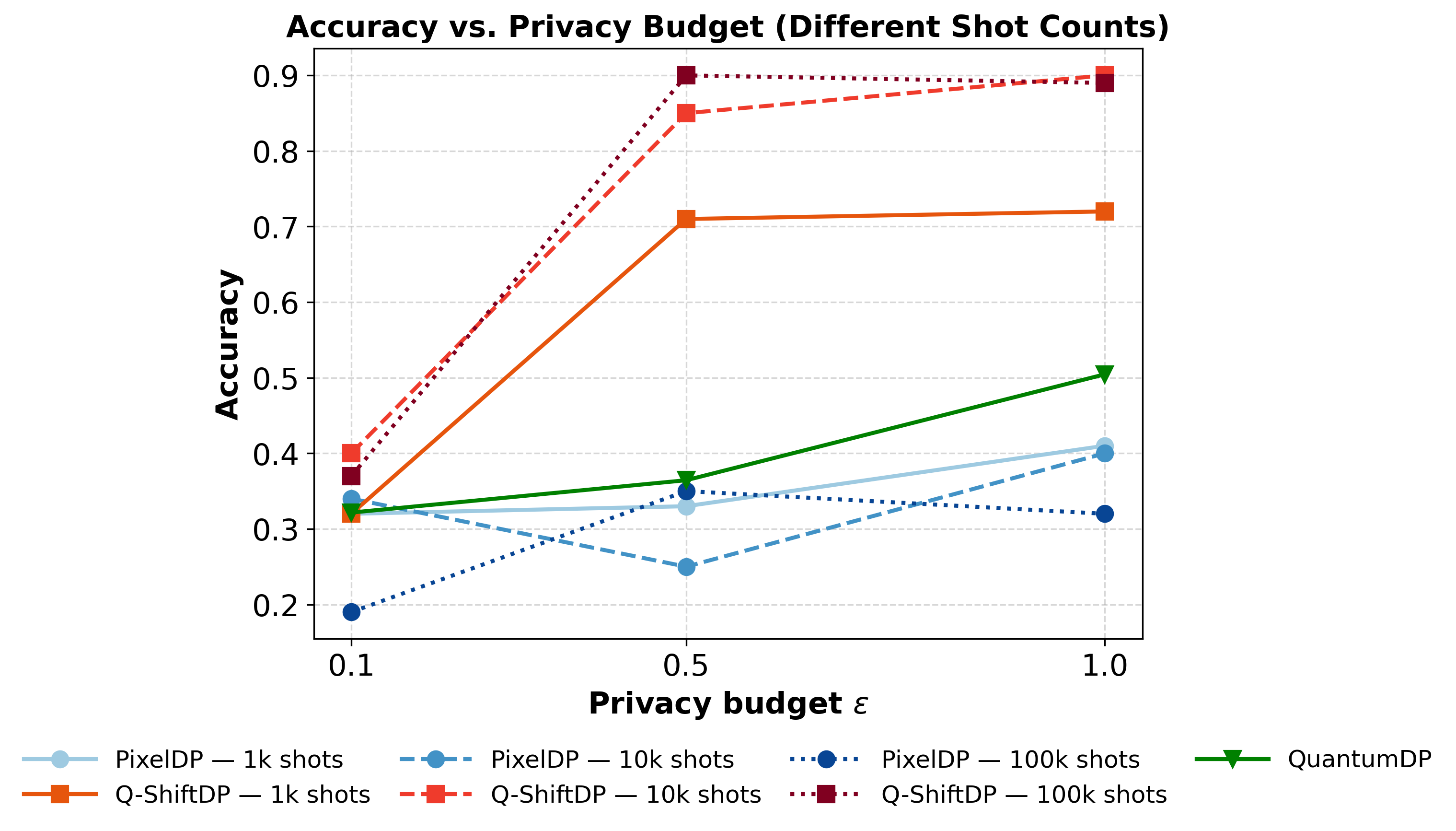}
    \caption{Binary Blobs}
    \label{fig:compare:pixel}
  \end{subfigure}
  \caption{Comparison of model utility under equal privacy budgets: Q-ShiftDP vs. Pixel-level DP vs. QuantumDP in various datasets.}
  \label{fig:compareall}
\end{figure*}

% --- Second figure ---
\begin{figure}[t]
    \centering
    \includegraphics[width=0.60\linewidth]{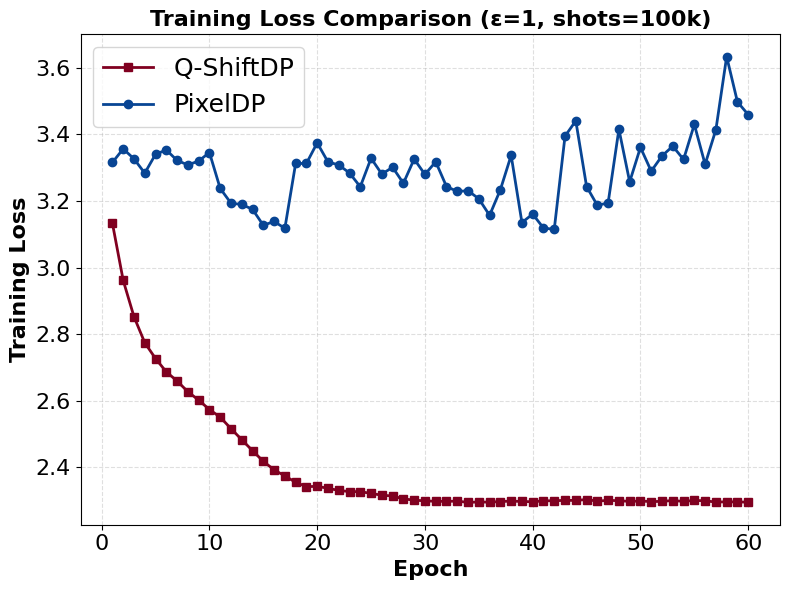}
    \caption{Training loss curves for Q-ShiftDP and Pixel-level DP under $\varepsilon=1$ with 100k shots.}
    \label{fig:trainloss}
\end{figure}

\section{Experimental Results}
We evaluate the performance and privacy–utility trade-off of Q-ShiftDP on three representative datasets from recent QML benchmark suites:  Bars \& Stripes, Binary Blobs and Downscaled MNIST \cite{bowles2024better, recio2025train}. These benchmarks are carefully constructed, low-dimensional tasks that align with quantum circuit constraints and allow meaningful, reproducible evaluation of quantum models. Our classifier is a 4-qubit quantum neural network with a single Strongly Entangling Layer for Bars and Stripes dataset and 5 layers for other datasets. Details about datasets and models are given in Appendix \ref{appx:dataset_model}. 
% Our classifier is a 4-qubit quantum neural network. For Bars and Stripes, we use a single Strongly Entangling Layer. For Binary Blobs and Downscaled MNIST, we use a 5-layer Strongly Entangling Layers ansatz. 

\textbf{Performance of Q-ShiftDP under Varying Hyperparameter Settings.}  
The performance of differentially private optimizers is highly sensitive to hyperparameter choices \cite{bu2023automatic}. Unlike DP-SGD, Q-ShiftDP leverages the parameter-shift rule and the bounded spectrum of quantum observables to derive an exact analytic sensitivity bound, removing the need to tune the clipping norm. Figure~\ref{fig:average} reports the average test accuracy across all datasets for different combinations of learning rate, batch size, and privacy budget, with detailed results provided in Appendix~\ref{appx:hyperparam_tuning}. We observe that large batch sizes paired with moderately high learning rates consistently yield the most stable performance. This result is consistent with previous DP-SGD literature \cite{de2022unlocking}. Based on these findings, all subsequent experiments use a batch size of $B=512$ and a learning rate of $\text{lr}=0.2$. In this experiment, we use analytic expectation values rather than limiting the number of measurement shots. The effect of finite shots on Q-ShiftDP’s performance is further analyzed in Appendix~\ref{appx:shots}.

\textbf{Comparison to Existing Privacy Baselines} 
We compare Q-ShiftDP against PixelDP~\cite{lecuyer2019certified} and QuantumDP~\cite{Hirche2023InfoTheoryQDP}. While PixelDP enforces differential privacy by injecting classical noise directly into the input data, QuantumDP achieves DP by relying solely on intrinsic quantum noise (i.e., depolarizing channels). Importantly, we view Q-ShiftDP as a theoretically optimized adaptation of DP-SGD for QML. Thus, we do not include a direct empirical comparison to DP-SGD, as Q-ShiftDP theoretically dominates DP-SGD when applied to QML.

Figure~\ref{fig:compareall} reports test accuracy as a function of the privacy budget $\varepsilon$ for different shot counts across three benchmark datasets. For QuantumDP, results are reported using $100\text{k}$ shots. Across all datasets and experimental settings, Q-ShiftDP consistently outperforms both PixelDP and QuantumDP under the same privacy budget and shot count. The performance gap is particularly pronounced in the strong privacy regime ($\varepsilon = 0.1$), where input perturbation and quantum-only noise lead to substantial utility degradation. Figure~\ref{fig:trainloss} further compares the training dynamics of Q-ShiftDP and PixelDP. Under $\varepsilon = 1$ with $100\text{k}$ shots, Q-ShiftDP exhibits significantly more stable training behavior than PixelDP. Taken together, these results suggest that jointly accounting for both noise sources, as done in Q-ShiftDP, yields a more favorable privacy–utility trade-off in private quantum learning.

\textbf{Additional Experiments.} We provide further experimental results and analyses in the Appendix~\ref{appx:extra_exp}.

\section{Conclusion}

In this work, we introduce \NAMEA, a privacy mechanism designed for quantum machine learning. By proving that quantum gradients are inherently bounded and their intrinsic shot noise is a quantifiable privacy resource, our method eliminates the need for manual clipping and provably reduces the amount of artificial noise required. We established formal privacy and utility guarantee, further enhanced by an adaptive technique that tailors the noise to the specifics of each batch. Experimental results show that \NAMEA outperforms the existing benchmark in multiple settings.

\newpage

\bibliography{bibliography}
\bibliographystyle{plainnat}

%%%%%%%%%%%%%%%%%%%%%%%%%%%%%%%%%%%%%%%%%%%%%%%%%%%%%%%%%%%%
\section*{Checklist}

\begin{enumerate}

  \item For all models and algorithms presented, check if you include:
  \begin{enumerate}
    \item A clear description of the mathematical setting, assumptions, algorithm, and/or model. [Yes]
    \item An analysis of the properties and complexity (time, space, sample size) of any algorithm. [Yes]
    \item (Optional) Anonymized source code, with specification of all dependencies, including external libraries. [Yes, link to anonymized source code is included in the appendix.]
  \end{enumerate}

  \item For any theoretical claim, check if you include:
  \begin{enumerate}
    \item Statements of the full set of assumptions of all theoretical results. [Yes]
    \item Complete proofs of all theoretical results. [Yes]
    \item Clear explanations of any assumptions. [Yes]     
  \end{enumerate}

  \item For all figures and tables that present empirical results, check if you include:
  \begin{enumerate}
    \item The code, data, and instructions needed to reproduce the main experimental results (either in the supplemental material or as a URL). [Yes]
    \item All the training details (e.g., data splits, hyperparameters, how they were chosen). [Yes]
    \item A clear definition of the specific measure or statistics and error bars (e.g., with respect to the random seed after running experiments multiple times). [Yes, except error statistics are omitted due to the high computational cost of running repeated quantum simulations.]
    \item A description of the computing infrastructure used. (e.g., type of GPUs, internal cluster, or cloud provider). [Yes]
  \end{enumerate}

  \item If you are using existing assets (e.g., code, data, models) or curating/releasing new assets, check if you include:
  \begin{enumerate}
    \item Citations of the creator If your work uses existing assets. [Yes]
    \item The license information of the assets, if applicable. [Not Applicable]
    \item New assets either in the supplemental material or as a URL, if applicable. [Not Applicable]
    \item Information about consent from data providers/curators. [Not Applicable]
    \item Discussion of sensible content if applicable, e.g., personally identifiable information or offensive content. [Not Applicable]
  \end{enumerate}

  \item If you used crowdsourcing or conducted research with human subjects, check if you include:
  \begin{enumerate}
    \item The full text of instructions given to participants and screenshots. [Not Applicable]
    \item Descriptions of potential participant risks, with links to Institutional Review Board (IRB) approvals if applicable. [Not Applicable]
    \item The estimated hourly wage paid to participants and the total amount spent on participant compensation. [Not Applicable]
  \end{enumerate}

\end{enumerate}

\clearpage
\appendix
\thispagestyle{empty}

% Supplementary material: To improve readability, you must use a single-column format for the supplementary material.
\onecolumn
\aistatstitle{Supplementary Materials}% Reset theorem-like environments
\setcounter{lemma}{0}
\setcounter{theorem}{0}

\section{Additional Backgrounds}
\label{sec:add_background}
 %On the other hand, the evolution of quantum states is governed by quantum operations, which describe how a state changes under physical processes. In ideal closed systems, these operators can be described as unitary matrices $U$, where the state transformation is given by $\ket{\psi'} = U\ket{\psi}$. 

\noindent \textbf{Quantum Systems in Noisy Environment.} Realistic quantum systems are inherently open, interacting with their surrounding environment. Thus, it is subject to noise, decoherence, and operational imperfections. In this context, the description of quantum states is generalized from pure states to density matrices. A density matrix can represent mixed states, where the system is described as a statistical ensemble of pure states $\{p_i, \ket{\psi_i}\}$ as follow:
\begin{equation}
    \rho = p_i \ket{\psi_i}\bra{\psi_i}, \quad \sum_i p_i = 1
\end{equation}

The dynamics of such states are modeled by quantum channels, which are completely positive, trace-preserving (CPTP) maps acting on $\rho$. A general channel $\mathcal{E}$ follows a Kraus representation,
\begin{equation}
    \mathcal{E}(\rho) = \sum_i K_i \rho K_i^\dagger, 
\quad \sum_i K_i^\dagger K_i = I
\end{equation}
where $\{K_i\}$ are Kraus operators. Depolarizing channel is an example of quantum noisy channel. Specifically, in $d$-dimensional system, it replaces the input state with the maximally mixed state $I/d$ with probability $\alpha$, and leaves it unchanged with probability $1-\alpha$, i.e., 
\begin{equation}
    \mathcal{E}_{\alpha}(\rho) = (1-\alpha)\rho + \alpha \frac{I}{d}
\end{equation}

\section{Proofs}
\label{sec:proofs}

\begin{lemma}
    Given the objective function $C(\boldsymbol{\theta}) = \langle\psi_0|\hat{U}(\boldsymbol{\theta})^{\dagger}\hat{O}\hat{U}(\boldsymbol{\theta})|\psi_0\rangle$ where $\hat{O}$ is an observable with a minimum and maximum eigenvalues $\lambda_{\text{min}}$ and $\lambda_{\text{max}}$ respectively and $\hat{U}(\boldsymbol{\theta})$ is a product of unitary operators. The value of $C(\boldsymbol{\theta})$ is bounded by these same extremal eigenvalues for all possible parameter values $\boldsymbol{\theta}$.
$$
\lambda_{\min} \le C(\boldsymbol{\theta}) \le \lambda_{\max}
$$
\label{lemma:appx:eigenvalues}
\end{lemma}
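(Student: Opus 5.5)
The plan is to reduce the statement to a Rayleigh-quotient bound for the Hermitian operator $\hat{O}$ evaluated on a single normalized vector. Define $\ket{\psi(\boldsymbol{\theta})} = \hat{U}(\boldsymbol{\theta})\ket{\psi_0}$, so that $C(\boldsymbol{\theta}) = \bra{\psi(\boldsymbol{\theta})}\hat{O}\ket{\psi(\boldsymbol{\theta})}$.

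First, I will check that $\ket{\psi(\boldsymbol{\theta})}$ is a unit vector. Each factor $e^{i\theta_k G_k}$ is unitary because $G_k$ is Hermitian. A product of unitaries is unitary, so $\hat{U}(\boldsymbol{\theta})^\dagger\hat{U}(\boldsymbol{\theta}) = I$ and $\braket{\psi(\boldsymbol{\theta})|\psi(\boldsymbol{\theta})} = \braket{\psi_0|\psi_0} = 1$. This holds for every $\boldsymbol{\theta}$, which is what makes the bound uniform in the parameters.

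Second, I will apply the spectral decomposition of $\hat{O}$ already used in the background section. Write $\hat{O} = \sum_i \lambda_i P_i$, where the $P_i$ are orthogonal projectors with $\sum_i P_i = I$. Set $p_i = \bra{\psi(\boldsymbol{\theta})}P_i\ket{\psi(\boldsymbol{\theta})} = \|P_i\ket{\psi(\boldsymbol{\theta})}\|^2 \ge 0$. Then
\begin{equation*}
\sum_i p_i = \braket{\psi(\boldsymbol{\theta})|\psi(\boldsymbol{\theta})} = 1,
\qquad
C(\boldsymbol{\theta}) = \sum_i \lambda_i p_i .
\end{equation*}
So $C(\boldsymbol{\theta})$ is a convex combination of the eigenvalues $\lambda_i$. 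Bounding each $\lambda_i$ between $\lambda_{\min}$ and $\lambda_{\max}$ and summing against the weights $p_i$ gives $\lambda_{\min} \le C(\boldsymbol{\theta}) \le \lambda_{\max}$.

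There is no real obstacle here. The only points needing care are the nonnegativity of the $p_i$, which comes from $P_i = P_i^\dagger P_i$, and the normalization $\sum_i p_i = 1$, which relies on the unitarity established in the first step. An equivalent route is to note that $\hat{O} - \lambda_{\min} I$ and $\lambda_{\max} I - \hat{O}$ are positive semidefinite and take their expectation in the unit vector $\ket{\psi(\boldsymbol{\theta})}$. I would mention this alternative in one line, and note that the same argument carries over verbatim to mixed states via $\operatorname{Tr}(\rho \hat{O})$. That extension is what the later noisy-environment analysis needs.
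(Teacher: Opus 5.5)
Your proof is correct and follows essentially the same route as the paper, which also expands $\hat{U}(\boldsymbol{\theta})\ket{\psi_0}$ in an orthonormal eigenbasis of $\hat{O}$, writes $C(\boldsymbol{\theta}) = \sum_j |c_j|^2 \lambda_j$ as a convex combination of eigenvalues, and bounds it by $\lambda_{\min}$ and $\lambda_{\max}$. The differences are cosmetic: you group the weights by eigenspace projectors $P_i$ rather than rank-one eigenvectors, and you derive $\sum_i p_i = 1$ from unitarity, whereas the paper simply takes this as the normalization of the evolved state.
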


\begin{proof}
    In a finite-dimensional space, $\hat{O}$ can be expressed in terms of its eigenvalues $\lambda_i$ and corresponding orthonormal eigenvectors $\ket{e_i}$ as:
    $$\hat{O} = \sum_{i} \lambda_i \ket{e_i}\bra{e_i}$$

    where $\hat{O}\ket{e_i} = \lambda_i \ket{e_i}$ and $\braket{e_i|e_j} = \delta_{ij}$. The set of eigenvectors $\{\ket{e_i}\}$ forms an orthonormal basis in the Hilbert space $\mathcal{H}$.

    We call the state of the system after the unitary evolution as $\ket{\psi(\boldsymbol{\theta})} = \hat{U}(\boldsymbol{\theta})\ket{\psi_0}$. Then, the objective function can be rewritten as:
    $$
    C(\boldsymbol{\theta}) = \braket{\psi(\boldsymbol{\theta}) | \hat{O} | \psi(\boldsymbol{\theta})}
    $$

    On the other hand, we can express $\ket{\psi(\boldsymbol{\theta})}$ in the eigenbasis of $\hat{O}$:
    $$
    \ket{\psi(\boldsymbol{\theta})} = \sum_i c_i \ket{e_i}, \quad \text{where } c_i = \braket{e_i|\psi(\boldsymbol{\theta})}
    $$

    By substitution, we have:
    \begin{align*}
C(\boldsymbol{\theta}) &= \braket{\psi(\boldsymbol{\theta}) | \left( \sum_j \lambda_j \ket{e_j}\bra{e_j} \right) | \psi(\boldsymbol{\theta})} \\
&= \sum_j \lambda_j \Braket{\psi(\boldsymbol{\theta}) | e_j}\Braket{e_j | \psi(\boldsymbol{\theta})} \\
&= \sum_j \lambda_j |\braket{e_j | \psi(\boldsymbol{\theta})}|^2 \\
&= \sum_j |c_j|^2 \lambda_j
\end{align*}

This shows that $C(\boldsymbol{\theta})$ is a convex combination of the eigenvalues of $\hat{O}$ because $|c_j|^2 \ge 0$ and $\sum_j |c_j|^2 = 1$ (the normalization condition of quantum state $\ket{\psi(\boldsymbol{\theta})}$). As a result, we have the upperbound and lowerbound for $C(\boldsymbol{\theta})$ as following.

For the lower bound:
$$
C(\boldsymbol{\theta}) = \sum_j |c_j|^2 \lambda_j \ge \sum_j |c_j|^2 \lambda_{\min} = \lambda_{\min} \left( \sum_j |c_j|^2 \right) = \lambda_{\min}
$$

For the upperbound:
$$
C(\boldsymbol{\theta}) = \sum_j |c_j|^2 \lambda_j \le \sum_j |c_j|^2 \lambda_{\max} = \lambda_{\max} \left( \sum_j |c_j|^2 \right) = \lambda_{\max}
$$

\end{proof}

\begin{comment}
\begin{lemma}[L2-Sensitivity of the Quantum Gradient]
    For each sample $(x_j,y_j)$, we define the objective function $C(\boldsymbol{\theta}) = \Braket{\psi_0(x_j) | \hat{U}(\boldsymbol{\theta})^{\dagger} \hat{O}_{y_j} \hat{U}(\boldsymbol{\theta}) | \psi_0(x_j)}$ where the observable $\hat{O}_{y_j}$ has eigenvalues in range from $\lambda_{\min}$ to $\lambda_{\max}$. Let the gradient vector $\mathbf{g}^{(j)}$ be computed as in Algorithm~\ref{alg:dp-prs}, where each component is given by $$\mathbf{g}_k = \frac{\Omega_k}{2} (r_+ - r_-)$$ The \textbf{L2 norm} of this gradient vector is strictly bounded by the sensitivity $\Delta$:
    $$\|\mathbf{g}\|_2 \le \Delta \quad \text{where} \quad \Delta = \frac{\lambda_{\max} - \lambda_{\min}}{2} \sqrt{\sum_{k=1}^{K} \Omega_k^2}$$
    \label{lemma:appx:L2_sensitivity}
\end{lemma}
\end{comment}

\begin{lemma}[$\ell_2$-Sensitivity of the Quantum Gradient]
    We define an objective function $C(\boldsymbol{\theta}) = \Braket{\psi_0(x) | \hat{U}(\boldsymbol{\theta})^{\dagger} \hat{O} \hat{U}(\boldsymbol{\theta}) | \psi_0(x)}$ where the observable $\hat{O}$ has eigenvalues in range from $\lambda_{\min}$ to $\lambda_{\max}$ and parametric unitary $\hat{U}(\boldsymbol{\theta}) = \prod_{k=1}^{K} e^{i\theta_k G_k}$ with frequencies $\Omega_k$. 
Suppose the gradient of $C(\boldsymbol{\theta})$ with respect to parameter $\theta_k$ is computed using the parameter-shift rule:
\begin{equation}
        \mathbf{g}_k = \tfrac{\Omega_k}{2}\,(r_+ - r_-),
\end{equation}
%\nguyendo{should be inline equation}
where $r_+$ and $r_-$ are expectation values of the observable under a shifted amount $s = \frac{\pi}{2\Omega_k}$.  
    The \textbf{$\ell_2$-norm} of this gradient is strictly bounded by the sensitivity $\Delta$:
    \begin{equation*}
        \|\mathbf{g}_k\|_2 \le \Delta \quad \text{where} \quad \Delta = \frac{\lambda_{\max} - \lambda_{\min}}{2} \sqrt{\sum_{k=1}^{K} \Omega_k^2}
    \end{equation*}
    \label{lemma:appx:L2_sensitivity}
\end{lemma}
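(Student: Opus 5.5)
The plan is to bound each component of the gradient vector separately and then sum the squares. The statement writes $\|\mathbf{g}_k\|_2$, but the intended object is the full vector $\mathbf{g} = (\mathbf{g}_1,\dots,\mathbf{g}_K)$ built in Algorithm~\ref{alg:dp-prs}. I will prove $\|\mathbf{g}\|_2 \le \Delta$, which immediately gives the componentwise statement as well, since $|\mathbf{g}_k| \le \|\mathbf{g}\|_2$.

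\textbf{Step 1: bound the shifted values.} Fix $k$. The shifted value $r_+$ is obtained by replacing $\theta_k$ with $\theta_k + s$ in $\hat{U}(\boldsymbol{\theta})$, so it is an expectation of the form $\langle\psi_0(x)|\hat{U}'^{\dagger}\hat{O}\hat{U}'|\psi_0(x)\rangle$ with $\hat{U}'$ still a product of unitaries. Lemma~\ref{lemma:appx:eigenvalues} therefore gives $r_+ \in [\lambda_{\min},\lambda_{\max}]$, and the same holds for $r_-$.

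The same bound holds for the finite-shot estimators. Each shot outcome is an eigenvalue of $\hat{O}$, so each lies in $[\lambda_{\min},\lambda_{\max}]$, and the empirical average of the shots stays in that interval.

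\textbf{Step 2: bound each component.} Since both $r_+$ and $r_-$ lie in an interval of length $\lambda_{\max}-\lambda_{\min}$, we get $|r_+ - r_-| \le \lambda_{\max}-\lambda_{\min}$. Hence
\[
|\mathbf{g}_k| = \tfrac{\Omega_k}{2}\,|r_+ - r_-| \le \tfrac{\Omega_k}{2}\,(\lambda_{\max}-\lambda_{\min}),
\]
using $\Omega_k > 0$.

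\textbf{Step 3: combine the components.} Squaring and summing over $k$ gives
\[
\|\mathbf{g}\|_2^2 \le \frac{(\lambda_{\max}-\lambda_{\min})^2}{4}\sum_{k}\Omega_k^2 .
\]
Taking square roots yields $\|\mathbf{g}\|_2 \le \Delta$.

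\textbf{Potential obstacle.} No step is genuinely hard. The only subtle point is the interpretation issue noted above, namely that the bound is meant for the full vector rather than a single component. A secondary point is that the bound must hold for the estimated values as well as the exact expectations; Step 1 covers this because averaging shot outcomes cannot leave the spectral range.
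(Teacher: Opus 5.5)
Your proof is correct and follows the paper's argument essentially step for step. You bound $r_\pm$ via Lemma~\ref{lemma:appx:eigenvalues}, bound each component by $\tfrac{\Omega_k}{2}(\lambda_{\max}-\lambda_{\min})$, and sum the squares over $k$ to get $\|\mathbf{g}\|_2\le\Delta$, which is exactly the paper's route; your extra remark that finite-shot averages also stay in $[\lambda_{\min},\lambda_{\max}]$ is a worthwhile addition, since the paper invokes Lemma~\ref{lemma:appx:eigenvalues} only for exact expectation values while Algorithm~\ref{alg:dp-prs} actually uses the empirical estimates.
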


\begin{proof}
    First, we establish a bound on each individual component of the gradient and then combine them to bound the L2 norm. The $k$-th component of the gradient is $\mathbf{g}^{(j)}_k = \frac{\Omega_k}{2} (r_+ - r_-)$. From Lemma~\ref{lemma:appx:eigenvalues}, we have:
    $$
    \lambda_{\min} \le r_+ \le \lambda_{\max} \quad \text{and} \quad \lambda_{\min} \le r_- \le \lambda_{\max}
    $$

    As a result, we have:
    $$
    |\mathbf{g}^{(j)}_k| = \left|\frac{\Omega_k}{2} (r_+ - r_-)\right| \le \frac{|\Omega_k|}{2} (\lambda_{\max} - \lambda_{\min})
    $$

    The squared L2 norm is the sum of the squares of the components. Thus, we have:
    \begin{align*}
    \|\mathbf{g}^{(j)}\|_2^2 &= \sum_{k=1}^{d} \left( \mathbf{g}^{(j)}_k \right)^2 \\
    &\le \sum_{k=1}^{d} \left( \frac{|\Omega_k|}{2} (\lambda_{\max} - \lambda_{\min}) \right)^2 \\
    &= \left( \frac{\lambda_{\max} - \lambda_{\min}}{2} \right)^2 \sum_{k=1}^{d} \Omega_k^2
    \end{align*}

    Finally, we can conclude:
    $$
    \|\mathbf{g}^{(j)}\|_2 \le \frac{\lambda_{\max} - \lambda_{\min}}{2} \sqrt{\sum_{k=1}^{d} \Omega_k^2}
    $$
\end{proof}

\begin{comment}
\begin{lemma}
Assume that the finite-shot estimators $\hat{r}_+$ and $\hat{r}_-$ are independent random variables drawn from Gaussian distributions with the variance at least $\sigma_{\mathrm{shot}}^2/N_s$. Then, Algorithm~\ref{alg:dp-prs} ensures $(\varepsilon, \delta)-\operatorname{DP}$ per iteration with:
$$\varepsilon = \frac{\sqrt{2 \ln \frac{1.25}{\delta}}}{\sqrt{\frac{ 2B\sigma_{\mathrm{shot}}^2}{N_s(\lambda_{\max} - \lambda_{\min})^2} + \sigma^2}} $$
\label{lemma:appx:iterationDP}
\end{lemma}
\end{comment}

\begin{lemma}
Suppose $N_s$ is large enough so that the finite-shot estimators $\hat{r}_+$ and $\hat{r}_-$ are independent random variables drawn from Gaussian distributions with the variance at least $\sigma_{\mathrm{shot}}^2/N_s$. Then, Algorithm~\ref{alg:dp-prs} ensures $(\varepsilon_0, \delta_0)-\operatorname{DP}$ per iteration with:
\begin{equation}
    \varepsilon_0 = \frac{\sqrt{2 \ln \frac{1.25}{\delta_0}}}{\sqrt{\frac{ 2B\sigma_{\mathrm{shot}}^2}{N_s(\lambda_{\max} - \lambda_{\min})^2} + \sigma^2}}
\end{equation}
\label{lemma:appx:iterationDP}
\end{lemma}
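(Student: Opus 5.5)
The plan is to treat one iteration of Algorithm~\ref{alg:dp-prs} as a Gaussian mechanism whose total noise is the sum of the injected noise $\mathbf{z}$ and the shot noise, and then invoke the classical Gaussian-mechanism bound $\varepsilon_0 = \sqrt{2\ln(1.25/\delta_0)}\,\Delta_{\mathrm{eff}}/\sigma_{\mathrm{eff}}$.

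\textbf{Output as mean plus Gaussian noise.} By hypothesis, each $\hat r_\pm$ is Gaussian with mean equal to the exact expectation value $r_\pm$ and variance at least $\sigma_{\mathrm{shot}}^2/N_s$. All $r_\pm$ are independent across samples $j$, coordinates $k$ and sign. Hence $\mathbf{g}^{(j)}_k = \frac{\Omega_k}{2}(\hat r_+ - \hat r_-)$ equals its exact parameter-shift value plus an independent Gaussian with variance at least $\frac{\Omega_k^2}{4}\cdot\frac{2\sigma_{\mathrm{shot}}^2}{N_s}$. Summing over the batch, $\hat{\mathbf g}_B$ is the exact batch gradient plus independent Gaussian noise in coordinate $k$ with variance at least $\frac{B\Omega_k^2\sigma_{\mathrm{shot}}^2}{2N_s}$. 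Adding $\mathbf z$ gives coordinate variance at least $\frac{B\Omega_k^2\sigma_{\mathrm{shot}}^2}{2N_s} + \sigma^2\Delta^2$. The factor $1/B$ is post-processing and is dropped.

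\textbf{Sensitivity.} Changing one sample changes the exact summed gradient by at most $\Delta$ in $\ell_2$. To see this, apply Lemma~\ref{lemma:appx:L2_sensitivity} with Lemma~\ref{lemma:appx:eigenvalues} to the exact expectation values; this gives $\Delta$ under add/remove adjacency. Replacement adjacency would need $2\Delta$, and I will adopt the add/remove convention implicitly used in the statement.

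\textbf{Main obstacle: reducing to isotropic noise.} The shot-noise covariance is diagonal but not isotropic, and it is only lower bounded. The idea is to rescale to a common level. With $\Delta^2 = \frac{(\lambda_{\max}-\lambda_{\min})^2}{4}\sum_k\Omega_k^2$, write the shot term as $\sigma_{\mathrm{shot}\text{-eff}}^2\Delta^2$, where
\[
\sigma_{\mathrm{shot}\text{-eff}}^2 = \frac{2B\sigma_{\mathrm{shot}}^2\,\Omega_k^2}{N_s(\lambda_{\max}-\lambda_{\min})^2\sum_l\Omega_l^2}.
\]
Matching the stated formula needs $\Omega_k^2/\sum_l\Omega_l^2$ replaced by $1$. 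I would handle this by a per-coordinate sensitivity argument: the $k$-th coordinate has sensitivity $\Delta_k = \frac{\Omega_k}{2}(\lambda_{\max}-\lambda_{\min})$, so each coordinate's noise-to-sensitivity ratio is at least $\frac{2B\sigma_{\mathrm{shot}}^2}{N_s(\lambda_{\max}-\lambda_{\min})^2} + \sigma^2\Delta^2/\Delta_k^2$. I would then whiten by $\mathrm{diag}(1/\Delta_k)$ and combine coordinates.

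This step is the delicate one, and I expect it may only go through cleanly after a careful accounting (e.g.\ sum of variances $\ge$) in the whitened coordinates. If it fails, the fallback is to state the result under the effective noise multiplier $\sigma_{\mathrm{eff}}^2 = \frac{2B\sigma_{\mathrm{shot}}^2}{N_s(\lambda_{\max}-\lambda_{\min})^2}+\sigma^2$, interpreted per coordinate.

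\textbf{Handling the variance inequality.} Because the shot variance is only bounded below, I split the Gaussian noise into the minimal part plus an extra independent Gaussian. The extra part is post-processing, and post-processing preserves DP. The Gaussian mechanism with multiplier $\sigma_{\mathrm{eff}}$ then yields the claimed $\varepsilon_0$.
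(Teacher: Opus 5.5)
Your diagnosis of the main obstacle is correct. The whitening repair does not close it, and the paper's proof does not close it either. Set $a=\frac{2B\sigma_{\mathrm{shot}}^2}{N_s(\lambda_{\max}-\lambda_{\min})^2}$ and $\Delta_k=\frac{\Omega_k}{2}(\lambda_{\max}-\lambda_{\min})$. At the variance floor, the summed output has diagonal noise covariance $\Sigma=\mathrm{diag}\bigl(a\Delta_k^2+\sigma^2\Delta^2\bigr)$. Whitening by $\mathrm{diag}(1/\Delta_k)$ does give every coordinate a noise multiplier of at least $\sqrt{a+\sigma^2}$. But the whitened shift $(v_k/\Delta_k)_k$ between neighbouring outputs can have all $K$ entries of modulus $1$, so its $\ell_2$ norm is up to $\sqrt{K}$, not $1$. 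The exact (Mahalanobis) accounting, with $w_k=\Omega_k^2/\sum_l\Omega_l^2$, gives
\[
\sup_{|v_k|\le\Delta_k} v^\top\Sigma^{-1}v=\sum_{k=1}^K\frac{w_k}{a w_k+\sigma^2}\;\ge\;\frac{1}{a+\sigma^2}.
\]
The inequality is strict as soon as $a>0$ and $K\ge 2$, since all $\Omega_k>0$. For equal frequencies the left side is $1/(a/K+\sigma^2)$, so shot noise contributes only $a/K$, not $a$, to the squared effective multiplier.

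Hence the stated $\varepsilon_0$ does not follow from the stated hypotheses. At best your route yields $\varepsilon_0=\sqrt{2\ln(1.25/\delta_0)}\,\bigl(\sum_k w_k/(a w_k+\sigma^2)\bigr)^{1/2}$, and the ``per coordinate'' fallback is a different statement. The paper reaches its formula by exactly the move you hint at (``sum of variances''). It adds the trace $\frac{\sigma_{\mathrm{shot}}^2}{2BN_s}\sum_k\Omega_k^2$ of the shot-noise covariance of $\tilde{\mathbf g}$ to the per-coordinate variance $\sigma^2\Delta^2/B^2$ of $\mathbf z/B$. It then applies the isotropic Gaussian mechanism with sensitivity $\Delta/B$. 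That silently replaces $\Omega_k^2/\sum_l\Omega_l^2$ by $1$, so it is not a valid template for you.

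Your final step has a second gap. The hypothesis only bounds the variance of $\hat r_\pm$ from below. The actual variance is $\eta_{\mathrm{shot}}^2(x)/N_s$, which depends on the sample; this is why $\sigma_{\mathrm{shot}}^2$ is defined as an infimum. Splitting off the excess as an independent Gaussian is therefore a data-dependent randomization, not post-processing. Two neighbouring batches then produce Gaussians with different covariances, which the Gaussian-mechanism bound does not cover. Centered Gaussians with very different variances are far apart in the $(\varepsilon,\delta)$ sense even with zero mean shift. To make this step work you need either a data-independent single-shot variance or a separate bound on the privacy loss from the one-sample change in covariance. The paper's proof passes over this point in the same way.
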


\begin{proof}
    We consider the batch-averaged gradient:
    $$\mathbf{\tilde{g}} = \frac{1}{B} \sum_{j=1}^B \mathbf{g}^{(j)} + \frac{1}{B} \mathbf{z}$$
    where $\mathbf{g}^{(j)}$ is the per-example gradient and $z \sim \mathcal{N}(0, \sigma^2 \Delta^2 \mathbf{I}_K)$ is the Gaussian noise added for DP.

    Because $\sum_{j=1}^B \mathbf{g}^{(j)}$ and $\mathbf{z}$ are independent, we have $\mathbf{\tilde{g}}$ is Gaussian with:
    $$\operatorname{Var}[\mathbf{\tilde{g}}] = \operatorname{Var}[\frac{1}{B} \sum_{j=1}^B \mathbf{g}^{(j)}] + \operatorname{Var}[\frac{1}{B} \mathbf{z}]$$.

    The estimated $k$-th component of the gradient is $\mathbf{g}_k^{(j)} = \frac{\Omega_k}{2}(r_+ - r_-)$. Its variance is:
    \begin{align*}
    \operatorname{Var}[\mathbf{g}_k^{(j)}] &= \operatorname{Var}\left[\frac{\Omega_k}{2}(r_+ - r_-)\right] = \left(\frac{\Omega_k}{2}\right)^2 \operatorname{Var}[r_+ - r_-] \\
    &= \frac{\Omega_k^2}{4} \left(\operatorname{Var}[r_+] + \operatorname{Var}[r_-]\right) \ge \frac{\Omega_k^2}{4} (2\sigma_{\mathrm{shot}}^2/N_s) = \frac{\Omega_k^2 \sigma_{\mathrm{shot}}^2}{2N_s}
    \end{align*}

    Then, the total variance of the statistical noise vector is the sum of the component variances:
    $$
    \operatorname{Var}[\mathbf{g}^{(j)}] = \sum_{k=1}^K \operatorname{Var}[\mathbf{g}^{(j)}_k] \ge \frac{\sigma_{\mathrm{shot}}^2}{2N_s} \sum_{k=1}^K \Omega_k^2
    $$

On the other hand, the variance of the explicitly added Gaussian noise is:
    $$\operatorname{Var}[\frac{1}{B} \mathbf{z}] = \frac{\sigma^2 \Delta^2}{B^2}$$
where $\Delta = \frac{\lambda_{\max} - \lambda_{\min}}{2} \sqrt{\sum_{k=1}^{K} \Omega_k^2}$

Applying the Gaussian mechanism with sensitivity $\Delta/B$ then gives per-iteration $(\varepsilon,\delta)$-DP~\cite{Dwork2006DifferentialP}:
$$
\varepsilon = \frac{\Delta/B}{\sqrt{\frac{ \sigma_{\mathrm{shot}}^2}{2BN_s}\left(\sum_{k=1}^K \Omega_k^2\right) + \frac{\sigma^2 \Delta^2}{B^2}}} \sqrt{2 \ln \frac{1.25}{\delta}} = \frac{\sqrt{2 \ln \frac{1.25}{\delta}}}{\sqrt{\frac{ 2B\sigma_{\mathrm{shot}}^2}{N_s(\lambda_{\max} - \lambda_{\min})^2} + \sigma^2}}
$$

\end{proof}

\begin{theorem}
    Consider Algorithm~\ref{alg:dp-prs} (\NAMEA) applied to a private dataset $S$ of size $N$, with mini-batch size $B$, sampling rate $q = B/N$, $T$ training iterations, and per-sample $\ell_2$-gradient sensitivity $\Delta$ (from Lemma~\ref{lemma:appx:L2_sensitivity}).
    
    Let the desired total privacy guarantee be $(\varepsilon, \delta)$, for any $\delta \in (0,1)$ and for a sufficiently small $\varepsilon$ (specifically, $\varepsilon < c_1 q^2 T$ for a universal constant $c_1$). The algorithm guarantees $(\varepsilon, \delta)$-DP if the chosen explicit noise multiplier $\sigma$ satisfies:
    \begin{equation}
        \sigma^2 \ge  \max\left(0, C_{\text{DP}} - \frac{2B \sigma_{\mathrm{shot}}^2}{N_s (\lambda_{\max}-\lambda_{\min})^2} \right)
    \end{equation}
    where $C_{\text{DP}} = \left( c_2 \frac{q \sqrt{T \log(1/\delta)}}{\varepsilon} \right)^2$ with $c_2$ is a constant.
    \label{thm:appx:total_privacy}
\end{theorem}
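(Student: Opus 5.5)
The plan is to reduce Theorem~\ref{thm:appx:total_privacy} to the moments accountant result of DP-SGD~\cite{DPSGD}, treating shot noise plus artificial noise as one Gaussian mechanism with an effective noise multiplier. That result (Theorem~1 there) states: there are constants $c_1,c_2$ such that, for sampling rate $q$, $T$ steps, per-sample sensitivity $\Delta$ and Gaussian noise $\mathcal{N}(0,\sigma_{\mathrm{eff}}^2\Delta^2\mathbf{I})$ added to the sum of per-sample gradients, the procedure is $(\varepsilon,\delta)$-DP for any $\varepsilon<c_1q^2T$ whenever $\sigma_{\mathrm{eff}}\ge c_2\, q\sqrt{T\log(1/\delta)}/\varepsilon$, i.e. $\sigma_{\mathrm{eff}}^2\ge C_{\text{DP}}$.

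\textbf{Step 1: effective noise multiplier.} Following the proof of Lemma~\ref{lemma:appx:iterationDP}, under the Gaussian (large-$N_s$) assumption each estimated component $\mathbf{g}^{(j)}_k$ is the exact PSR value plus independent Gaussian noise of variance at least $\Omega_k^2\sigma_{\mathrm{shot}}^2/(2N_s)$. Summing over the $B$ samples of the batch and adding $\mathbf{z}$, the noise on $\hat{\mathbf{g}}_B+\mathbf{z}$ is Gaussian with total variance at least $B\frac{\sigma_{\mathrm{shot}}^2}{2N_s}\sum_k\Omega_k^2+\sigma^2\Delta^2$. Since $\Delta^2=\frac{(\lambda_{\max}-\lambda_{\min})^2}{4}\sum_k\Omega_k^2$, this equals $\sigma_{\mathrm{eff}}^2\Delta^2$ with
\[
\sigma_{\mathrm{eff}}^2=\sigma^2+\frac{2B\sigma_{\mathrm{shot}}^2}{N_s(\lambda_{\max}-\lambda_{\min})^2},
\]
which is the same combination appearing in Lemma~\ref{lemma:appx:iterationDP}. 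The sensitivity of the noiseless sum is $\Delta$ by Lemma~\ref{lemma:appx:L2_sensitivity}.

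\textbf{Step 2: matching the accountant's hypotheses (main obstacle).} Two issues arise. First, the shot-noise covariance is diagonal with per-coordinate variance proportional to $\Omega_k^2$, not isotropic, and may exceed the lower bound. I handle this by post-processing: decompose the noise as an isotropic part of variance exactly $\sigma_{\mathrm{eff}}^2\Delta^2/K$-compatible with the mechanism's worst direction, plus an independent Gaussian remainder, which cannot hurt privacy. Concretely, the paper's own argument in Lemma~\ref{lemma:appx:iterationDP} treats the aggregate variance as the calibrating quantity, and I adopt the same convention, stating it as an assumption when $\Omega_k$ differ; it is exact when all $\Omega_k$ are equal. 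Second, the shot noise depends on the data through the circuit, so I only need that its variance is at least $\sigma_{\mathrm{shot}}^2/N_s$ for every input, which is the definition of $\sigma_{\mathrm{shot}}^2$. Larger, input-dependent variance is again viewed as extra independent Gaussian noise layered on a data-independent mechanism. This is the step where rigor is thinnest, and it relies on the Gaussian approximation.

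\textbf{Step 3: conclude.} Given the effective mechanism, invoke the moments accountant with multiplier $\sigma_{\mathrm{eff}}$ and Poisson/uniform subsampling rate $q$. The condition $\sigma_{\mathrm{eff}}^2\ge C_{\text{DP}}$ is equivalent to $\sigma^2\ge C_{\text{DP}}-\frac{2B\sigma_{\mathrm{shot}}^2}{N_s(\lambda_{\max}-\lambda_{\min})^2}$. Together with $\sigma^2\ge0$, this is exactly the stated $\max(0,\cdot)$ condition. The range condition $\varepsilon<c_1q^2T$ is inherited directly. Finally, note that the division by $B$ and the parameter update are post-processing, so they preserve $(\varepsilon,\delta)$-DP.
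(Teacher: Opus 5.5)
Your route is the same as the paper's: fold the shot noise into an effective multiplier $\sigma_{\mathrm{eff}}^2=\sigma^2+\frac{2B\sigma_{\mathrm{shot}}^2}{N_s(\lambda_{\max}-\lambda_{\min})^2}$ and invoke the moments-accountant theorem of Abadi et al. However, the step that produces $\sigma_{\mathrm{eff}}$ is wrong, and the paper's proof inherits the same slip from Lemma~\ref{lemma:appx:iterationDP}. There, $\operatorname{Var}[\mathbf{g}^{(j)}]$ is summed over coordinates while $\operatorname{Var}[\mathbf{z}/B]$ is taken per coordinate.

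\textbf{The factor of $K$.} In Step~1 you add the \emph{trace} of the shot-noise covariance, $B\frac{\sigma_{\mathrm{shot}}^2}{2N_s}\sum_k\Omega_k^2$, to the \emph{per-coordinate} variance $\sigma^2\Delta^2$ of $\mathbf{z}$. The trace of $\operatorname{Cov}(\mathbf{z})$ is $K\sigma^2\Delta^2$, which is what the paper's own utility proof uses. The Gaussian mechanism and the accountant are calibrated by the per-coordinate standard deviation relative to the $\ell_2$-sensitivity. With all $\Omega_k=\Omega$, each coordinate of $\hat{\mathbf{g}}_B$ carries shot variance at least $B\Omega^2\sigma_{\mathrm{shot}}^2/(2N_s)$, while $\Delta^2=\frac{(\lambda_{\max}-\lambda_{\min})^2}{4}K\Omega^2$. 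The correct effective multiplier is therefore
\[
\sigma^2+\frac{2B\sigma_{\mathrm{shot}}^2}{K N_s(\lambda_{\max}-\lambda_{\min})^2},
\]
a shot-noise credit $K$ times smaller than claimed.
\begin{itemize}
\item Your remark that the convention ``is exact when all $\Omega_k$ are equal'' is therefore false.
\item An isotropic component with per-coordinate variance $\sigma_{\mathrm{eff}}^2\Delta^2/K$, as in your Step~2, has multiplier $\sigma_{\mathrm{eff}}/\sqrt{K}$, not the $\sigma_{\mathrm{eff}}$ you pass to the accountant in Step~3.
\item This is not an artifact of the isotropic embedding. Whitening the diagonal noise and taking the worst-case Mahalanobis sensitivity gives, at $\sigma=0$, the same factor $1/K$ for arbitrary $\Omega_k$.
\end{itemize}
Since $K$ is not a universal constant, it cannot be absorbed into $c_2$. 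The argument supports the condition only with the shot term divided by $K$ (or scaled by $\Omega_{\min}^2/\sum_k\Omega_k^2$ via the isotropic bound when frequencies differ), not the stated one.

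\textbf{Data-dependent shot variance.} The second half of your Step~2 also fails. The excess shot noise, of variance $(\eta_{\mathrm{shot}}^2(x)-\sigma_{\mathrm{shot}}^2)/N_s$, depends on the private input. Adding it is therefore not post-processing, which may use only the released output and independent randomness. Neighbouring batches then yield Gaussians with different covariances. Even at zero sensitivity, $\mathcal{N}(\mu,\Sigma)$ and $\mathcal{N}(\mu,\Sigma')$ with $\Sigma\neq\Sigma'$ are distinguishable, whereas the accountant of Abadi et al.\ assumes a fixed, data-independent noise law. For a binary observable with eigenvalues $\pm 1$, the single-shot variance is $1-\langle O\rangle^2$, so the noise scale is a function of exactly the quantity being protected. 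Closing this requires either assuming a data-independent shot variance or directly bounding the divergence between Gaussians of differing covariance. The paper's proof just substitutes the lower bound $\sigma_{\mathrm{shot}}^2$ and leaves this unaddressed as well.
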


\begin{proof}
    Based on the works~\cite{Balle2018_Subsampling,DPSGD}, for an Subsampled Gaussian Mechanism (SGM) composed over $T$ iterations with sampling rate $q$ to be $(\varepsilon, \delta)$-private, its per-iteration effective noise multiplier, $\sigma_{\text{total}}$, must satisfy:
    $$
        \sigma_{\text{total}}\ge c_2 \frac{q \sqrt{T \log(1/\delta)}}{\varepsilon}
    $$
    for at least one universal constant $c_2$.

    Each iteration of \NAMEA is an instance of the SGM. Its total noise comes from two independent sources: the explicitly added noise (with multiplier $\sigma$) and the implicit quantum shot noise. Based on Lemma~\ref{lemma:appx:iterationDP}, the total effective noise multiplier, $ \sigma_{\text{total}}$, is the sum of the squared individual multipliers:
    $$
    \sigma_{\text{total}}^2 = \sigma^2 + \frac{2B \sigma_{\mathrm{shot}}^2}{N_s (\lambda_{\max}-\lambda_{\min})^2}
    $$
    As a result, to ensure the algorithm is $(\varepsilon, \delta)$-private, we have:
    \begin{align*}
        \sigma^2 + \frac{2B \sigma_{\mathrm{shot}}^2}{N_s (\lambda_{\max}-\lambda_{\min})^2} \ge \left( c_2 \frac{q \sqrt{T \log(1/\delta)}}{\varepsilon} \right)^2
    \end{align*}
    Because $\sigma \ge 0$, we have $$\sigma^2 \ge  \max\left(0, \ \left( c_2 \frac{q \sqrt{T \log(1/\delta)}}{\varepsilon} \right)^2 - \frac{2B \sigma_{\mathrm{shot}}^2}{N_s (\lambda_{\max}-\lambda_{\min})^2} \right)$$.
\end{proof}

\begin{theorem}[Bound on the Gradient Estimation Error for \NAMEA]
    Let $\nabla C_S(\boldsymbol{\theta})$ be the true gradient over the full dataset $S$, and let $\tilde{\mathbf{g}}$ be the final noisy gradient estimate produced by Algorithm 1 (\NAMEA) for a mini-batch of size $B$. The Mean Squared Error (MSE) of the estimator is bounded by:
    \begin{equation}
        \mathbb{E}\left[ \|\tilde{\mathbf{g}} - \nabla C_S(\boldsymbol{\theta})\|^2 \right] \le \frac{\Delta^2}{B} \left( 1 + \frac{1}{2N_s} \right) + \frac{K \sigma^2 \Delta^2}{B^2}
    \end{equation}
    where $\Delta$ is the per-sample $\ell_2$-sensitivity from Lemma~\ref{lemma:appx:L2_sensitivity}, $N_s$ is the number of measurement shots, $K$ is the number of parameters, and $\sigma$ is the DP noise multiplier.
    \label{thm:appx:utility}
\end{theorem}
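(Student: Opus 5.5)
Since each source of randomness in Algorithm~\ref{alg:dp-prs} is independent and unbiased, the plan is a standard bias--variance decomposition followed by a separate bound on each variance term.

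\textbf{Decomposition.} Let $\mathbf{G}^{(j)} = \nabla C_{(x_j,y_j)}(\boldsymbol{\theta})$ denote the exact per-sample gradient. By the parameter-shift rule, $\mathbb{E}[\mathbf{g}^{(j)} \mid j] = \mathbf{G}^{(j)}$. I would write
\[
\tilde{\mathbf{g}} - \nabla C_S(\boldsymbol{\theta}) = \underbrace{\tfrac{1}{B}\textstyle\sum_j (\mathbf{G}^{(j)} - \nabla C_S)}_{\text{sampling}} + \underbrace{\tfrac{1}{B}\textstyle\sum_j (\mathbf{g}^{(j)} - \mathbf{G}^{(j)})}_{\text{shot}} + \tfrac{1}{B}\mathbf{z}.
\]
Conditioned on the batch, the shot term and $\mathbf{z}$ have zero mean and are independent of each other. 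The sampling term has zero mean over uniform sampling. All cross terms therefore vanish, and the MSE is the sum of the three second moments.

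\textbf{Sampling term.} I would treat the batch as i.i.d.\ draws (with replacement, or use the fact that sampling without replacement only shrinks the variance). Then the second moment is $\frac{1}{B}\mathbb{E}\|\mathbf{G}^{(j)} - \nabla C_S\|^2 \le \frac{1}{B}\mathbb{E}\|\mathbf{G}^{(j)}\|^2$. Applying Lemma~\ref{lemma:appx:L2_sensitivity} to the exact expectations, this is at most $\Delta^2/B$.

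\textbf{Shot term.} By independence across samples and components, the second moment is $\frac{1}{B^2}\sum_j\sum_k \frac{\Omega_k^2}{4}(\operatorname{Var} r_+ + \operatorname{Var} r_-)$, with $\operatorname{Var} r_\pm = \eta_\pm^2/N_s$. Since single-shot outcomes lie in $[\lambda_{\min},\lambda_{\max}]$, Popoviciu's inequality gives $\eta_\pm^2 \le (\lambda_{\max}-\lambda_{\min})^2/4$. Hence each sample contributes at most $\sum_k \frac{\Omega_k^2}{4}\cdot\frac{2(\lambda_{\max}-\lambda_{\min})^2}{4N_s} = \frac{\Delta^2}{2N_s}$, and the shot term totals at most $\frac{\Delta^2}{2BN_s}$.

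\textbf{Noise term.} $\mathbb{E}\|\mathbf{z}\|^2/B^2 = K\sigma^2\Delta^2/B^2$.

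Summing the three bounds gives exactly the stated inequality. I expect the main obstacle to be the shot term: the constant $\tfrac{1}{2N_s}$ depends on using Popoviciu's bound rather than the cruder $(\lambda_{\max}-\lambda_{\min})^2$, and on carefully tracking the factor $\Omega_k^2/4$ together with the two independent circuits. The sampling-variance step also needs a remark that the with- versus without-replacement convention does not worsen the $\Delta^2/B$ bound.
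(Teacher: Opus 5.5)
Your proposal is correct and follows essentially the same route as the paper: the same three-way split into artificial DP noise, shot noise, and mini-batch sampling error, the bound $\frac{1}{B}\mathbb{E}\|\nabla C_j\|^2 \le \Delta^2/B$ for the sampling term, and the single-shot variance bound $(\lambda_{\max}-\lambda_{\min})^2/4$ for the shot term. Your justification is slightly cleaner than the paper's in two places. You kill the cross terms by conditioning on the batch, whereas the paper asserts full independence, which does not strictly hold because the shot-noise variances depend on the sampled inputs. You also apply Popoviciu's inequality directly to the true per-circuit variance, whereas the paper phrases that step through $\sigma_{\mathrm{shot}}^2$, a quantity defined as a lower bound.
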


\begin{proof}
The total error can be decomposed into three independent, zero-mean sources: (1) artificial noise added for differential privacy, (2) variance from finite-shot estimation of gradients, and (3) variance from sampling a mini-batch instead of using the full dataset. We note that, in this proof, the symbol $B$ is used to denote both the mini-batch itself and its size.

Let $\tilde{\mathbf{g}}$ be the final noisy gradient, $\hat{\mathbf{g}}_B$ be the shot-noise-only average gradient for the batch $B$, and $\nabla C_B(\boldsymbol{\theta})$ be the true (exact) average gradient for the batch $B$. We can decompose the error vector as:
$$
\tilde{\mathbf{g}} - \nabla C_S(\boldsymbol{\theta}) = \underbrace{(\tilde{\mathbf{g}} - \hat{\mathbf{g}}_B)}_{\text{Artificial DP Noise}} + \underbrace{(\hat{\mathbf{g}}_B - \nabla C_B(\boldsymbol{\theta}))}_{\text{Estimator Variance}} + \underbrace{(\nabla C_B(\boldsymbol{\theta}) - \nabla C_S(\boldsymbol{\theta}))}_{\text{Sampling Variance}}
$$
We have these three error terms are zero-mean and independent, so the expected squared norm of the sum is the sum of their expected squared norms (the cross-terms disappear).
$$
\mathbb{E}\left[ \|\tilde{\mathbf{g}} - \nabla C_S(\boldsymbol{\theta})\|^2 \right] = \mathbb{E}\left[\|\tilde{\mathbf{g}} - \hat{\mathbf{g}}_B\|^2\right] + \mathbb{E}\left[\|\hat{\mathbf{g}}_B - \nabla C_B(\boldsymbol{\theta})\|^2\right] + \mathbb{E}\left[\|\nabla C_B(\boldsymbol{\theta}) - \nabla C_S(\boldsymbol{\theta})\|^2\right]
$$
We bound each term separately.

\textbf{Artificial DP Noise Error:}
The DP noise is $\tilde{\mathbf{g}} - \hat{\mathbf{g}}_B = \frac{1}{B}\mathbf{z}$, where $\mathbf{z} \sim \mathcal{N}(0, \sigma^2 \Delta^2 \mathbf{I}_K)$. Thus, we have:
$$
\mathbb{E}\left[ \|\tilde{\mathbf{g}} - \hat{\mathbf{g}}_B\|^2 \right] = \frac{1}{B^2} \mathbb{E}\left[ \|\mathbf{z}\|^2 \right] = \frac{1}{B^2} (K \cdot \sigma^2 \Delta^2) = \frac{K \sigma^2 \Delta^2}{B^2}
$$

\textbf{Data Sampling Variance:}
This term measures the error from using a batch of size $B$ instead of the full dataset of size $N$. We have:
\begin{align*}
    \mathbb{E}\left[ \|\nabla C_B - \nabla C_S\|^2 \right] &= \mathbb{E}\left[ \left\| \frac{1}{B}\sum_{j=1}^B \nabla C_j - \nabla C_S \right\|^2 \right]\\
    &= \mathbb{E}\left[\left\|\frac{1}{B}\sum_{j=1}^{B}(\nabla C_j - \nabla C_S)\right\|^2\right]\\ &= \frac{1}{B^2} \sum_{j=1}^{B} \mathbb{E}\left[\|\nabla C_j - \nabla C_S\|^2\right] \\
&= \frac{1}{B} \mathbb{E}\left[|\nabla C_j - \nabla C_S|^2\right] \\
&\le \frac{1}{B} \mathbb{E}\left[|\nabla C_j|^2\right] \text{(Because $\mathbb{E}[\nabla C_j] = \nabla C_S$)} \\
&\le  \frac{\Delta^2}{B} \text{(From Lemma~\ref{lemma:appx:L2_sensitivity})}
\end{align*}

\textbf{Estimator (Shot Noise) Variance:}
This term measures the error from estimating gradients with $N_s$ shots. We have:
$$
\mathbb{E}\left[ \|\hat{\mathbf{g}}_B - \nabla C_B\|^2 \right] = \mathbb{E}\left[ \left\| \frac{1}{B}\sum_{j=1}^B (\mathbf{g}^{(j)} - \nabla C_j) \right\|^2 \right] = \frac{1}{B^2} \sum_{j=1}^B \text{Var}(\mathbf{g}^{(j)}) = \frac{1}{B} \text{Var}(\mathbf{g})
$$
where we used the i.i.d. nature of the estimators. The variance of a single estimated gradient vector $\hat{\mathbf{g}}$ is $\sum_{k=1}^K \text{Var}(\hat{\mathbf{g}}_k)$. The variance of a single component is:
$$
\text{Var}(\mathbf{g}_k) = \text{Var}\left(\frac{\Omega_k}{2}(r_+ - r_-)\right) = \frac{\Omega_k^2}{4} (\text{Var}(r_+) + \text{Var}(r_-))
$$
The variance of the mean of $N_s$ shots is $\text{Var}(r_\pm) \le \frac{\sigma_{\text{shot}}^2}{N_s}$.
$$
\text{Var}(\mathbf{g}_k) \le \frac{\Omega_k^2}{4} \left( \frac{2\sigma_{\text{shot}}^2}{N_s} \right) = \frac{\Omega_k^2 \sigma_{\text{shot}}^2}{2 N_s}
$$
Summing over all components:
$$
\text{Var}(\mathbf{g}) = \sum_{k=1}^K \text{Var}(\mathbf{g}_k) \le \frac{\sigma_{\text{shot}}^2}{2N_s} \sum_{k=1}^K \Omega_k^2
$$
Using $\Delta = \frac{\lambda_{\max}-\lambda_{\min}}{2}\sqrt{\sum\Omega_k^2}$ and $\sigma_{\text{shot}}^2 \le \frac{(\lambda_{\max}-\lambda_{\min})^2}{4}$, we get $\sum\Omega_k^2 = \frac{4\Delta^2}{(\lambda_{\max}-\lambda_{\min})^2}$. Substituting these in:
$$
\text{Var}(\hat{\mathbf{g}}) \le \frac{(\lambda_{\max}-\lambda_{\min})^2/4}{2N_s} \cdot \frac{4\Delta^2}{(\lambda_{\max}-\lambda_{\min})^2} = \frac{\Delta^2}{2N_s}
$$
So, the total estimator variance is bounded by $\frac{1}{B} \text{Var}(\mathbf{g}) \le \frac{\Delta^2}{2BN_s}$.

Combining the three error bounds gives the total MSE:
$$
\mathbb{E}\left[ \|\tilde{\mathbf{g}} - \nabla C_S(\boldsymbol{\theta})\|^2 \right] \le \frac{K \sigma^2 \Delta^2}{B^2} + \frac{\Delta^2}{B} + \frac{\Delta^2}{2BN_s} = \frac{\Delta^2}{B} \left( 1 + \frac{1}{2N_s} \right) + \frac{K \sigma^2 \Delta^2}{B^2}
$$
\end{proof}

\begin{theorem}[Lower Bound on Single-Shot Variance with Depolarizing Noise]
    The single-shot variance of a measurement of an observable $O$ on this noisy state, $\eta_{\mathrm{shot}}^2(x)$, is lower-bounded by:
    \begin{equation}
        \eta_{\mathrm{shot}}^2(x) \ge \alpha \cdot \sigma_{\mathrm{uniform}}^2
    \end{equation}
    where $\sigma_{\mathrm{uniform}}^2$ is the variance of the observable with respect to the maximally mixed state ($I/d$), a constant value given by:
    $$
    \sigma_{\mathrm{uniform}}^2 = \left(\frac{1}{d}\sum_i \lambda_i^2\right) - \left(\frac{1}{d}\sum_i \lambda_i\right)^2
    $$
    \label{thm:appx:shot_variance_lower_bound}
\end{theorem}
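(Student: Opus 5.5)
The plan is to use the law of total variance, i.e.\ the fact that the single-shot variance is a concave function of the outcome distribution. First I would write down the outcome distribution of a single shot on the noisy state. With $O=\sum_i \lambda_i P_i$, the probability of outcome $\lambda_i$ is
\begin{equation*}
p_i^{\text{noisy}}(x)=\operatorname{Tr}\big(P_i\,\rho_{\text{noisy}}(x)\big)=(1-\alpha)\,p_i(x)+\alpha\,\frac{\operatorname{Tr}(P_i)}{d},
\end{equation*}
where $p_i(x)=\bra{\psi(x)}P_i\ket{\psi(x)}$ is the ideal distribution. This follows from linearity of the trace.

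I would then interpret $\sigma_{\mathrm{uniform}}^2$ consistently with this distribution. In its formula the sum over $i$ runs over the $d$ eigenvalues counted with multiplicity. Equivalently, the distribution $u_i=\operatorname{Tr}(P_i)/d$ on distinct eigenvalues has mean $\tfrac1d\sum\lambda_i$ and second moment $\tfrac1d\sum\lambda_i^2$. So the measurement of $O$ on $I/d$ has exactly variance $\sigma_{\mathrm{uniform}}^2$. I expect this bookkeeping about degenerate eigenvalues to be the only real subtlety, since the main text phrases it as ``$p_i=1/d$''.

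The key step is to view the single-shot outcome $\hat a$ on the noisy state as a two-stage experiment. A latent Bernoulli variable $Z$ equals $1$ with probability $\alpha$. Conditional on $Z=0$, $\hat a$ is drawn from $\{p_i(x)\}$; conditional on $Z=1$, it is drawn from $\{u_i\}$. The marginal of $\hat a$ is then exactly $p_i^{\text{noisy}}(x)$. By the law of total variance,
\begin{equation*}
\eta_{\mathrm{shot}}^2(x)=\mathbb{E}\big[\operatorname{Var}(\hat a\mid Z)\big]+\operatorname{Var}\big(\mathbb{E}[\hat a\mid Z]\big)\ \ge\ (1-\alpha)\,\eta_{\text{ideal}}^2(x)+\alpha\,\sigma_{\mathrm{uniform}}^2 .
\end{equation*}

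Finally, I would drop the nonnegative terms $(1-\alpha)\eta_{\text{ideal}}^2(x)\ge 0$ and $\operatorname{Var}(\mathbb{E}[\hat a\mid Z])\ge 0$, which yields $\eta_{\mathrm{shot}}^2(x)\ge\alpha\,\sigma_{\mathrm{uniform}}^2$ for every $x$. Since the right-hand side does not depend on $x$, the bound also holds for the infimum defining $\sigma^2_{\text{shot}}$.

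If a direct algebraic argument is preferred, the same inequality follows by expanding $\sum_i\lambda_i^2 p_i^{\text{noisy}}-(\sum_i\lambda_i p_i^{\text{noisy}})^2$. The cross terms assemble into $\alpha(1-\alpha)(m_{\text{ideal}}-m_{\text{unif}})^2\ge 0$, where $m_{\text{ideal}}$ and $m_{\text{unif}}$ are the means under $\{p_i(x)\}$ and $\{u_i\}$.
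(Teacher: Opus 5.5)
Your proof is correct and is essentially the paper's argument. Conditioning on the latent Bernoulli variable, the law of total variance gives exactly the identity $\eta_{\mathrm{shot}}^2(x)=(1-\alpha)\sigma_{\text{ideal}}^2(x)+\alpha\sigma_{\mathrm{uniform}}^2+\alpha(1-\alpha)\bigl(\mathbb{E}[\hat a]_{\text{ideal}}-\mathbb{E}[\hat a]_{\text{uniform}}\bigr)^2$, which the paper derives by expanding the squares directly (your fallback route), after which both arguments drop the nonnegative terms. Your bookkeeping with $\operatorname{Tr}(P_i)/d$ and the with-multiplicity reading of $\sigma_{\mathrm{uniform}}^2$ is slightly more careful than the paper, which sets $\operatorname{Tr}(P_i)=1$ and so implicitly assumes rank-one projectors, i.e.\ a nondegenerate observable.
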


\begin{proof}
Consider a quantum circuit that prepares an ideal state $\rho_{\text{ideal}}(x) = \ket{\psi(x)}\bra{\psi(x)}$. Suppose this state passes through a global depolarizing channel $\mathcal{E}_{\alpha}$ of strength $\alpha \in [0, 1]$ before measurement. The final (noisy) state is:
    $$
    \rho_{\text{noisy}}(x) = \mathcal{E}_{\alpha}(\rho_{\text{ideal}}(x)) = (1-\alpha)\rho_{\text{ideal}}(x) + \alpha \frac{I}{d}
    $$
    where $I$ is the identity operator and $d$ is the dimension of the Hilbert space.

First, we calculate the probability distribution of outcomes for the noisy state and then apply the definition of variance, $\eta_{\mathrm{shot}}^2(x) = \mathbb{E}[\hat{a}^2] - (\mathbb{E}[\hat{a}])^2$.

Let $p_i^{\text{ideal}}(x) = \text{Tr}(\rho_{\text{ideal}}(x) P_i)$ be the outcome probability for the ideal state. The probability of measuring outcome $\lambda_i$ on the noisy state $\rho_{\text{noisy}}(x)$ is:
\begin{align*}
    p_i^{\text{noisy}}(x) &= \text{Tr}(\rho_{\text{noisy}}(x) P_i) \\
    &= \text{Tr}\left( \left((1-\alpha)\rho_{\text{ideal}}(x) + \alpha \frac{I}{d}\right) P_i \right) \\
    &= (1-\alpha)\text{Tr}(\rho_{\text{ideal}}(x) P_i) + \frac{\alpha}{d}\text{Tr}(I P_i) \\
    &= (1-\alpha)p_i^{\text{ideal}}(x) + \frac{\alpha}{d} 
\end{align*}
where we have used $\text{Tr}(P_i)=1$ for a rank-1 projector. This shows that the noisy probability distribution is a convex combination of the ideal and uniform distributions.

Let $\mathbb{E}[\cdot]_{\text{noisy}}$, $\mathbb{E}[\cdot]_{\text{ideal}}$, and $\mathbb{E}[\cdot]_{\text{uniform}}$ denote expectations with respect to the noisy, ideal, and uniform distributions. The expectation values are linear combinations:
\begin{align*}
    \mathbb{E}[\hat{a}]_{\text{noisy}} &= \sum_i \lambda_i p_i^{\text{noisy}}(x) = (1-\alpha)\mathbb{E}[\hat{a}]_{\text{ideal}} + \alpha\mathbb{E}[\hat{a}]_{\text{uniform}} \\
    \mathbb{E}[\hat{a}^2]_{\text{noisy}} &= \sum_i \lambda_i^2 p_i^{\text{noisy}}(x) = (1-\alpha)\mathbb{E}[\hat{a}^2]_{\text{ideal}} + \alpha\mathbb{E}[\hat{a}^2]_{\text{uniform}}
\end{align*}
Substituting these into the variance formula for the noisy state, $\eta_{\mathrm{shot}}^2(x) = \mathbb{E}[\hat{a}^2]_{\text{noisy}} - (\mathbb{E}[\hat{a}]_{\text{noisy}})^2$, we have:
\begin{align*}
    \eta_{\mathrm{shot}}^2(x) &= \left( (1-\alpha)\mathbb{E}[\hat{a}^2]_{\text{ideal}} + \alpha\mathbb{E}[\hat{a}^2]_{\text{uniform}} \right) - \left( (1-\alpha)\mathbb{E}[\hat{a}]_{\text{ideal}} + \alpha\mathbb{E}[\hat{a}]_{\text{uniform}} \right)^2 \\
    % Expand the squared term
    &= (1-\alpha)\mathbb{E}[\hat{a}^2]_{\text{ideal}} + \alpha\mathbb{E}[\hat{a}^2]_{\text{uniform}} \\
    & \quad - \left( (1-\alpha)^2(\mathbb{E}[\hat{a}]_{\text{ideal}})^2 + \alpha^2(\mathbb{E}[\hat{a}]_{\text{uniform}})^2
    + 2\alpha(1-\alpha)\mathbb{E}[\hat{a}]_{\text{ideal}}\mathbb{E}[\hat{a}]_{\text{uniform}} \right) \\
\end{align*}
Recall that $\sigma^2 = \mathbb{E}[\hat{a}^2] - (\mathbb{E}[\hat{a}])^2$, so by replacing $\mathbb{E}[\hat{a}^2] = \sigma^2 + (\mathbb{E}[\hat{a}])^2$, we have:
\begin{align*}
    \eta_{\mathrm{shot}}^2(x) &= (1-\alpha)\sigma_{\text{ideal}}^2(x) + \alpha\sigma_{\text{uniform}}^2 + (1-\alpha)(\mathbb{E}[\hat{a}]_{\text{ideal}})^2 - (1-\alpha)^2(\mathbb{E}[\hat{a}]_{\text{ideal}})^2 \\
    & \quad + \alpha(\mathbb{E}[\hat{a}]_{\text{uniform}})^2 - \alpha^2(\mathbb{E}[\hat{a}]_{\text{uniform}})^2 - 2\alpha(1-\alpha)\mathbb{E}[\hat{a}]_{\text{ideal}}\mathbb{E}[\hat{a}]_{\text{uniform}}\\
    &= (1-\alpha)\sigma_{\text{ideal}}^2(x) + \alpha\sigma_{\text{uniform}}^2 + \alpha(1-\alpha)\left( (\mathbb{E}[\hat{a}]_{\text{ideal}})^2 - 2\mathbb{E}[\hat{a}]_{\text{ideal}}\mathbb{E}[\hat{a}]_{\text{uniform}} + (\mathbb{E}[\hat{a}]_{\text{uniform}})^2 \right) \\
    % Recognize the perfect square
    &= (1-\alpha)\sigma_{\text{ideal}}^2(x) + \alpha\sigma_{\text{uniform}}^2 + \alpha(1-\alpha)(\mathbb{E}[\hat{a}]_{\text{ideal}}(x) - \mathbb{E}[\hat{a}]_{\text{uniform}})^2
\end{align*}

Since all terms in the above equation are non-negative, we can drop the first and third terms to deliver the inequality:
$$
\eta_{\mathrm{shot}}^2(x) \ge \alpha \cdot \sigma_{\text{uniform}}^2
$$
\end{proof}

\begin{theorem}
\label{thm:appx:advanced_adaptive_privacy}
For a mini-batch of size $B$, let $\hat{\bm{\eta}}_B^2$ be a probabilistic estimator for the true sum of all variances, $\bm{\eta}_B^2 = \sum_{j=1}^{B} \sum_{k=1}^K \sum_{\tau \in \pm} \eta_{j,k,\tau}^2$ such that it satisfies $\Pr(\bm{\eta}_B^2 \ge \hat{\bm{\eta}}_B^2) \to 1-\beta$ for a chosen significance level $\beta$.
\NAMEB calculates a noise multiplier for each batch $B$, $\sigma^2_B$, satisfying:
$$
\sigma^2_B \ge \max\left(0, C_{\text{DP}} - \frac{\Omega^2\hat{\bm{\eta}}_{B}^2}{4N_s \Delta^2} \right)
$$
where $C_{\text{DP}} = \ \left( c_2 \frac{q \sqrt{T \log(1/\delta)}}{\varepsilon} \right)^2$ with $c_2$ is a constant. \NAMEB achieves \textbf{$(\varepsilon, (1-\beta)\delta + \beta)$-DP}.
\end{theorem}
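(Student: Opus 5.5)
The plan is to reduce the statement to Theorem~\ref{thm:appx:total_privacy} on the event that the variance estimate is conservative, and then to absorb the complementary event into $\delta$ using Lemma~\ref{lemma:mechanism_probabilistic_dp}.

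\textbf{Step 1: shot-noise variance of one batch.} Assume all $\Omega_k=\Omega$. As in the proof of Lemma~\ref{lemma:appx:iterationDP}, I will first compute the shot-noise variance of the summed gradient $\hat{\mathbf{g}}_B$ exactly rather than through a uniform lower bound. Each component $\mathbf{g}^{(j)}_k=\tfrac{\Omega}{2}(r_+-r_-)$ has variance $\tfrac{\Omega^2}{4N_s}(\eta^2_{j,k,+}+\eta^2_{j,k,-})$. Summing over $j$ and $k$ gives
\begin{equation*}
\operatorname{Var}[\hat{\mathbf{g}}_B]=\frac{\Omega^2\bm{\eta}_B^2}{4N_s}.
\end{equation*}
Dividing by $\Delta^2$, the implicit noise multiplier (in units of $\Delta$) is $\Omega^2\bm{\eta}_B^2/(4N_s\Delta^2)$. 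The effective per-iteration multiplier is therefore
\begin{equation*}
\sigma_{\text{total},B}^2=\sigma_B^2+\frac{\Omega^2\bm{\eta}_B^2}{4N_s\Delta^2}.
\end{equation*}

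\textbf{Step 2: Good Mode.} Call a batch \emph{good} if $\bm{\eta}_B^2\ge\hat{\bm{\eta}}_B^2$. On this event the prescribed choice of $\sigma_B$ gives
\begin{equation*}
\sigma_{\text{total},B}^2\ \ge\ \sigma_B^2+\frac{\Omega^2\hat{\bm{\eta}}_B^2}{4N_s\Delta^2}\ \ge\ C_{\text{DP}}.
\end{equation*}
This is exactly the condition used in the proof of Theorem~\ref{thm:appx:total_privacy}. The subsampled-Gaussian composition bound of \cite{Balle2018_Subsampling,DPSGD} then yields $(\varepsilon,\delta)$-DP. The bound remains valid when the multiplier varies from batch to batch, provided every batch's multiplier stays above the threshold: a larger per-step noise can only decrease each step's R\'enyi privacy loss, so the composed guarantee is monotone in the multipliers.

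\textbf{Step 3: Bad Mode.} On the complement, which by assumption has probability tending to $\beta$, no guarantee is claimed. Applying Lemma~\ref{lemma:mechanism_probabilistic_dp} with $\gamma=1-\beta$ gives $(\varepsilon,(1-\beta)\delta+\beta)$-DP. The proof of that lemma is a one-line decomposition: for any output set $S$,
\begin{equation*}
\Pr[\mathcal M(D_1)\in S]\le\gamma\bigl(e^\varepsilon\Pr[\mathcal M_{good}(D_2)\in S]+\delta\bigr)+(1-\gamma),
\end{equation*}
and one then bounds $\gamma\Pr[\mathcal M_{good}(D_2)\in S]\le\Pr[\mathcal M(D_2)\in S]$.

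\textbf{Main obstacle.} The hard step is justifying that the mechanism is ``good'' with probability $1-\beta$ simultaneously for the whole run and for both neighbouring datasets. Lemma~\ref{lemma:pivot_quantitity} is a per-batch, asymptotic statement. Over $T$ batches, a union bound would naively give a failure probability of $T\beta$. In addition, $\sigma_B$ depends on the measurement outcomes, which are themselves data-dependent, so this dependence must be argued not to leak privacy.

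My approach is to follow the theorem's framing. I will treat the good/bad event as defined by the single mode indicator that the lemma's hypothesis refers to, and interpret $\beta$ as the total failure probability across all batches. If a per-batch $\beta$ is intended, I will set it to $\beta/T$ so that the union bound gives total failure probability at most $\beta$. I will condition on the good event, on which the analysis reduces to Step 2. I will also note that the guarantee holds only in the limit $L,n\to\infty$ of Lemma~\ref{lemma:pivot_quantitity}, consistent with the ``$\to$'' in the statement.
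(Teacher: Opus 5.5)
Your route is the paper's own. You use the variance computation of Appendix~\ref{sec:add_sampling}, feed the Good-Mode inequality $\sigma_B^2+\frac{\Omega^2\bm{\eta}_B^2}{4N_s\Delta^2}\ge C_{\text{DP}}$ into the subsampled-Gaussian bound, and apply the two-mode lemma with $\gamma=1-\beta$. Your union-bound remark improves on the paper, which feeds the per-batch probability into that lemma as if it held for the whole run. With per-batch level $\beta/T$, though, you are proving a modified statement.

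The step that fails is Step~1's identification of the implicit multiplier. The quantity $\frac{\Omega^2\bm{\eta}_B^2}{4N_s}$ is the \emph{trace} of the shot-noise covariance of $\hat{\mathbf g}_B$, since you summed over $k$. By contrast, $\sigma_B^2\Delta^2$ is the \emph{per-coordinate} variance of $\mathbf z\sim\mathcal N(0,\sigma_B^2\Delta^2\mathbf I_K)$. The Gaussian mechanism with $\ell_2$-sensitivity $\Delta$, and the SGM bound behind $C_{\text{DP}}$, see only the smallest eigenvalue of the total covariance. The effective multiplier is therefore
\[
\sigma_B^2+\frac{\Omega^2}{4N_s\Delta^2}\min_{k}\sum_{j=1}^{B}\sum_{\tau\in\pm}\eta_{j,k,\tau}^2\;\le\;\sigma_B^2+\frac{\Omega^2\bm{\eta}_B^2}{4KN_s\Delta^2}.
\]
Even with an accurate estimate, the total falls short of $C_{\text{DP}}$ by at least roughly $(1-\tfrac{1}{K})\frac{\Omega^2\bm{\eta}_B^2}{4N_s\Delta^2}$. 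When $\sigma_B$ is clipped to $0$, the true multiplier can be as small as $\sqrt{C_{\text{DP}}/K}$, which is roughly a $\sqrt{K}$ loss in $\varepsilon$. The paper makes the same identification: Lemma~\ref{lemma:appx:iterationDP} adds the trace $\frac{\sigma_{\mathrm{shot}}^2}{2BN_s}\sum_k\Omega_k^2$ to the per-coordinate $\sigma^2\Delta^2/B^2$, and the adaptive derivation repeats it. Following the paper therefore does not close this gap. A correct version may credit only $\min_k\sum_{j,\tau}\eta_{j,k,\tau}^2$, and it needs an estimator that lower-bounds that minimum.

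The second gap is the one you flagged and then stepped over with ``condition on the good event and reduce to Step~2''. The event $\{\bm{\eta}_B^2\ge\hat{\bm{\eta}}_B^2\}$ and the level $\sigma_B$ are functions of the same shot outcomes whose averages form $\hat{\mathbf g}_B$. Conditional on that event, the shot noise is no longer the (approximately Gaussian) noise whose variance Step~1 computes, and the artificial noise scale depends on the private batch being released. Your monotonicity remark covers multipliers fixed in advance or chosen from past outputs, where extra independent noise is post-processing. It does not cover a scale computed from the current batch, which can itself leak. The two-mode lemma does not rescue this. It simply assumes that the Good-Mode execution is an $(\varepsilon,\delta)$-DP mechanism and that the good-mode probability is the same $\gamma$ on both neighbours; neither is established here, and the paper's proof has the same hole. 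The standard repair is a coupling with an auxiliary mechanism that is $(\varepsilon,\delta)$-DP unconditionally and agrees with Adaptive Q-ShiftDP off the bad event. That gives $(\varepsilon,\delta+(1+e^{\varepsilon})\beta)$, with $\beta$ the whole-run failure probability, rather than $(1-\beta)\delta+\beta$. It also still requires controlling the data-dependent noise scale.
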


The proof of this result rely on supporting lemmas, which we state and prove below.

\begin{lemma}[Multi-Group Variance Estimation]
\label{lemma:pivot_quantitity}
Let there be $L$ independent groups, indexed by $i=1, \dots, L$. For each group $i$, let $y_{i,1}, \dots, y_{i,n} \sim \text{i.i.d. } Y_i$ be $n$ observations. Let $\eta_i^2$, $\bar{\eta}_i^2$, $\mu_{4,i}$ and $\bar{\mu}_{4,i}$ be the true variance, sample variance, true fourth central moment, and sample fourth central moment for the $i$-th group. Let $\eta^2_{\textrm{batch}} = \sum_{i=1}^L \eta^2_i$ and $\bar{\eta}_{\textrm{batch}}^2 = \sum_{i=1}^L \bar{\eta}_i^2$. Assuming all $\mu_{4,i}$ are finite and given a significance level $\beta$ with critical value $z_\beta$, then:
$$
P\left(\eta^2_{\textrm{batch}} \geq \bar{\eta}^2_{\textrm{batch}} - z_\beta\sqrt{\sum_{i=1}^L \frac{1}{n}\left(\bar{\mu}_{4,i} - (\bar{\eta}_i^2)^2\right)}\right) \to 1 - \beta
$$
as $L \to \infty, n \to \infty$.
\end{lemma}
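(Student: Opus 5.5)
The plan is to build a CLT-based pivot for the aggregated sample variance and then replace the unknown asymptotic variance by a consistent plug-in estimator using Slutsky's lemma.

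\textbf{Step 1 (per-group asymptotics).} For each group $i$ I would use the standard delta-method result for the sample variance under a finite fourth moment:
$$\sqrt{n}\,(\bar{\eta}_i^2-\eta_i^2)\xrightarrow{d}\mathcal{N}\bigl(0,\mu_{4,i}-\eta_i^4\bigr).$$
To prove it, write $\bar{\eta}_i^2=\frac1n\sum_t (y_{i,t}-\mathbb{E}Y_i)^2-(\bar y_i-\mathbb{E}Y_i)^2$ (up to the $n/(n-1)$ factor). The first term is an i.i.d.\ average with variance $\mu_{4,i}-\eta_i^4$. The correction term is $O_p(1/n)$, so it is negligible at scale $\sqrt{n}$.

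\textbf{Step 2 (aggregation).} The groups are independent, so the centered sum is
$$S_L:=\sum_{i=1}^L(\bar{\eta}_i^2-\eta_i^2),\qquad \mathrm{Var}(S_L)\approx v_L:=\sum_{i=1}^L\frac{\mu_{4,i}-\eta_i^4}{n}.$$
I would show $S_L/\sqrt{v_L}\to\mathcal{N}(0,1)$ via the Lindeberg--Feller CLT applied to the triangular array of independent summands $\bar{\eta}_i^2-\eta_i^2$. This step is where $L\to\infty$ is used. Lindeberg's condition requires a uniformity assumption that I would make explicit. In the intended application the observables have bounded spectra, so every $Y_i$ is bounded by $\max|\lambda|$ and all moments are uniformly bounded. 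I would also need the nondegeneracy condition $\frac1L\sum_i(\mu_{4,i}-\eta_i^4)$ bounded away from $0$, which follows for example from a uniform lower bound on $\eta_i^2$ as in Theorem~\ref{thm:shot_variance_lower_bound}. Under these conditions a Lyapunov check with third absolute moments $O(n^{-3/2})$ per summand suffices.

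\textbf{Step 3 (plug-in and conclusion).} Each $\bar{\mu}_{4,i}-(\bar{\eta}_i^2)^2$ is consistent for $\mu_{4,i}-\eta_i^4$, uniformly in $i$ by boundedness, so
$$\hat v_L:=\sum_{i=1}^L\frac{\bar{\mu}_{4,i}-(\bar{\eta}_i^2)^2}{n}\quad\text{satisfies}\quad \frac{\hat v_L}{v_L}\xrightarrow{p}1.$$
Showing this requires controlling an average of $L$ small errors, which I would do by Chebyshev using the uniform moment bounds. Slutsky's lemma then gives $S_L/\sqrt{\hat v_L}\to\mathcal{N}(0,1)$. Therefore
$$P\bigl(\eta^2_{\rm batch}\ge\bar{\eta}^2_{\rm batch}-z_\beta\sqrt{\hat v_L}\bigr)=P\bigl(S_L/\sqrt{\hat v_L}\le z_\beta\bigr)\to\Phi(z_\beta)=1-\beta,$$
where $z_\beta$ is the upper-$\beta$ quantile.

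\textbf{Main obstacle.} I expect the joint limit $L,n\to\infty$ to be the main difficulty. The per-group bias and remainder terms are $O(1/n)$ each. Summed over $L$ groups, the total is $O(L/n)$, while $\sqrt{v_L}$ is of order $\sqrt{L/n}$. Their ratio is therefore $O(\sqrt{L/n})$. So I would need either the rate condition $L/n\to0$, or to use the unbiased $(n-1)$-normalized sample variance so the mean bias vanishes exactly. I would adopt the unbiased version. I would then bound the remaining fluctuations of the $(\bar y_i-\mathbb{E}Y_i)^2$ terms, whose total variance is $O(L/n^2)=o(v_L)$.
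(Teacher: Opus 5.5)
Your argument follows the paper's route: a CLT for the sum of the $L$ independent sample variances, then Slutsky with the plug-in $\sum_i(\bar{\mu}_{4,i}-(\bar{\eta}_i^2)^2)/n$. The paper only asserts asymptotic normality of the standardized sum and that the plug-in is harmless. Your bounded-support assumption, Lyapunov check, Chebyshev control of $\hat v_L/v_L$, and use of the $(n-1)$-normalized variance (to remove the $O(L/n)$ aggregate bias in the joint limit) are what turn that sketch into a proof. The paper silently relies on the same unbiasedness when it writes $\mathbb{E}[\bar{\eta}_i^2]=\eta_i^2$.

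There is one concrete error: the nondegeneracy condition does not follow from a uniform lower bound on $\eta_i^2$. Since $\mu_{4,i}-\eta_i^4=\operatorname{Var}\bigl((Y_i-\mathbb{E}Y_i)^2\bigr)$, a two-outcome measurement with eigenvalues $\lambda_1,\lambda_2$ and outcome probability $p$ gives $\mu_4-\eta^4=p(1-p)(1-2p)^2(\lambda_1-\lambda_2)^4$. This vanishes at $p=1/2$, which is exactly where $\eta^2=(\lambda_1-\lambda_2)^2/4$ is maximal. Moreover, depolarizing noise, the source of the floor $\eta_{\mathrm{shot}}^2\ge\alpha\sigma_{\mathrm{uniform}}^2$, pushes such a distribution toward uniform, i.e.\ toward $p=1/2$ when the two eigenspaces have equal dimension. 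In that regime $\operatorname{Var}(\bar{\eta}_i^2)=2\eta_i^4/(n(n-1))$, so your normalizer $v_L$ vanishes. The $O(L/n^2)$ fluctuations you discard as $o(v_L)$ are then the entire fluctuation, and both the Lyapunov ratio and $\hat v_L/v_L\to 1$ break down.

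You therefore have to impose $\frac1L\sum_i(\mu_{4,i}-\eta_i^4)\ge c>0$ as a separate hypothesis rather than derive it from the variance floor. For two-outcome data this means the $p_i$ must stay, on average, away from $0$, $1/2$ and $1$. The paper's sketch needs the same condition and never states it. With that hypothesis added explicitly, the rest of your proof goes through.
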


\begin{comment}
    \begin{lemma} 
Let there be $L$ independent groups, indexed by $i=1, \dots, L$. For each group $i$, let $y_{i,1}, \dots, y_{i,K} \sim \text{i.i.d. } Y_i$ be $n$ observations. Let $\bar{\eta}_i^2$ and $\bar{\mu}_{4,i}$ be the sample variance and sample fourth central moment for the $i$-th group. Let the true sum of variances be $\eta^2_{\textrm{batch}} = \sum_{i=1}^L \eta^2_i$ and its estimator be $\bar{\eta}_{\textrm{batch}}^2 = \sum_{i=1}^L \bar{\eta}_i^2$. Assuming each true fourth central moment $\mu_{4,i}$ is finite, then:
$$
P\left(\eta^2_{\textrm{batch}} \geq \bar{\eta}^2_{\textrm{batch}} - z_\beta\sqrt{\sum_{i=1}^L \frac{1}{n}\left(\bar{\mu}_{4,i} - (\bar{\eta}_i^2)^2\right)}\right) \to 1 - \beta
$$
as $L \to \infty, n \to \infty$.
\end{lemma}
\end{comment}

\begin{proof}
First, we prove a sublemma for single-group case as follow:
\begin{sublemma} [Single-Group Variance Estimation]
\label{lemma:pivot_quantitity_single_group}
    Let $y_1, \dots, y_{n}$ be independent and identically distributed (i.i.d.) samples drawn from a distribution with true variance $\eta^2$ and a finite true fourth central moment $\mu_4$. Let $\bar{\eta}^2$ and $\bar{\mu}_{4}$ be the sample variance and sample fourth central moment, respectively, computed from these observations. For a chosen significance level $\beta$, let $z_\beta$ be the corresponding critical value of the standard normal distribution.

Then, the following probabilistic lower bound holds:
$$
\Pr\left(\eta^2 \geq \bar{\eta}^2 - z_\beta\sqrt{\frac{1}{n}\left(\bar{\mu}_{4} - (\bar{\eta}^2)^2\right)}\right) \to 1 - \beta
$$
as the number of samples $n \to \infty$.
\end{sublemma}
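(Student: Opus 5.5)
The plan is to apply the classical delta-method / CLT argument for the sample variance and then use Slutsky's theorem to replace the unknown asymptotic variance with a consistent plug-in estimate.

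\textbf{Step 1 (CLT for the sample variance).} Write $m=\mathbb{E}[y_1]$ and $\bar y=\frac1n\sum_i y_i$. Decompose
$$\bar{\eta}^2=\frac1n\sum_{i=1}^n (y_i-m)^2-(\bar y-m)^2$$
(up to the factor $n/(n-1)$ if the unbiased version is used, which is asymptotically negligible). The variables $W_i=(y_i-m)^2$ are i.i.d. with mean $\eta^2$ and variance $\mu_4-\eta^4$, which is finite by assumption. The classical CLT then gives
$$\sqrt n\bigl(\tfrac1n\sum W_i-\eta^2\bigr)\Rightarrow\mathcal N(0,\mu_4-\eta^4).$$
For the correction term, $\sqrt n(\bar y-m)^2=\sqrt n(\bar y-m)\cdot(\bar y-m)=O_p(1)\cdot o_p(1)\to 0$ in probability. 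Slutsky's theorem therefore yields
$$\sqrt n(\bar\eta^2-\eta^2)\Rightarrow\mathcal N(0,\mu_4-\eta^4).$$

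\textbf{Step 2 (consistent variance estimate).} Next I would show $\bar\mu_4\to\mu_4$ and $\bar\eta^2\to\eta^2$ in probability. Expanding $\frac1n\sum(y_i-\bar y)^4$ binomially in $(y_i-m)$ and $(\bar y-m)$, each term is a product of a sample raw central moment, which converges by the weak LLN because moments up to order four are finite, and a power of $\bar y-m\to0$. By the continuous mapping theorem, $\bar\mu_4-(\bar\eta^2)^2\to\mu_4-\eta^4$.

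\textbf{Step 3 (studentize and conclude).} Assume the nondegenerate case $\mu_4-\eta^4>0$. Slutsky's theorem then gives
$$\frac{\sqrt n(\bar\eta^2-\eta^2)}{\sqrt{\bar\mu_4-(\bar\eta^2)^2}}\Rightarrow\mathcal N(0,1).$$
The event in the sublemma is exactly the event that this pivot is $\le z_\beta$, where $z_\beta$ is the one-sided critical value with $\Phi(z_\beta)=1-\beta$. Since the standard normal CDF is continuous, the probability converges to $1-\beta$.

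\textbf{Main obstacle.} I expect the main difficulty to be the degenerate case $\mu_4=\eta^4$, which happens for instance for a two-point distribution with equal mass, or for a point mass (where $\eta^2=0$). Here the limit law is degenerate and the stated limit can fail. The natural patch is either to assume $\mu_4-\eta^4>0$ explicitly, or to note that in that case the inequality holds with probability tending to at least $1-\beta$, which is all that the privacy argument later requires. A secondary care point is the $o_p(1)$ control of the $\bar y$ cross terms in Step 2, which is routine given finite fourth moments.
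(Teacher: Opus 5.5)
Your proposal is correct and follows the same route as the paper: you apply the CLT and Slutsky's theorem to the studentized statistic $T_n=\sqrt{n}\,(\bar\eta^2-\eta^2)/\sqrt{\bar\mu_4-(\bar\eta^2)^2}$ and read the event as exactly $\{T_n\le z_\beta\}$, except that you actually derive the CLT for $\bar\eta^2$ and the consistency of the plug-in variance, which the paper simply asserts. Your degeneracy caveat is well founded, since the paper's argument silently needs $\mu_4-\eta^4>0$ and the stated limit is false otherwise (a point mass gives probability $1$), but your fallback claim of ``at least $1-\beta$'' there holds only for the $1/n$-normalized variance with $z_\beta\ge 0$: with the unbiased $1/(n-1)$ version used in the paper's algorithm, a symmetric two-point law makes $\bar\mu_4-(\bar\eta^2)^2<0$ with probability tending to about $0.52$, so the clean fix is simply to assume $\mu_4>\eta^4$.
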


\begin{proof}
The proof is based on the asymptotic distribution of the sample variance. Let the true variance be $\eta^2$ and the true fourth central moment be $\mu_4$. The corresponding sample estimators are $\bar{\eta}^2$ and $\bar{\mu}_4$.

By the Central Limit Theorem and Slutsky's Theorem, the standardized statistic $T_n$ is known to converge in distribution to a standard normal random variable $Z \sim \mathcal{N}(0,1)$:
$$
T_n = \frac{\sqrt{n}(\bar{\eta}^2 - \eta^2)}{\sqrt{\bar{\mu}_{4} - (\bar{\eta}^2)^2}} \xrightarrow{d} Z
$$
By definition of the one-sided critical value $z_\beta$, we know that for a standard normal variable $Z$, $\Pr(Z \leq z_\beta) = 1-\beta$. Since $T_n$ converges in distribution to $Z$, we can state the asymptotic probability:
$$
\Pr(T_n \leq z_\beta) \to 1 - \beta \quad \text{as } n \to \infty
$$
Substituting the definition of $T_n$, we have:
$$
\Pr\left(\frac{\sqrt{n}(\bar{\eta}^2 - \eta^2)}{\sqrt{\bar{\mu}_{4} - (\bar{\eta}^2)^2}} \leq z_\beta\right) \to 1 - \beta
$$
We now algebraically manipulate the inequality inside the probability to isolate the true parameter $\eta^2$. Each of the following steps is equivalent.
\begin{align*}
    \frac{\sqrt{n}(\bar{\eta}^2 - \eta^2)}{\sqrt{\bar{\mu}_{4} - (\bar{\eta}^2)^2}} &\leq z_\beta \\
    \bar{\eta}^2_{n} - \eta^2 &\leq \frac{z_\beta}{\sqrt{n}}\sqrt{\bar{\mu}_{4} - (\bar{\eta}^2)^2} \\
    -\eta^2 &\leq -\bar{\eta}^2 + \frac{z_\beta}{\sqrt{n}}\sqrt{\bar{\mu}_{4} - (\bar{\eta}^2)^2} \\
    \eta^2 &\geq \bar{\eta}^2 - \frac{z_\beta}{\sqrt{n}}\sqrt{\bar{\mu}_{4} - (\bar{\eta}^2)^2}
\end{align*}
We can rewrite the final term by bringing the $1/\sqrt{n}$ factor inside the square root:
$$
\eta^2 \geq \bar{\eta}^2 - z_\beta\sqrt{\frac{1}{n}\left(\bar{\mu}_{4} - (\bar{\eta}^2)^2\right)}
$$
Since this inequality is algebraically equivalent to our starting point ($T_n \leq z_\beta$), the probability of it being true must have the same limit. Therefore, we have proven the statement in the sublemma.
\end{proof}

The proof extends the single-group case by applying the Central Limit Theorem to the sum of the sample variances, $\bar{\eta}_{\textrm{batch}}^2 = \sum_{i=1}^L \bar{\eta}_i^2$.

From the original lemma, we know the asymptotic variance of each individual sample variance $\bar{\eta}_i^2$ as $n \to \infty$ is given by:
\[
\text{Var}(\bar{\eta}_i^2) \approx \frac{1}{n}(\mu_{4,i} - (\eta_i^2)^2)
\]
We consider $\bar{\eta}_{\textrm{batch}}^2$ as a sum of $L$ independent random variables ($\bar{\eta}_1^2, \dots, \bar{\eta}_L^2$). By the Central Limit Theorem, for a large number of groups $L$, the distribution of $\bar{\eta}_{\textrm{batch}}^2$ approaches a normal distribution. The mean of this sum is the sum of the means, $E[\bar{\eta}_{\textrm{batch}}^2] = \sum_{i=1}^L E[\bar{\eta}_i^2] = \sum_{i=1}^L \eta_i^2 = \eta^2_{\textrm{batch}}$. The variance of the sum is the sum of the variances, $\text{Var}(\bar{\eta}_{\textrm{batch}}^2) = \sum_{i=1}^L \text{Var}(\bar{\eta}_i^2)$.

Thus, the standardized sum converges in distribution to a standard normal variable $Z \sim \mathcal{N}(0,1)$. By Slutsky's Theorem, we can replace the true parameters in the variance term with their consistent sample estimators without changing the asymptotic distribution. This gives the test statistic $T_L$:
\[
T_L = \frac{\bar{\eta}^2_{\textrm{batch}} - \eta^2_{\textrm{batch}}}{\sqrt{\sum_{i=1}^L \frac{1}{n}\left(\bar{\mu}_{4,i} - (\bar{\eta}_i^2)^2\right)}} \xrightarrow{d} Z
\]
By definition of the critical value $z_\beta$, we have $\Pr(T_L \leq z_\beta) \to 1 - \beta$ as $L \to \infty$. We now isolate $\eta^2_{\textrm{batch}}$ within the probability statement.
\begin{align*}
    \frac{\bar{\eta}^2_{\textrm{batch}} - \eta^2_{\textrm{batch}}}{\sqrt{\sum_{i=1}^L \frac{1}{n}\left(\bar{\mu}_{4,i} - (\bar{\eta}_i^2)^2\right)}} &\leq z_\beta \\
    \bar{\eta}^2_{\textrm{batch}} - \eta^2_{\textrm{batch}} &\leq z_\beta \sqrt{\sum_{i=1}^L \frac{1}{n}\left(\bar{\mu}_{4,i} - (\bar{\eta}_i^2)^2\right)} \\
    -\eta^2_{\textrm{batch}} &\leq -\bar{\eta}^2_{\textrm{batch}} + z_\beta \sqrt{\sum_{i=1}^L \frac{1}{n}\left(\bar{\mu}_{4,i} - (\bar{\eta}_i^2)^2\right)} \\
    \eta^2_{\textrm{batch}} &\geq \bar{\eta}^2_{\textrm{batch}} - z_\beta \sqrt{\sum_{i=1}^L \frac{1}{n}\left(\bar{\mu}_{4,i} - (\bar{\eta}_i^2)^2\right)}
\end{align*}
Since the final inequality is algebraically equivalent to $T_L \leq z_\beta$, its probability must also converge to $1-\beta$, which completes the proof.
\end{proof}

\begin{lemma}
\label{lemma:mechanism_probabilistic_dp}
Let $\mathcal{M}$ be a randomized mechanism that operates in one of two modes. With probability $\gamma$, it operates in a ``Good Mode,'' where its execution is equivalent to a mechanism $\mathcal{M}_{good}$ that is $(\varepsilon, \delta)$-differentially private. With probability $1-\gamma$, it operates in a ``Bad Mode,'' where no privacy guarantee holds. Then, the overall mechanism $\mathcal{M}$ is $(\varepsilon, \gamma\delta + (1-\gamma))$-differentially private.
\end{lemma}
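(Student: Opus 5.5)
Fix adjacent datasets $D_1, D_2$ and a measurable output set $S \subseteq \mathcal{R}$. We condition on the mode of operation. Write $\mathcal{M}(D) = \mathcal{M}_{good}(D)$ on the good event $G$, which has probability $\gamma$, and $\mathcal{M}(D) = \mathcal{M}_{bad}(D)$ on the complement $G^c$, which has probability $1-\gamma$. We assume, as the lemma implicitly intends, that the mode is selected with the same probability $\gamma$ regardless of which of $D_1, D_2$ is the input. The law of total probability then gives
\begin{equation*}
\Pr[\mathcal{M}(D_1)\in S] = \gamma \Pr[\mathcal{M}_{good}(D_1)\in S] + (1-\gamma)\Pr[\mathcal{M}_{bad}(D_1)\in S].
\end{equation*}

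We bound the two terms separately. For the good term, the $(\varepsilon,\delta)$-DP of $\mathcal{M}_{good}$ gives $\Pr[\mathcal{M}_{good}(D_1)\in S] \le e^{\varepsilon}\Pr[\mathcal{M}_{good}(D_2)\in S] + \delta$. For the bad term, we have no privacy guarantee, so we use the trivial bound $\Pr[\mathcal{M}_{bad}(D_1)\in S]\le 1$. Combining these,
\begin{equation*}
\Pr[\mathcal{M}(D_1)\in S] \le e^{\varepsilon}\gamma\Pr[\mathcal{M}_{good}(D_2)\in S] + \gamma\delta + (1-\gamma).
\end{equation*}

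It remains to compare the first term with $\Pr[\mathcal{M}(D_2)\in S]$. Expanding $\Pr[\mathcal{M}(D_2)\in S]$ by the same total-probability decomposition shows that it is at least $\gamma\Pr[\mathcal{M}_{good}(D_2)\in S]$, because the bad-mode contribution is nonnegative. Substituting this gives $\Pr[\mathcal{M}(D_1)\in S]\le e^{\varepsilon}\Pr[\mathcal{M}(D_2)\in S] + \gamma\delta + (1-\gamma)$. Since $D_1, D_2$ and $S$ were arbitrary, this is exactly $(\varepsilon, \gamma\delta+(1-\gamma))$-DP.

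The main subtlety is the assumption that the mode probability $\gamma$ is the same under both $D_1$ and $D_2$. In Adaptive Q-ShiftDP the good event is $\bm{\eta}_B^2\ge\hat{\bm{\eta}}_B^2$, and its probability may depend on the data. If so, I would reinterpret $\gamma$ as a uniform lower bound on the good-mode probability over all datasets. The same argument still works, since it only uses the fact that the bad-mode mass is at most $1-\gamma$. With this reading the algebra itself is routine.
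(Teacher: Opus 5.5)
Your proof is correct and matches the paper's argument step for step: total probability over the mode, the $(\varepsilon,\delta)$ bound on the good-mode term, the trivial bound $1$ on the bad-mode term, and then $\Pr[\mathcal{M}(D_2)\in S]\ge\gamma\Pr[\mathcal{M}_{good}(D_2)\in S]$ to return to the unconditional probability.

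One small correction to your closing remark. Suppose the good-mode probabilities $\gamma_1,\gamma_2\ge\gamma$ under $D_1,D_2$ differ. Then the last substitution is not literally the same argument, because it leaves a factor $\gamma_1/\gamma_2>1$ when $\gamma_1>\gamma_2$. The fix is to first write $\gamma_1 a_1\le\min(\gamma_1,\gamma_2)\,a_1+\max(0,\gamma_1-\gamma_2)$, using $a_1=\Pr[\mathcal{M}_{good}(D_1)\in S]\le 1$. This gives the additive term $1-\min(\gamma_1,\gamma_2)(1-\delta)\le\gamma\delta+(1-\gamma)$, so your uniform-lower-bound reading does still hold.
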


\begin{proof}
Let $x$ and $y$ be any two adjacent databases, and let $S$ be any set of possible outcomes. The mechanism's choice of mode is determined by its internal randomness, which is resampled for each run. Therefore, the events of being in a good mode are independent for the runs on $x$ and $y$.

Let $E_{good, x}$ be the event that the mechanism operates in the Good Mode when run on database $x$.
Let $E_{good, y}$ be the event that the mechanism operates in the Good Mode when run on database $y$.
We are given $\Pr(E_{good, x}) = \Pr(E_{good, y}) = \gamma$.

By the law of total probability, we can write the probability of an output for database $x$:
\begin{equation}
\Pr[\mathcal{M}(x) \in S] = \Pr[\mathcal{M}(x) \in S | E_{good, x}] \cdot \gamma + \Pr[\mathcal{M}(x) \in S | \neg E_{good, x}] \cdot (1-\gamma)
\label{eq:total_prob_x}
\end{equation}

We now bound the two terms on the right-hand side. The mechanism that runs conditioned on the Good Event is, by assumption, the same $(\varepsilon, \delta)$-DP mechanism, $\mathcal{M}_{good}$, regardless of the input. Thus, its behavior on $x$ (conditioned on $E_{good,x}$) can be compared to its behavior on $y$ (conditioned on $E_{good,y}$):
\[
\Pr[\mathcal{M}(x) \in S | E_{good, x}] \le e^{\varepsilon} \Pr[\mathcal{M}(y) \in S | E_{good, y}] + \delta
\]
For the Bad Event, we can only bound the probability by 1:
\[
\Pr[\mathcal{M}(x) \in S | \neg E_{good, x}] \le 1
\]

Substituting these bounds into Equation \ref{eq:total_prob_x}:
\begin{align}
\Pr[\mathcal{M}(x) \in S] &\le \left( e^{\varepsilon} \Pr[\mathcal{M}(y) \in S | E_{good, y}] + \delta \right) \cdot \gamma + 1 \cdot (1-\gamma) \nonumber \\
&\le \gamma \cdot e^{\varepsilon} \Pr[\mathcal{M}(y) \in S | E_{good, y}] + \gamma\delta + (1-\gamma) \label{eq:bound1}
\end{align}

To complete the proof, we relate the conditional probability for database $y$ back to the total probability. By the law of total probability for the run on $y$:
\[
\Pr[\mathcal{M}(y) \in S] = \Pr[\mathcal{M}(y) \in S | E_{good, y}] \cdot \gamma + \Pr[\mathcal{M}(y) \in S | \neg E_{good, y}] \cdot (1-\gamma)
\]
Since probabilities are non-negative, it follows that:
\[
\Pr[\mathcal{M}(y) \in S] \ge \Pr[\mathcal{M}(y) \in S | E_{good, y}] \cdot \gamma
\]
\[
\implies \Pr[\mathcal{M}(y) \in S | E_{good, y}] \le \frac{\Pr[\mathcal{M}(y) \in S]}{\gamma}
\]

Finally, we substitute this back into our main inequality (Equation \ref{eq:bound1}):
\begin{align*}
\Pr[\mathcal{M}(x) \in S] &\le \gamma \cdot e^{\varepsilon} \left( \frac{\Pr[\mathcal{M}(y) \in S]}{\gamma} \right) + \gamma\delta + (1-\gamma) \\
&\le e^{\varepsilon} \Pr[\mathcal{M}(y) \in S] + \gamma\delta + (1-\gamma)
\end{align*}

This expression matches the definition of $(\varepsilon, \gamma\delta + (1-\gamma))$-differential privacy.
\end{proof}

We are now ready to prove the main theorem for the advanced adaptive mechanism, \NAMEB.
The privacy of the algorithm for a given batch depends on an aggregated estimator, $\hat{\bm{\eta}}_B^2$, which provides a probabilistic lower bound for the true sum of all shot-noise variances $\bm{\eta}_B^2$ in that batch.

The algorithm is in a ``Good Mode'' if this estimator $\hat{\bm{\eta}}_B^2$ is a correct lower bound on $\bm{\eta}_B^2$. Lemma~\ref{lemma:pivot_quantitity} directly provides the probability for this single event:
$$
\operatorname{Pr}_{\text{good}} = \Pr(\bm{\eta}_B^2 \ge \hat{\bm{\eta}}_B^2) \to 1-\beta
$$
Consequently, the algorithm is in a ``Bad Mode'' with the remaining probability, $\operatorname{Pr}_{\text{bad}} = 1 - \Pr_{\text{good}} \to \beta$.

By applying Lemma ~\ref{lemma:mechanism_probabilistic_dp} with these probabilities, we conclude that the adaptive mechanism is $(\varepsilon, \Pr_{\text{good}} \cdot \delta + \Pr_{\text{bad}})$-DP. This yields the final privacy guarantee of:
$$
(\varepsilon, (1-\beta)\delta + \beta)\text{-DP}
$$

\section{Related Works}
\label{sec:related_works}
In this section, we first review the literature on differential privacy in classical machine learning. Then, we discuss existing works that extend this concept to quantum machine learning.

\subsection{Differential Privacy in Classical Machine Learning}

The goal of DP in Classical ML is to modify the ML training process so the final model is differentially private, i.e, the model would not be sufficiently different, no matter whether a particular instance was or was not included in the training data. Early works focused on modifying already trained model weights, a strategy often called output perturbation. Originating from Dwork’s definition of differential privacy \cite{dwork2006differential}, these methods add noise proportional to the sensitivity of the final model parameters. However, computing tight sensitivity bounds is tractable only for relatively simple models such as linear regression due to the complex dependence of weights on the data \cite{zhang2012functional}. For example, \cite{chaudhuri2008privacy} proposed an algorithm that injects calibrated noise into trained logistic regression models to achieve differential privacy.

Another line of work achieves DP-Training by perturbing the loss function itself with noise, which is then optimized normally using SGD or other optimizers \cite{chaudhuri2011differentially, kifer2012private, phan2016differential}. Both noising weights and objective perturbation generally require strong convexity assumptions and convergence to a global optimum to guarantee privacy. Under these assumptions they can provide very strong privacy ($\varepsilon \approx 1$) with minimal accuracy loss. Subsequent studies have attempted to relax the requirements of global optimality \cite{iyengar2019towards} or convexity \cite{neel2019differentially}, but their privacy guarantees still depend on the loss function being bounded and Lipschitz continuity in the model weights. These methods remain computationally expensive and are therefore practical only for relatively small datasets.

Because most deep models are non-convex and are not trained to the global optimality due to time constraints and the computing cost, such loss perturbation techniques are rarely practical for modern deep learning. Gradient perturbation based methods are currently the most practical approach for ensuring rigorous privacy guarantees in non-convex settings such as large scale deep neural networks. The most widely adopted variant is differentially private stochastic gradient descent (DP-SGD) \cite{song2013stochastic,abadi2016deep}, which has become the standard method for differentially private training. Briefly, DP-SGD clips each per example gradient to a fixed norm, adds Gaussian noise to the averaged gradients, and tracks the cumulative privacy loss using accounting techniques. Recent advances in DP-SGD focus on tightening privacy accounting bounds \cite{mironov2017renyi,dong2022gaussian}, developing adaptive clipping techniques \cite{chen2020understanding,he2023exploring} to improve the noise–utility trade-off, and introducing memory-efficient algorithms \cite{li2021large,tang2024private} to mitigate the large memory footprint caused by gradient accumulation. 
\subsection{Differential Privacy in Quantum Setting}

With the growing popularity of quantum computing, the concept of differential privacy has naturally been extended to quantum machine learning (QML). The first formal notion of quantum differential privacy (QDP) was introduced by \cite{ZhouQDPfirst}, who defined it as a direct quantum analogue of classical differential privacy. Building on this foundation, \cite{Du2021Depolarizing} explored its practical relevance to QML. Specifically, they illustrate that intrinsic quantum noise can be harnessed to achieve QDP in quantum classifiers. For example, they study the depolarizing noise channel as a privacy-preserving mechanism, and then derive a precise relationship between the noise strength and the resulting $(\varepsilon, \delta)$-QDP guarantee. Notably, they also established that this mechanism not only preserves the differential privacy but simultaneously improves the model’s adversarial robustness.

Subsequently, \cite{Hirche2023InfoTheoryQDP} advanced the field by developing a comprehensive information-theoretic framework for QDP. In this work, the authors leverage tools such that quantum relative entropy and channel divergence to establish a rigorous and general foundation for analyzing privacy guarantees in quantum settings. This framework not only formalizes the definition of QDP but also connects it with fundamental quantities in quantum information theory. Building on these foundations, more recent studies \cite{bai2024quantum,watkins2023quantum,song2025towards} have turned their attention to the practical implications of QDP under realistic quantum noise. In particular, these works investigate how different types of quantum noise channels, such as depolarizing, bit-flip, and phase-flip channels, contribute to the privacy budget.

Most existing works in quantum differential privacy (QDP) focus on injecting quantum noise into the forward pass to preserve privacy, which is conceptually equivalent to input perturbation in classical DP~\cite{Lcuyer2018CertifiedRT}. However, in classical neural networks, perturbing the forward pass (i.e., inputs or outputs) often results in higher utility degradation than perturbing model's gradients~\cite{Jarin2022_InOutGradientPerturb}. The same limitation may arise in QDP. Recently, a few studies~\cite{YenChi2021,YenChi2025} have explored adding noise to quantum gradients. But, these approaches remain largely ad hoc, as they directly apply DP-SGD to quantum gradients without accounting for their unique properties, such as boundedness and inherent stochasticity. This can lead to injecting an excessive amount of artificial DP noise. Therefore, it is crucial to design methods that exploit the structural characteristics of quantum gradients to achieve stronger privacy–utility trade-offs.

\section{Experimental Settings}\label{appx:extra_exp}
% Note: Since clipping norm is fixed, we can run sweep on Learning rate and batch size and epsilon. Also, for ablation, we can test with different optimizer (Adam, SGD, ...) and privacy accountant. Also, is there a naive baseline method we can compare to? Maybe a straightforward quantum DP method that adds a lot of noise. Plot convergence rate. Plot fairness ( biased dataset)?
\subsection{Datasets and Quantum Models.}\label{appx:dataset_model}
We assess the performance and privacy–utility trade-off of Q-ShiftDP on three representative datasets from recent QML benchmark suites \cite{bowles2024better, recio2025train} — Bars \& Stripes, Binary Blobs, and Downscaled MNIST, which are specifically designed to offer a controlled and reliable test bed for evaluating quantum models. The Bars and Stripes dataset is a simple, translation-invariant benchmark consisting of 2D pixel grids representing either horizontal bars or vertical stripes. %To add variability, independent Gaussian noise with a standard deviation of 0.5 is added to every pixel. 
The Binary Blobs dataset is a bitstring version of the common Gaussian blobs dataset. Each sample is a 16-bit string generated by randomly selecting one of eight predefined bit patterns and independently flipping each bit with a 5\% probability. This process creates a distribution with eight clearly separated, visually distinct classes. Finally, Downscaled MNIST provides a reduced-dimension digit classification task derived from the original MNIST dataset, preserving class-discriminative information while keeping the input size tractable for quantum circuits. Our classifier is a fully quantum 4-qubit neural network (QNN) implemented in PennyLane and trained using PyTorch. For Bars \& Stripes, we employ a single Strongly Entangling Layer, while for Binary Blobs and Downscaled MNIST, we use a five-layer Strongly Entangling Layers ansatz. The first two basis probabilities are used for binary classification, with their log-probabilities passed to the NLL loss. This is equivalent to using two binary observables $O_0$ and $O_1$, one for each class, which simplifies implementation. %Gradients are then privatized using \NAMEA.
\subsection{Impact of Hyperparameters on the Performance of \NAMEA}
\label{appx:hyperparam_tuning}
Differentially private stochastic optimizers are known to be highly sensitive to hyperparameter selection \cite{bu2023automatic, sander2023tan}; even slight adjustments to the clipping norm or batch size can lead to substantial performance changes. In contrast to DP-SGD which must clip unbounded per-example gradients to guarantee privacy, \NAMEA exploits the parameter-shift rule and the bounded spectrum of quantum observables to compute an exact analytic sensitivity bound (see Lemma \ref{lemma:appx:L2_sensitivity}), removing the need to manually tune a clipping norm. To investigate how batch size and learning rate affect Q-ShiftDP’s performance, we perform a grid search over batch sizes $\{32,64,128,256,512\}$, learning rates $\{0.2,0.1,0.05,0.01,0.005\}$, and privacy budgets $\varepsilon\in\{0.1,0.5,1.0\}$. For simplicity, this analysis assumes analytic expectation values rather than a finite number of measurement shots.

\begin{figure}[h!]
    \centering
    \includegraphics[width=1\linewidth]{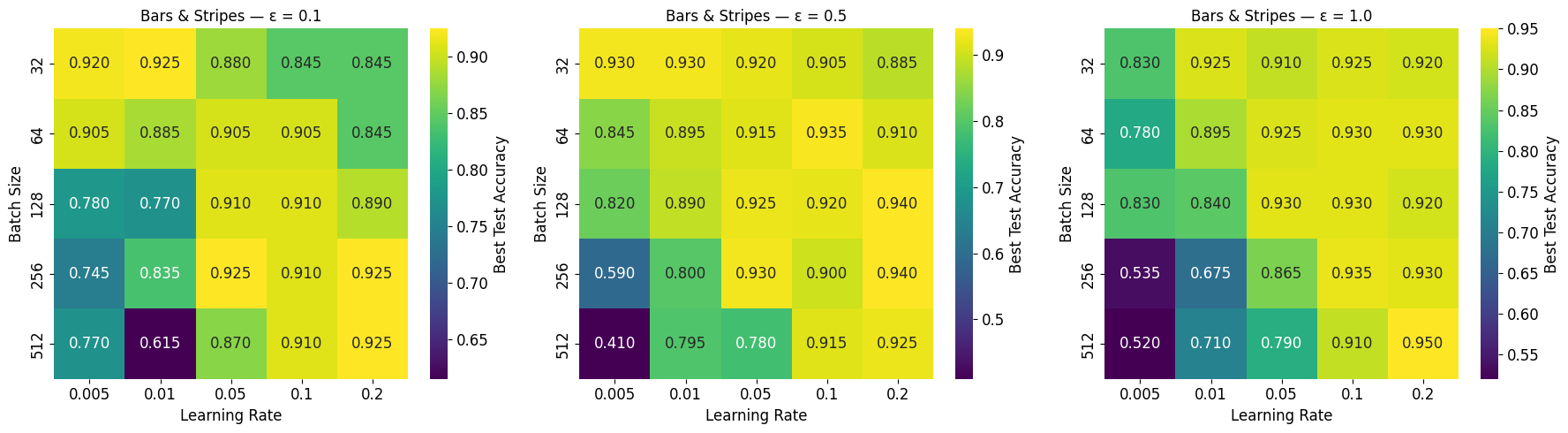}
    \caption{ Test accuracy on Bars and Stripes dataset using QNN with gradient privated under Q-ShiftDP by different batch size and learning rates.}
    \label{fig:bar_stripe}
\end{figure}

\begin{figure}[h!]
    \centering
    \includegraphics[width=1\linewidth]{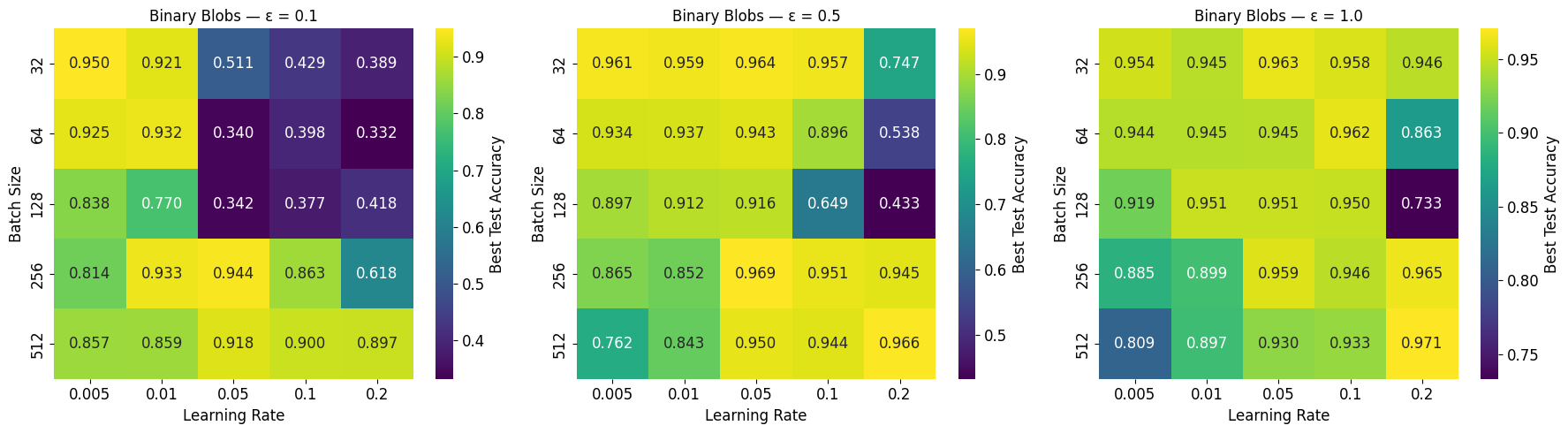}
    \caption{ Test accuracy  on Binary Blobs dataset of QNN with gradient privated using Q-ShiftDP by different batch size and learning rates}
    \label{fig:binaryblob}
\end{figure}

\begin{figure}[h!]
    \centering
    \includegraphics[width=1\linewidth]{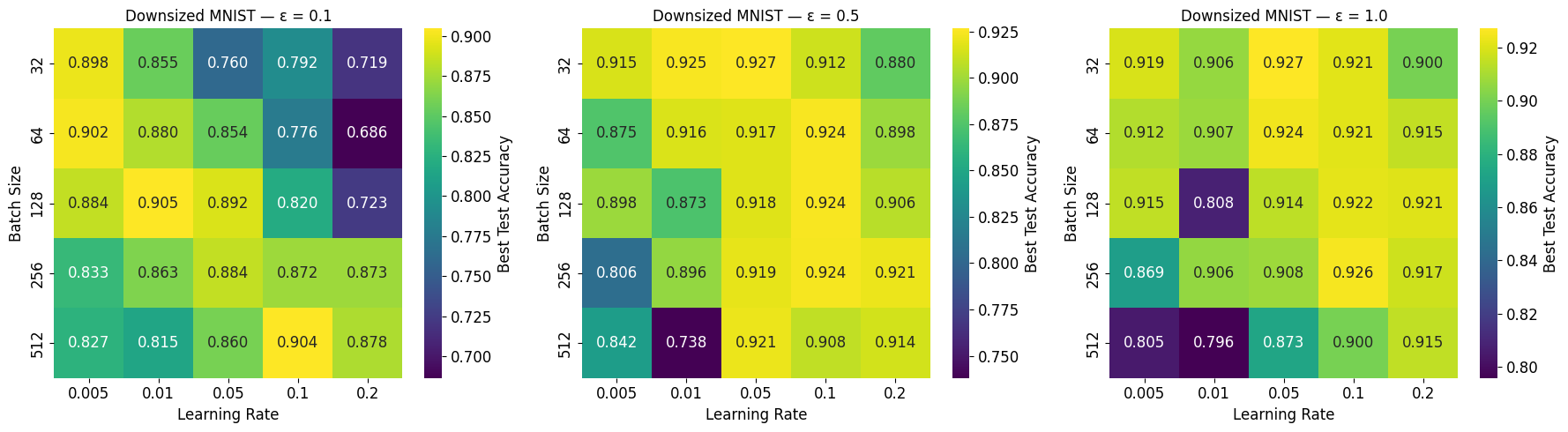}
    \caption{ Test accuracy on Downsized MNIST dataset of QNN with gradient privated using Q-ShiftDP by different batch size and learnifng rates}
    \label{fig:mnist}
\end{figure}

Figure \ref{fig:bar_stripe}, \ref{fig:binaryblob} and \ref{fig:mnist} shows that privacy budget, batch size, and learning rate strongly influence DP training on all datasets. Higher privacy budgets (larger $\varepsilon$) lead to better accuracy, while very small values ($\approx 0.1$) hurt performance. Large batche size usually perform best when privacy is strong ($\varepsilon = 0.1$ or 0.5), especially when paired with a large enough learning rate. This is consistent with prior DP-SGD literature showing that larger batch sizes yield more stable training across most privacy budgets \cite{de2022unlocking}. For the remaining experiments, we use a batch size of $B=512$ and a learning rate of $\text{lr}=0.2$, as this combination remains stable across all tested privacy budgets and datasets.

\subsection{Impact of number of \emph{shots} on performance of Q-ShiftDP}
\label{appx:shots}

\begin{table}[ht]
\centering
\begin{tabular}{ccccc}
\toprule
$\varepsilon$ & 1{,}000 shots & 10{,}000 shots & 100{,}000 shots & $\infty$ shots \\
\midrule
1 & 0.83 & 0.91 & 0.91 & 0.950 \\
0.5 & 0.82 & 0.90 & 0.90 & 0.925 \\
0.1   & 0.81 & 0.86 & 0.89 & 0.925 \\
\bottomrule
\end{tabular}
\caption{Accuracy of the QNN model on Bars and Stripes dataset for different privacy budgets $\varepsilon$ and numbers of shots. The $\infty$ column represents the ideal case with no sampling noise.}
\label{tab:dp_shots}
\end{table}

Table \ref{tab:dp_shots} shows the impact of number of \emph{shots} on the performace of Q-ShiftDP. We observe that increasing the number of shots consistently improves model accuracy across all privacy budgets. With only $1{,}000$ shots the gradient estimates are noisy, leading to reduced performance, while using $100{,}000$ or infinitely many shots brings the accuracy closer to the ideal non-sampling case. This effect is especially important when the privacy budget is tight (small $\varepsilon$), where the added stochasticity from few shots compounds with the differential privacy noise and harms learning.

%\subsection{Comparison to input noise perturbation baseline (PixelDP)}
%\label{appx:pixeldp}
%Figure  \ref{fig:compare} compares the privacy–utility trade-off between PixelDP and Q-ShiftDP on Bars \& Stripes. Figure  \ref{fig:comparerest} further shows results on Downscaled MNIST and Binary Blobs. Overall, Q-ShiftDP achieves better utility preservation under the same privacy guarantees compared to to PixelDP.

\subsection{Performance of \NAMEB}
In this section, we provide experimental validation for our adaptive algorithm, \textbf{\NAMEB}. The central premise behind \NAMEB is that the non-adaptive \NAMEA algorithm relies on a conservative, worst-case lower bound for the shot-noise variance ($\sigma^2_{\text{shot}}$), which can be trivially zero or based on a small depolarizing noise factor $\alpha$. We hypothesize that the empirically measured variance is, in practice, significantly higher.
To demonstrate the practical benefits of exploiting this, our evaluation is structured in three parts. First, we visualize the distribution of empirically measured shot-noise variances to confirm that they are consistently greater than the theoretical lower bound. Then, we quantify the resulting reduction in the required artificial noise ($\sigma^2_B$) when using our adaptive estimator. Finally, we compare the end-to-end performance of \NAMEB against the non-adaptive \NAMEA in various depolarizing noise settings, showing that this reduction in noise translates directly to improved utility.

To begin, we set up experiments to measure the distribution of empirically observed shot-noise variances. We consider a PQC with $n = 4$ qubits and 12 parameterized gates. For each of $1000$ random samples, consisting of input $x$ and parameters $\theta$, we simulate the exact single-shot variance according to Eq.~13. The simulations are performed with two observables: a binary observable with $2$ eigenvalues, which is also used throughout our experiments on classification tasks, and a non-degenerate observable with $2^n$ eigenvalues, which is commonly used in regression tasks. Figure~\ref{fig:adaptive:statistics:binary} shows the statistics of practical single-shot variances for the binary observable, where we observe that the practical variances remain far from the global variance lower bound for all values of $\alpha$. The same trend is observed in Figure~\ref{fig:adaptive:statistics:nondegenerative}, which reports the statistics for the non-degenerate observable. These results highlight the necessity of estimating practical variances to reduce the amount of required artificial noise.

Next, we quantify the practical benefit of the adaptive mechanism by measuring the reduction in artificial noise it achieves. For each batch, we define the $\text{Artificial Noise Reduction} (\%)$ as the fraction of the baseline artificial noise that \NAMEB is able to eliminate by leveraging the inherent shot noise:
$$
\text{Artificial Noise Reduction(\%)} = \frac{C_{\text{DP}}-\sigma_B^2}{C_{\text{DP}}} \times 100
$$
Figure~\ref{fig:noise_reduction} plots the average reduction across various numbers of shots ($N_s \in \{10^2, 10^3, 10^4\}$) and batch sizes ($B \in \{64, 128, 256, 512\}$).  First, the reduction is most dramatic when the number of shots is low. With only $100$ shots, the inherent randomness from measurement is substantial, allowing the algorithm to reduce the required artificial noise by up to 16\%. As the number of shots increases to $10^3$ and $10^4$, the gradient estimates become more precise (less shot noise), diminishing the contribution from inherent randomness. Consequently, the noise reduction drops to around $4\%$ and less than $1\%$, respectively. Second, the relative impact of shot noise is greater for smaller batches. For instance, with $100$ shots, a batch of $64$ achieves a $16\%$ reduction, whereas a larger batch of $512$ sees a smaller reduction of less than $12\%$.

Finally, we demonstrate the practical efficiency of \NAMEB\ by comparing its performance with \NAMEA\ under varying environmental conditions, represented by the strength of depolarizing noise $\alpha \in \{0, 0.1, 0.2\}$. For a fair comparison, both methods are trained with $\varepsilon = 1$, $\delta = \frac{1}{|S|}$ with $|S|$ is the dataset's size, and a small $\beta = 10^{-5} \ll \delta$, ensuring an equivalent level of differential privacy. The QML model used in this experiment consists of a single layer and is trained on the Bars and Stripes dataset ($|S| = 1000$) with $1000$ shots and a batch size of $512$. Figure~\ref{fig:acc_vs_alpha_bar} presents the comparison results. Specifically, it shows that across all three noise settings, \NAMEB\ achieves approximately $10\%$ higher accuracy than \NAMEA. These results highlight the effectiveness of reducing artificial noise in improving the practical performance of differentially private QML models.

\begin{figure}[t]
    \centering

    % First row: three result figures
    \begin{subfigure}[b]{0.32\textwidth}
        \centering
        \includegraphics[width=\textwidth]{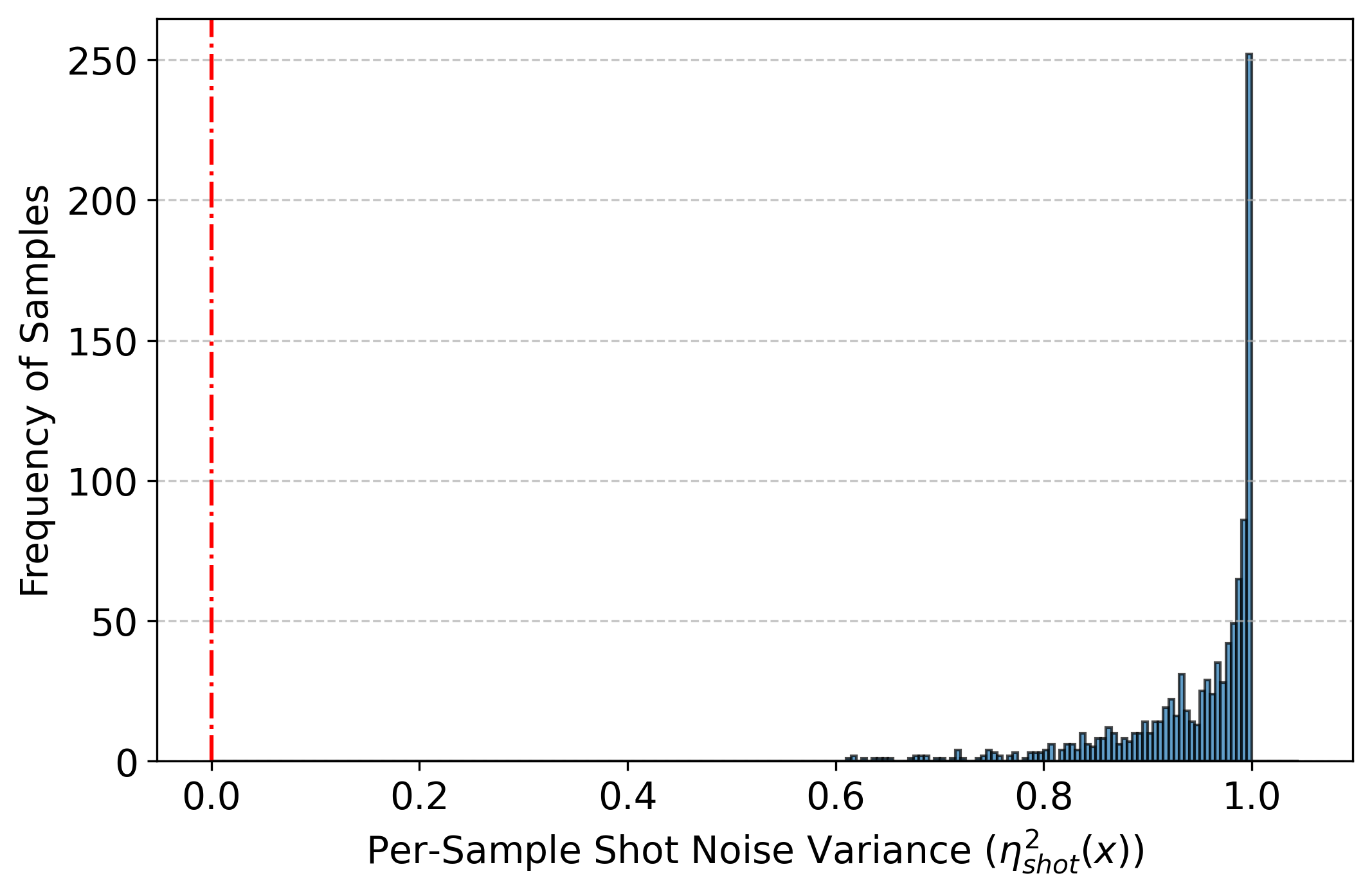}
        %\caption{Result 1} % optional 
        \caption{$\alpha = 0.0$}
    \end{subfigure}
    \hfill
    \begin{subfigure}[b]{0.32\textwidth}
        \centering
        \includegraphics[width=\textwidth]{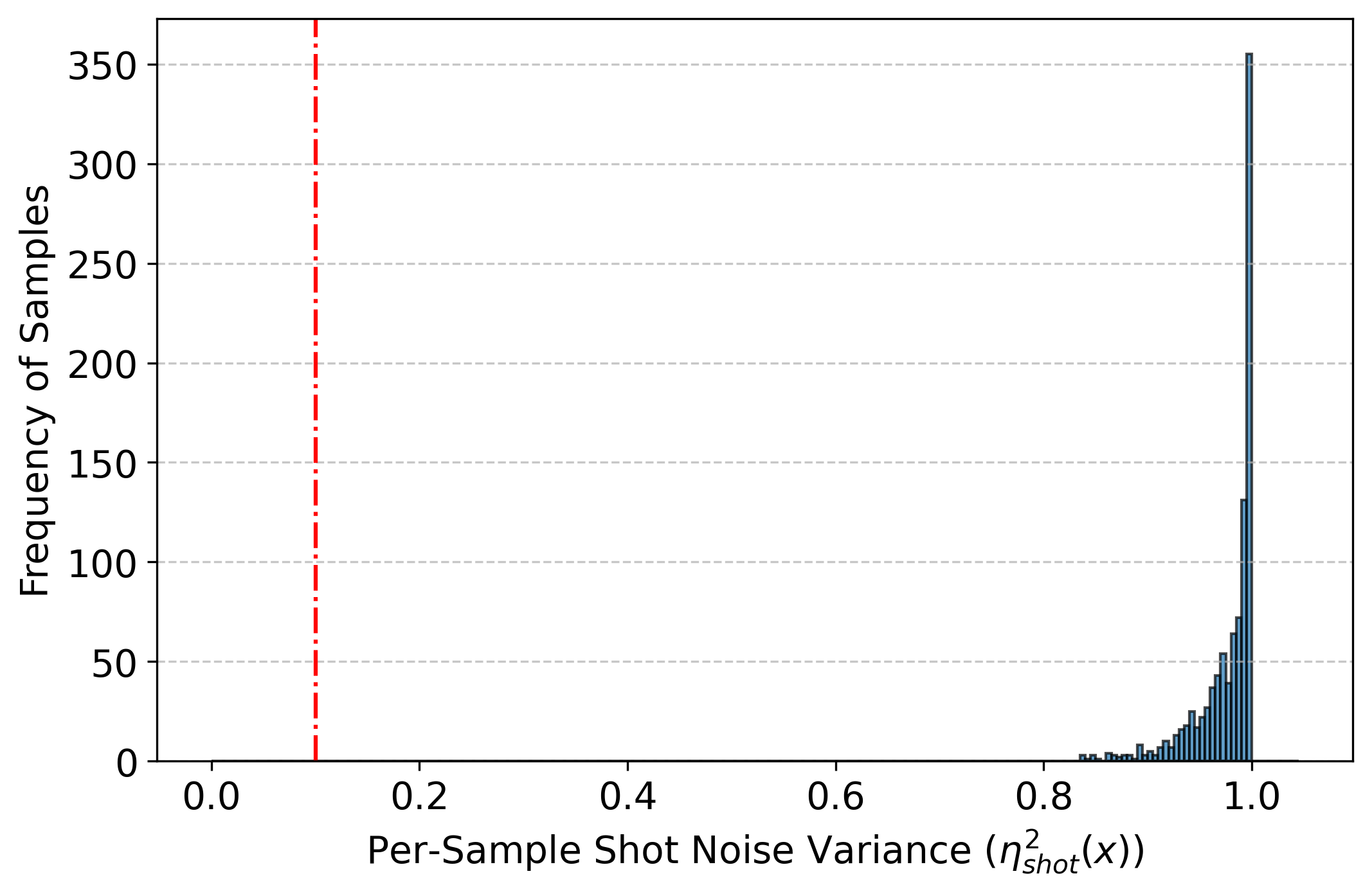}
        \caption{$\alpha = 0.1$}
    \end{subfigure}
    \hfill
    \begin{subfigure}[b]{0.32\textwidth}
        \centering
        \includegraphics[width=\textwidth]{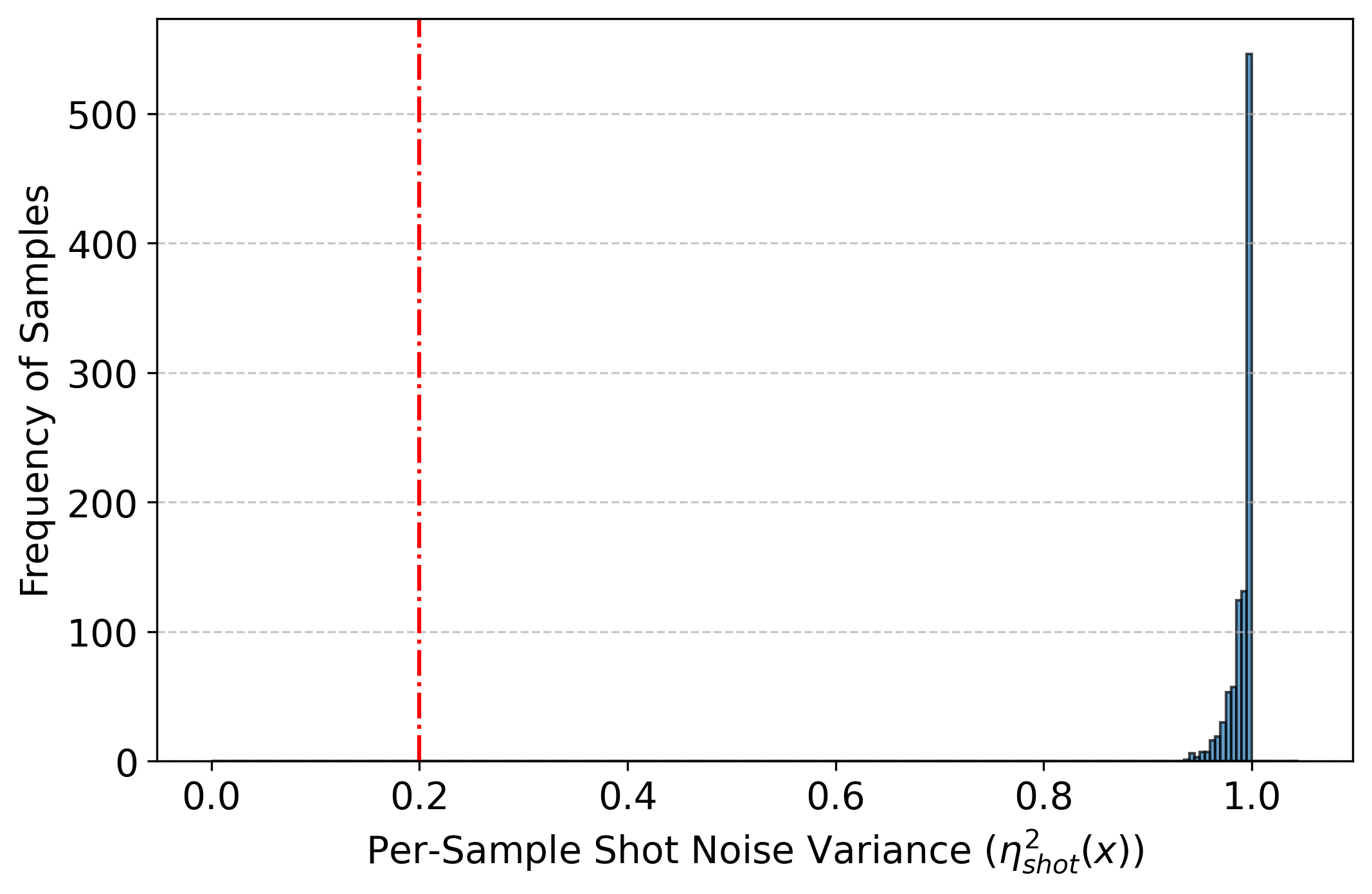}
        %\caption{Result 3}
        \caption{$\alpha = 0.2$}
    \end{subfigure}

    % Second row: legend
    \vspace{0mm} % adjust space if needed
    \begin{subfigure}[b]{0.4\textwidth}
        \centering
        \includegraphics[width=\textwidth]{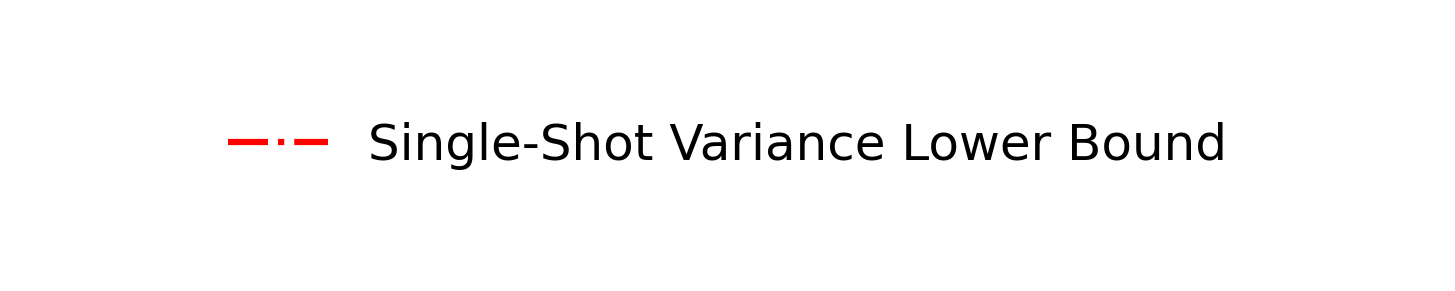}
        %\caption{Legend} % usually we don't caption legend
    \end{subfigure}

    % Main caption for the figure
    \caption{Statistics of Practical Single-Shot Variances in Parameterized Quantum Circuits with a Binary Observable for Depolarizing Noise Levels $\alpha \in [0.0,0.1,0.2]$.}
    \label{fig:adaptive:statistics:binary}
\end{figure}

\begin{figure}[t]
    \centering

    % First row: three result figures
    \begin{subfigure}[b]{0.32\textwidth}
        \centering
        \includegraphics[width=\textwidth]{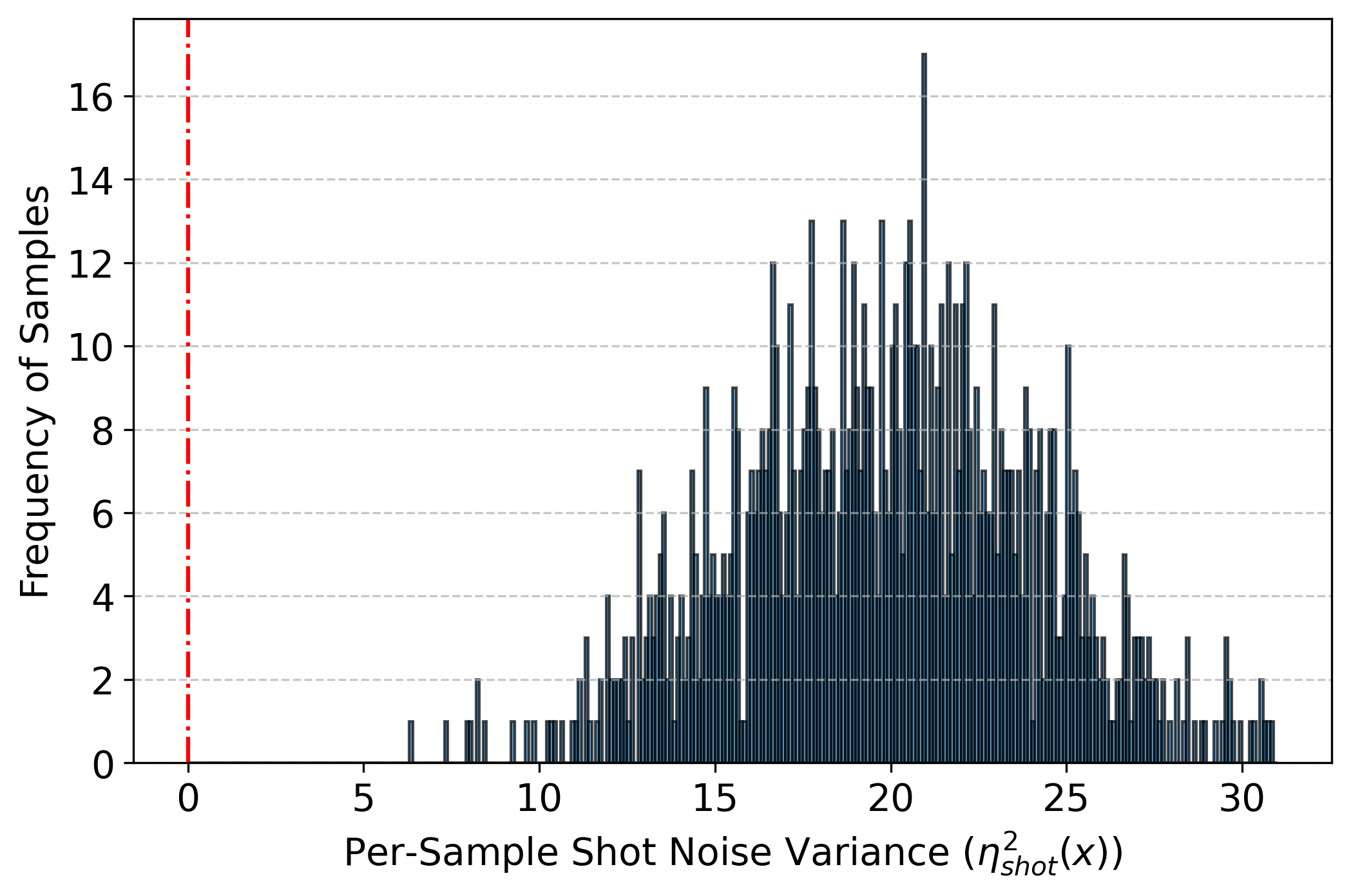}
        %\caption{Result 1} % optional 
        \caption{$\alpha = 0.0$}
    \end{subfigure}
    \hfill
    \begin{subfigure}[b]{0.32\textwidth}
        \centering
        \includegraphics[width=\textwidth]{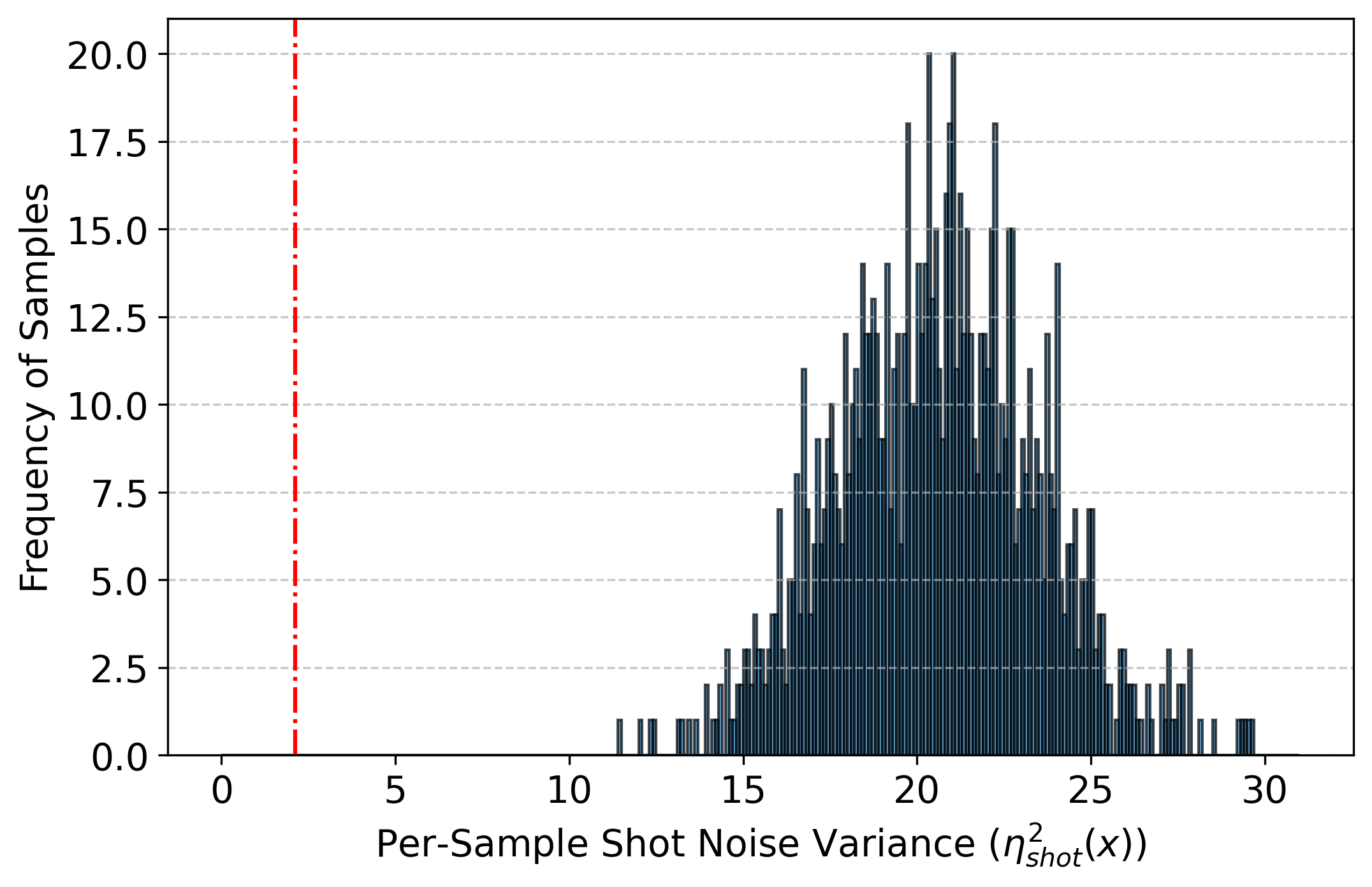}
        \caption{$\alpha = 0.1$}
    \end{subfigure}
    \hfill
    \begin{subfigure}[b]{0.32\textwidth}
        \centering
        \includegraphics[width=\textwidth]{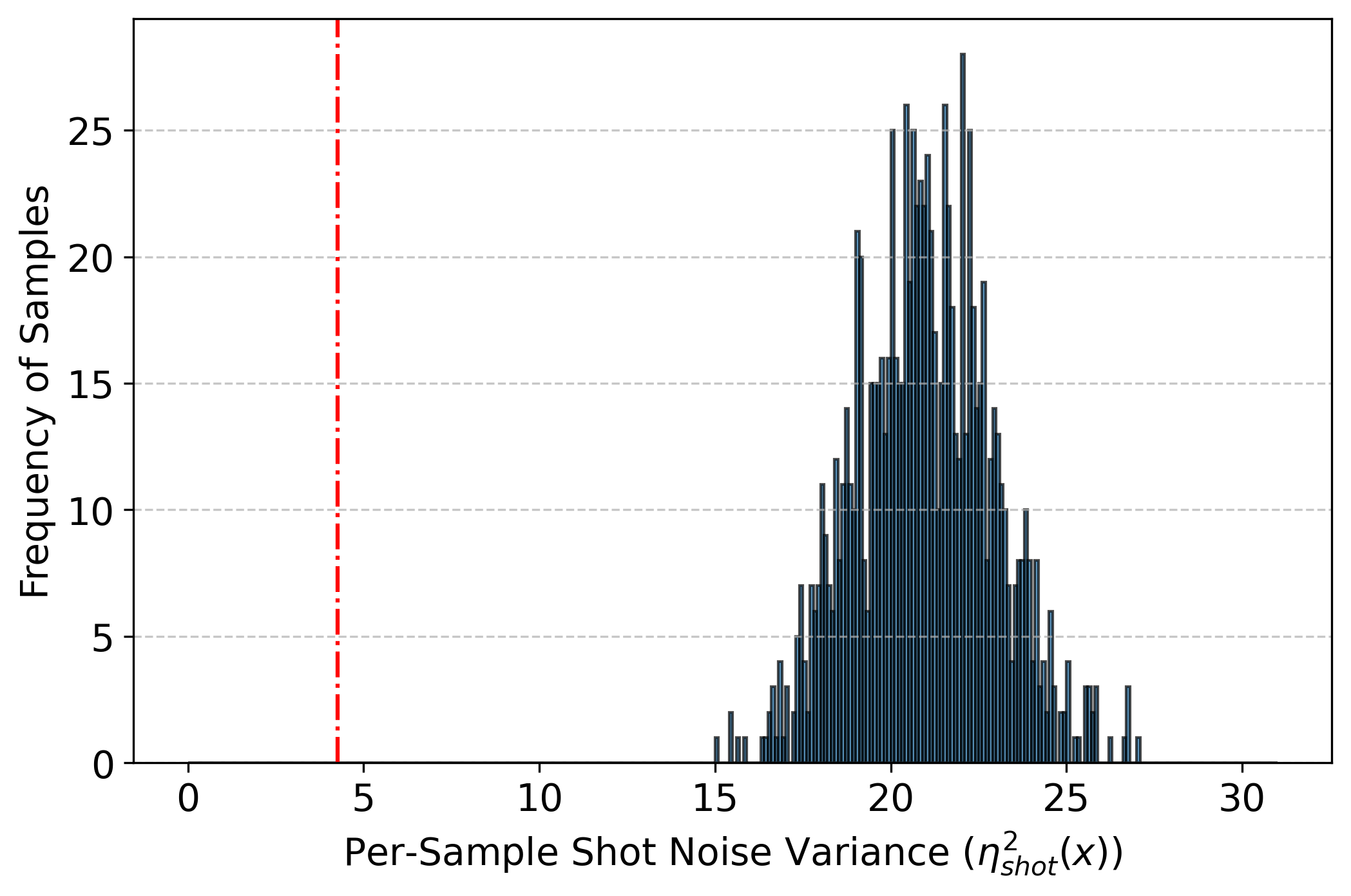}
        %\caption{Result 3}
        \caption{$\alpha = 0.2$}
    \end{subfigure}

    % Second row: legend
    \vspace{0mm} % adjust space if needed
    \begin{subfigure}[b]{0.4\textwidth}
        \centering
        \includegraphics[width=\textwidth]{figs/legend_only.png}
        %\caption{Legend} % usually we don't caption legend
    \end{subfigure}

    % Main caption for the figure
    \caption{Statistics of Practical Single-Shot Variances in Parameterized Quantum Circuits with a Non-Degenerate Observable for Depolarizing Noise Levels $\alpha \in [0.0,0.1,0.2]$.}
    \label{fig:adaptive:statistics:nondegenerative}
\end{figure}

% First row: three result figures
\begin{figure}[t]
    \centering
    \includegraphics[width=.5\textwidth]{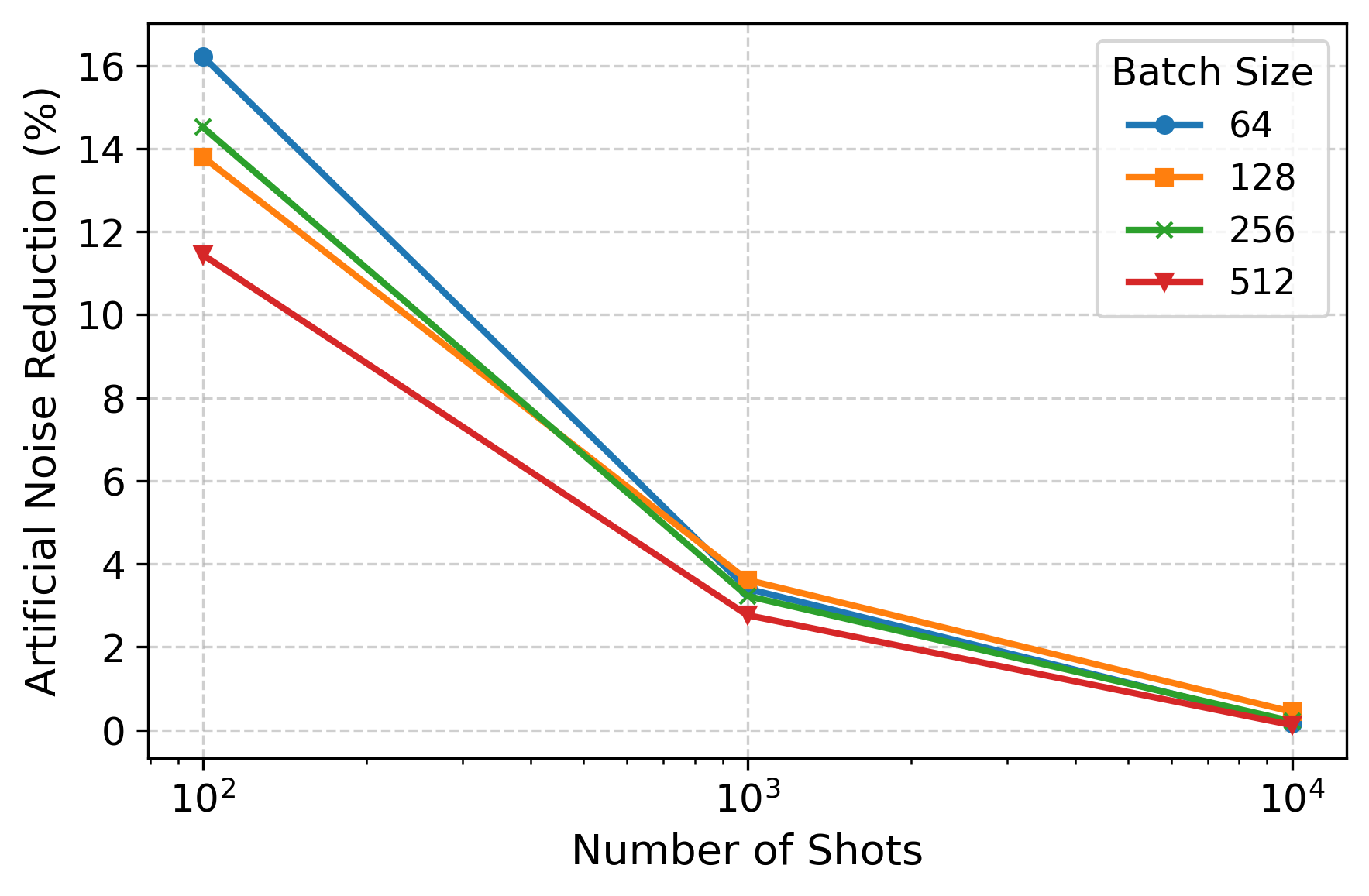}
    \caption{Percentage reduction in required artificial noise achieved by \NAMEB as a function of the number of shots ($N_s$) and batch size ($B$).}
        \label{fig:noise_reduction}

\end{figure}

\begin{figure}[h]
    \centering
    \includegraphics[width=0.7\linewidth]{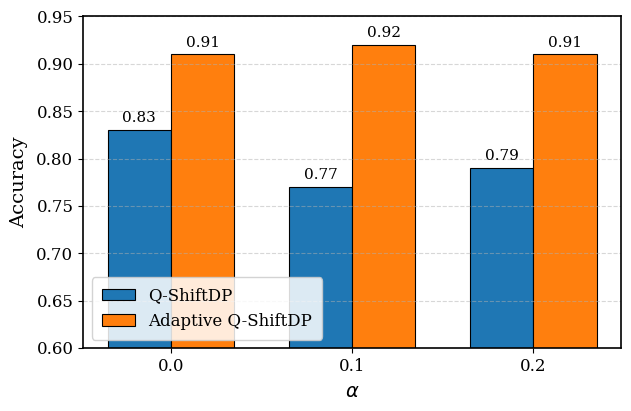}
    \caption{
        Accuracy comparison between Q-ShiftDP and Adaptive Q-ShiftDP 
        under varying $\alpha$ values.
    }
    \label{fig:acc_vs_alpha_bar}
\end{figure}

\section{Estimation of Single-Shot Measurement Variance by Samples (more details)}
\label{sec:add_sampling}

This section provides a detailed walkthrough of our adaptive algorithm, \NAMEB, presented in Algorithm~\ref{alg:adaptive-dp-prs}. \NAMEB improves upon \NAMEA by replacing the fixed, global artificial noise multiplier, $\sigma^2$, with a per-batch multiplier, $\sigma^2_B$, that adapts to the inherent randomness of the specific samples in each mini-batch (All changes from \NAMEA\ are highlighted in \textcolor{red}{red}).
This adaptation is driven by the per-batch variance estimator, $\hat{\bm{\eta}}_B^2$. This estimator, which provides a probabilistic lower bound on the true sum of variances in the batch, is constructed using Lemma~\ref{lemma:pivot_quantitity}. It, in turn, is built from two key statistics computed for each sample $j$ and weight $k$ in the batch: the sample variances, $\bar{\eta}^2_{j,k,\pm}$, and the sample fourth central moments, $\bar{\mu}_{4,j,k,\pm}$.

These statistics are calculated directly from the measurement outcomes. For the positive-shifted circuit of sample $j$ with outcomes $\{\hat{a}_{j,k,1}, \dots, \hat{a}_{j,k,N_s}\}$, we first compute the sample mean, $\bar{a}_{j,k} = \frac{1}{N_s}\sum_i \hat{a}_{j,k,i}$. The estimators are then formulated as follows:
\begin{itemize}
    \item \textbf{Sample Variance ($\bar{\eta}^2_{j,k,+}$):}
    \begin{equation}
    \bar{\eta}^2_{j,k,+} = \frac{1}{N_s - 1} \sum_{i=1}^{N_s} (\hat{a}_{j,k,i} - \bar{a}_{j,k})^2
    \label{eq:sample_variance}
    \end{equation}
    \item \textbf{Sample Fourth Central Moment ($\bar{\mu}_{4,j,k,+}$):}
    \begin{equation}
    \bar{\mu}_{4,j,k,+} = \frac{1}{N_s} \sum_{i=1}^{N_s} (\hat{a}_{j,k,i} - \bar{a}_{j,k})^4
    \label{eq:sample_forth}
    \end{equation}
\end{itemize}
The same formulations are used for the negative-shifted circuit's outcomes, $\{\hat{b}_{j,k,i}\}_{i=1}^{N_s}$, to compute $\bar{\eta}^2_{j,k,-}$ and $\bar{\mu}_{4,j,k,-}$.

In the main manuscript, we state that given a DP budget $(\varepsilon, \delta)$ and assuming identical gate frequencies $\Omega$, \NAMEA can achieve $(\varepsilon, \delta)$-DP guarantee by assigning a batch-specific noise level $\sigma_B$ such that:
\begin{equation}
    \sigma^{2}_B \ge  \max\left(0, C_{\text{DP}} - \frac{\Omega^2\bm{\eta}_{B}^2}{4N_s \Delta^2} \right)
\end{equation}
where $\bm{\eta}_{B}^2$ is the true per-batch variance of the batch $B$. We explain this claim in more detail here. First, we remind that the privacy guarantee of the algorithm stems from the total effective noise variance per iteration (i.e., per batch), which must meet the threshold $C_{\text{DP}}$ (see proof of Theorem~\ref{thm:appx:total_privacy}). This total noise is the sum of two independent sources: the variance from the artificial noise we inject ($\sigma^2_B\Delta/B$) and the variance from the inherent quantum shot noise per batch. For a single gradient component $\mathbf{g}_k^{(j)}$, we have:
$$
\operatorname{Var}[\mathbf{g}_k^{(j)}] = \operatorname{Var}\left[\frac{\Omega}{2}(r_{j,k,+} - r_{j,k,-})\right] = \frac{\Omega^2}{4N_s} \left( \eta^2_{j,k,+} + \eta^2_{j,k,-} \right)
$$
Thus, the shot-noise variance for the entire batch is the sum of the variances of all $B \times K$ components:
\begin{align*}
\operatorname{Var}\left[\sum_{j=1}^B \mathbf{g}^{(j)}\right] &= \sum_{j=1}^B \sum_{k=1}^K \operatorname{Var}[\mathbf{g}_k^{(j)}] \\
&= \frac{\Omega^2}{4N_s} \sum_{j=1}^B \sum_{k=1}^K \left( \eta^2_{j,k,+} + \eta^2_{j,k,-} \right) \\
&= \frac{\Omega^2}{4N_s} \bm{\eta}_{B}^2
\end{align*}

Following the proof of Lemma~\ref{lemma:appx:iterationDP}, we have the total gradient variance per batch as:
$$
\operatorname{Var}[\tilde{\textbf{g}}] = \frac{\Omega^2}{4N_sB^2} \bm{\eta}_{B}^2 + \frac{\sigma^2_B\Delta^2}{B^2}
$$

With the sensitivity $\Delta/B$~\cite{Dwork2006DifferentialP}, we have the total noise multiplier per batch as:
$$
\sigma_{\text{total}}^2 = \sigma_B^2+\frac{\Omega^2\bm{\eta}_{B}^2}{4N_s \Delta^2}
$$
Following the proof of Theorem~\ref{thm:appx:total_privacy},, $(\varepsilon,\delta)$-DP is guaranteed when $\sigma_{\text{total}}^2 \ge C_{DP}$ or equivalently $\sigma^{2}_B \ge  \max\left(0, C_{\text{DP}} - \frac{\Omega^2\bm{\eta}_{B}^2}{4N_s \Delta^2} \right)$.

Another important point is that $\bm{\eta}_{B}^2$ is unknown \emph{a priori}. Therefore, in \NAMEB, we estimate it using $\hat{\bm{\eta}}_{B}^2$. By setting
$$
\sigma_{B}^{2} \leftarrow \max\left(0,\, C_{\text{DP}} - \frac{\Omega^{2}\hat{\bm{\eta}}_{B}^{2}}{4N_{s}\Delta^{2}}\right),
$$
\NAMEB ensures $(\varepsilon,\,(1-\beta)\delta + \beta)$-differential privacy with $\beta$ is the significance level, as stated in Theorem~\ref{thm:appx:advanced_adaptive_privacy}.

\begin{algorithm*}[h]
\DontPrintSemicolon
\caption{Adaptive Differentially Private Parameter-Shift Rule (\NAMEB)}
\label{alg:adaptive-dp-prs}
\KwIn{
    Data encoding $x \mapsto \ket{\psi_0(x)}$,
    Parametric unitary $\hat{U}(\boldsymbol{\theta}) = \prod_{k=1}^{K} e^{i\theta_k G_k}$ with identical frequency $\Omega$,
    Set of observables $\{\hat{O}_y\}$ with eigenvalues in $[\lambda_{\min}, \lambda_{\max}]$,
    Private dataset $S$,
    Learning rate $\textbf{\text{lr}}$, NumSteps $T$, Batch size $B$,
    \textcolor{red}{Ideal DP budget $(\varepsilon, \delta)$, significance level $\beta$.}
}
\KwOut{Trained parameters $\boldsymbol{\theta}_{T}$.}
\BlankLine
\tcp{Define sensitivity based on the theoretical maximum L2 norm of the quantum gradient}
$\Delta \leftarrow \frac{\lambda_{\max} - \lambda_{\min}}{2} \sqrt{\sum_{k=1}^{K} \Omega_k^2}$\;

Initialize parameters $\boldsymbol{\theta}_0$ randomly\;
\For{$t \leftarrow 0$ \KwTo $T-1$}{
    Sample a mini-batch $B \subset S$\;
    Initialize an empty list for gradient estimates: $\mathcal{G} \leftarrow []$\;
    \ForEach{sample $(x_j, y_j) \in B$}{
        %\tcp{Compute the quantum gradient vector}
        
        Initialize vector: $\mathbf{g}^{(j)} \leftarrow \mathbf{0} \in \mathbb{R}^K$\;

        %Prepare the observable $\hat{O} = |y_j\rangle \langle y_j|$
        
        \For{$k \leftarrow 1$ \KwTo $K$}{
            %\tcp{Compute k-th component of the gradient, $\partial C / \partial \theta_k$}
            
            Define shift amount: $s \leftarrow \pi / (2\Omega_k)$\;
            
            %\tcp{Define circuit components}
            $|\phi\rangle \leftarrow (\prod_{l=1}^{k-1} \hat{U}_l(\theta_l)) |\psi_0(x_j)\rangle$\ \tcp*{Define intermediate state}
            $\hat{A} \leftarrow (\prod_{l=k+1}^{K} \hat{U}_l(\theta_l))^\dagger \hat{O}_{y_j} (\prod_{l=k+1}^{K} \hat{U}_l(\theta_l))$\ \tcp*{Define intermediate operator}

            %\tcp{Evaluate shifted expectation values}
            %\tcp{Evaluate shifted expectation values by sampling}
            
            Run positive-shifted circuit $\hat{A} \hat{U}_k(\theta_k+s)|\phi\rangle$ $N_s$ times, collect outcomes $\{\hat{a}_{j,k,i}\}_{i=1}^{N_s}$\;
            $r_+ \leftarrow \frac{1}{N_s}\sum_{i=1}^{N_s} \hat{a}_{j,k,i}$\ \tcp*{Estimator for positive shift}
            
            Run negative-shifted circuit $\hat{A} \hat{U}_k(\theta_k-s)|\phi\rangle$ $N_s$ times, collect outcomes $\{\hat{b}_{j,k,i}\}_{i=1}^{N_s}$\;
            $r_- \leftarrow \frac{1}{N_s}\sum_{i=1}^{N_s} \hat{b}_{j,k,i}$\ \tcp*{Estimator for negative shift}

            \textcolor{red}{Estimate sample variances $\bar{\eta}^2_{j,k,\pm}$ and sample fourth central moments $\bar{\mu}_{4,j,k,\pm}$. from outcomes using Eq~\ref{eq:sample_variance} and \ref{eq:sample_forth}.}
            %$r_+ \leftarrow \langle\phi|\hat{U}_k(\theta_k+s)^\dagger \hat{A} \hat{U}_k(\theta_k+s)|\phi\rangle$\ \tcp*{Estimate with $N_s$ shots}
            
            %$r_- \leftarrow \langle\phi|\hat{U}_k(\theta_k-s)^\dagger \hat{A} \hat{U}_k(\theta_k-s)|\phi\rangle$\ \tcp*{Estimate with $N_s$ shots}
            
            $\mathbf{g}^{(j)}_k \leftarrow \frac{\Omega_k}{2} (r_+ - r_-)$ \tcp*{Parameter-shift rule estimator}

            %$\mathbf{g}^{(j)}_k \leftarrow \bar{\mathbf{g}}^{(j)}_k / \max\left(1, \frac{|\bar{\mathbf{g}}^{(j)}_k|}{C_{clip}}\right)$\ \tcp*{Clip the estimator}
            
        }
        %\tcp{Clip the L2 norm of the gradient estimate}
        %$\bar{\mathbf{g}}_j \leftarrow \mathbf{g}_j / \max\left(1, \frac{\|\mathbf{g}_j\|_2}{C_{clip}}\right)$\;
        Append $\mathbf{g}^{(j)}$ to $\mathcal{G}$\;
    }
    %\tcp{Aggregate, add noise, and update}
    $\hat{\mathbf{g}}_{B} \leftarrow \sum_{\mathbf{g}^{(j)} \in \mathcal{G}} \mathbf{g}^{(j)}$\;
    \tcp{--- Adaptive Step (Per-Batch) ---}
    \textcolor{red}{$\bar{\bm{\eta}}_B^2 \leftarrow \sum_{j=1}^B \sum_{k=1}^K \sum_{\tau\in \{+,-\}} \bar{\eta}^2_{i,\tau}$}

    \textcolor{red}{$\hat{\bm{\eta}}_B^2 \leftarrow \bar{\bm{\eta}}_B^2 - z_\beta\sqrt{\sum_{j=1}^B \sum_{k=1}^K \sum_{\tau\in \pm} \frac{1}{N_s}\left(\bar{\mu}_{4,j,k,\tau} - (\bar{\eta}_{j,k,\tau}^2)^2\right)}$}\;

    \textcolor{red}{$C_{\text{DP}} \leftarrow \left( c_2 \frac{q \sqrt{T \log(1/\delta)}}{\varepsilon} \right)^2$}\;
    
    \textcolor{red}{$\sigma^2_B \leftarrow \max\left(0, C_{\text{DP}} - \frac{\Omega^2\hat{\bm{\eta}}_{B}^2}{4N_s \Delta^2} \right)$}
    
    \textcolor{red}{Sample noise: $\mathbf{z} \sim \mathcal{N}(0, \sigma^2_B \Delta^2 \mathbf{I}_K)$}\;
    $\tilde{\mathbf{g}} \leftarrow \frac{1}{B} \left( \hat{\mathbf{g}}_{B} + \mathbf{z} \right)$\;
    $\boldsymbol{\theta}_{t+1} \leftarrow \boldsymbol{\theta}_t - \textbf{\text{lr}} \cdot \tilde{\mathbf{g}}$\;
}
\Return $\boldsymbol{\theta}_T$
    \end{algorithm*}

\end{document}